\documentclass[letterpaper]{article} 
\usepackage{aaai2026}  
\usepackage{times}  
\usepackage{helvet}  
\usepackage{courier}  
\usepackage[hyphens]{url}  
\usepackage{graphicx} 
\usepackage{natbib}  
\usepackage{caption} 
\usepackage{algorithm}

\usepackage{newfloat}
\usepackage{listings}
\DeclareCaptionStyle{ruled}{labelfont=normalfont,labelsep=colon,strut=off} 
\floatstyle{ruled}
\newfloat{listing}{tb}{lst}{}
\floatname{listing}{Listing}
\usepackage{amsmath,amsthm,bm}
\usepackage{amssymb}
\usepackage{dsfont}
\usepackage{mathtools}
\usepackage{multirow}
\usepackage{algorithm}%
\usepackage{algpseudocode}
\usepackage[table,xcdraw]{xcolor}
\usepackage{booktabs}
\usepackage{makecell}

\definecolor{darkyellow}{RGB}{249, 199, 12} 
\title{Beyond Hard Constraints: Budget-Conditioned Reachability \\ For 
Safe Offline Reinforcement Learning}
\author{
      Janaka Brahmanage and Akshat Kumar 
}
\affiliations{
School of Computing and Information Systems, Singapore Management University.\\
janakat.2022@phdcs.smu.edu.sg, akshatkumar@smu.edu.sg
}

\usepackage{bibentry}

\theoremstyle{plain}
\newtheorem{theorem}{Theorem}[section]

\newtheorem{lemma}[theorem]{Lemma}

\newtheorem{definition}[theorem]{Definition}

\theoremstyle{remark}

\newcommand{\blue}[1]{\textcolor{blue}{#1}}
\newcommand{\modified}[1]{#1}
\newcommand{\green}[1]{\textcolor{green}{#1}}
\newcommand{\gray}[1]{\textcolor{gray}{#1}}
\newcommand{\mc}{\mathcal} 
\newcommand{\cta}{\ct_{\text{init}}} 
\newcommand{\ct}{\delta}
\newcommand{\ctmax}{\delta_{\text{max}}}
\newcommand{\qc}{Q^{C}}
\newcommand{\qr}{Q^{R}}
\newcommand{\vc}{V^{C}}
\newcommand{\vr}{V^{R}}

\newcommand{\loss}{\mathcal{L}}

\newcommand{\dataset}{\mathcal{D}}

\newcommand{\tauc}{\tau_{C}}
\newcommand{\taur}{\tau_{R}}

\newcommand{\cmax}{c_{\text{max}}}

\DeclareMathOperator*{\E}{\mathop{\mathbb{E}}}

\newcommand{\our}{BCRL}
\newcommand{\ourmdp}{BAMDP}

\newcommand{\T}{T}
\newcommand{\vcs}{V_C^*}
\newcommand{\qcs}{Q_C^*}
\newcommand{\Sp}{S_P}
\newcommand{\Ap}{A_P}
\newcommand{\PiP}{\Pi_P}
\newcommand{\SbP}{\bar{S}_P}
\newcommand{\MD}{\bar{\mc{M}}_{\text{D}}}
\newcommand{\MS}{\bar{\mc{M}}_{\text{S}}}

\definecolor{darkyellow}{RGB}{249, 199, 12}

\def\1{\bm{1}}
\def\sR{{\mathbb{R}}}

\begin{document}

\maketitle

\begin{abstract}
Sequential decision-making using Markov Decision Process underpins many real-world applications. Both model-based and model-free methods have achieved strong results in these settings. However, real-world tasks must balance reward maximization with safety constraints, often conflicting objectives, that can lead to unstable min–max, adversarial optimization. A promising alternative is \textit{safety reachability} analysis, which precomputes a forward-invariant safe state–action set, ensuring that an agent starting inside this set remains safe indefinitely. Yet, most reachability-based methods address only hard safety constraints, and little work extends reachability to cumulative cost constraints. To address this, \textit{first}, we define a safety-conditioned reachability set that \textit{decouples} reward maximization from  cumulative safety cost constraints. \textit{Second}, we show how this set enforces safety constraints without unstable min–max or Lagrangian optimization, yielding a novel offline safe RL algorithm that learns a safe policy from a fixed dataset without environment interaction. \textit{Finally}, experiments on standard offline safe-RL benchmarks, and a real-world maritime navigation task demonstrate that our method matches or outperforms state-of-the-art baselines while maintaining safety.
\end{abstract}


\section{Introduction}

Markov Decision Processes (MDPs) have demonstrated impressive success modeling across a range of domains, including robotics~\cite{tang_deep_2024}, algorithm discovery~\cite{fawzi_discovering_2022}, classical board games~\cite{silver_mastering_2016}, and Atari games~\cite{mnih_playing_2013}.
Despite these achievements, deploying agents in real-world settings poses significant challenges. In practice, most of the applications utilize a Reinforcement Learning (RL) framework to train agents, and these agents must not only maximize cumulative reward but also operate safely~\cite{altman_constrained_2021}, requiring policies that strike a balance between performance and safety.
Additionally, constructing fast and accurate simulators for complex environments is often computationally prohibitive. Offline safe reinforcement learning (\textit{offline safe RL}) offers a practical solution by enabling agents to learn from pre-collected datasets without further environment interaction~\cite{liu_datasets_2024}. This work aims to develop methods that address the dual goals of reward optimization and safety in offline RL, an important problem in RL research~\cite{gu_review_2024}.

\paragraph{Approach overview}

\modified{
Our goal is to solve the standard CMDP problem~\cite{altman_constrained_2021}, where the agent must maximize reward subject to a constraint on the expected cumulative cost. We propose an approach that augments the state with a dynamic budget. This dynamic budget is a quantity that controls the conservativeness of the policy; a smaller budget implies a more conservative policy. For example, we can start with the total cost threshold of the CMDP and track the exact remaining budget by subtracting immediate costs incurred as the policy executes. In a deterministic setting, we can enforce the safety constraint by tracking such a budget.
However, this does not work in a stochastic environment. Instead, we show that tracking a quantity other than the exact remaining budget can enforce safety in stochastic settings. We then estimate a budget-conditioned \textit{persistent} safety set: the set of state-action pairs from which there exists a policy that keeps future costs within the remaining budget. By restricting the agent's actions to this safety set, we decouple safety enforcement from reward optimization, avoiding the instability of Lagrangian or min-max methods.
}

\paragraph{Related work} 
Constrained Markov decision processes (CMDPs) \cite{altman_constrained_2021} are a standard framework for safe RL. Prior offline safe RL methods often struggle with optimization instability or high computational overhead. Lagrangian methods like BCQ-Lagrangian~\cite{liu_datasets_2024} suffer from tuning difficulties and unstable learning. Methods like COptiDICE~\cite{lee_coptidice_2022} perform distribution correction but struggle empirically on recent benchmarks~\cite{liu_datasets_2024}. Recent approaches learn generative models (VAEs) as a pre-training step~\cite{koirala_latent_2025,guo_constraint-conditioned_2025}, which incurs significant computational overhead. 
While Hamilton-Jacobi reachability~\cite{yu_reachability_2022} enforces safety independently of the policy for hard constraints, action-constrained RL ensures per-step feasibility through optimization solvers~\cite{lin_escaping_2021} or generative models~\cite{brahmanage_flowpg_2023,brahmanage_leveraging_2024}.
Another related approach, Saute RL~\cite{sootla_saute_2022}, enforces per-step constraints but requires online rollouts to track budgets. In contrast, our approach prunes unsafe actions offline using a dynamic budget without requiring a generative model or online samples. We provide an extended discussion of related work in Appendix~\ref{app:related_work}.

\paragraph{Adaptive Safty Budgets for RL} 
\modified{
Standard safe RL methods enforce static, episode-level constraints (i.e., \textit{expected} cost over all the episodes is less than a budget). In contrast, we introduce a step-wise budget that dynamically adjusts during policy execution. A key benefit is that using this step-wise budget, we can prune unsafe actions at each time step and guide value estimation (for both reward and cost), enabling policy that adapts over time. 
}

\noindent\textbf{Our contributions are:}
\begin{itemize}
\item We propose \textbf{Budget-Conditioned Reachability}, a framework that applies reachability analysis to CMDPs with cumulative cost constraints. It uses dynamic budgets to estimate persistently safe state--action sets, enabling reward-maximizing policy learning within this safe region. We also provide rigorous theoretical justification for the framework.

\item We introduce two variants of our method to address both deterministic and stochastic CMDP settings, ensuring broad applicability across different problem structures.

\item Our method integrates seamlessly with existing offline RL algorithms such as IQL~\cite{wang_offline_2023}, XQL~\cite{garg_extreme_2023}, and SparseQL~\cite{xu_offline_2023}, yielding a safe offline RL approach, \textbf{BCRL} (Budget-Conditioned Reachability RL). The resulting algorithms are easy to implement, require no min--max adversarial training, never query out-of-distribution actions, and generalize to any budget.

\item We evaluate \our\ on interpretable grid-world environments, DSRL benchmarks~\cite{liu_datasets_2024}, and a real-world maritime navigation task, where agents are trained using historical ship trajectory data to navigate safely in high-traffic conditions. 
Extensive ablations show consistent improvements over state-of-the-art baselines. Our code is available at\footnote{\url{https://janakact.github.io/bcrl}}.
\end{itemize}

\section{Preliminaries}\label{sec:preliminaries}

\subsection{Constrained Markov Decision Process}\label{sec:cmdp}
A constrained Markov Decision Process (CMDP) is defined as a tuple \( \mc{M} := \langle S, A, \T, r, c, \gamma, \mu_0, \cta \rangle \). Here, \( S \) and \( A \) represent the state and action spaces, respectively. The transition dynamics are given by \( \T(s' \!\mid\! s, a) \), specifying the probability of transitioning from state \( s \) to \( s' \) under action \( a \). The reward function is {\( r(s, a): S\times A \rightarrow \mathbb{R}  \)}, and the cost function is {\( c(s, a) : S\times A \rightarrow \mathbb{R}^+ \)}. The discount factor is \( \gamma \in [0, 1) \), and the initial state distribution is \( \mu_0: S \rightarrow [0, 1] \). The $\cta$ is the cost threshold over the total expected cost agent can incur under any policy. 
\textbf{The objective} is to determine a policy \( \pi : S \to \Delta(A) \) that maximizes the expected discounted reward  
\(
J_R(\pi) 
\), subject to the cost constraint $J_C(\pi)\leq \cta$ where, 
{
\begin{equation}
\label{eq:cmdp_cost}
J_R(\pi)\! =\! \E_{\pi} 
\sum_{t=0}^\infty \gamma^t r(s_t, a_t) 
 ;\;\; 
 J_C(\pi) \!=\! \E_{\pi} 
\sum_{t=0}^\infty \gamma^t c(s_t, a_t) 
\end{equation}
}
and \( \cta \in [0, \ctmax] \) is the cost (or safety budget) threshold. Here $\ctmax = \frac{\cmax}{1-\gamma}$, where $\cmax$ is the highest cost the environment can incur at a single step.

\begin{figure}
    \centering
    \includegraphics[width=0.8\linewidth]{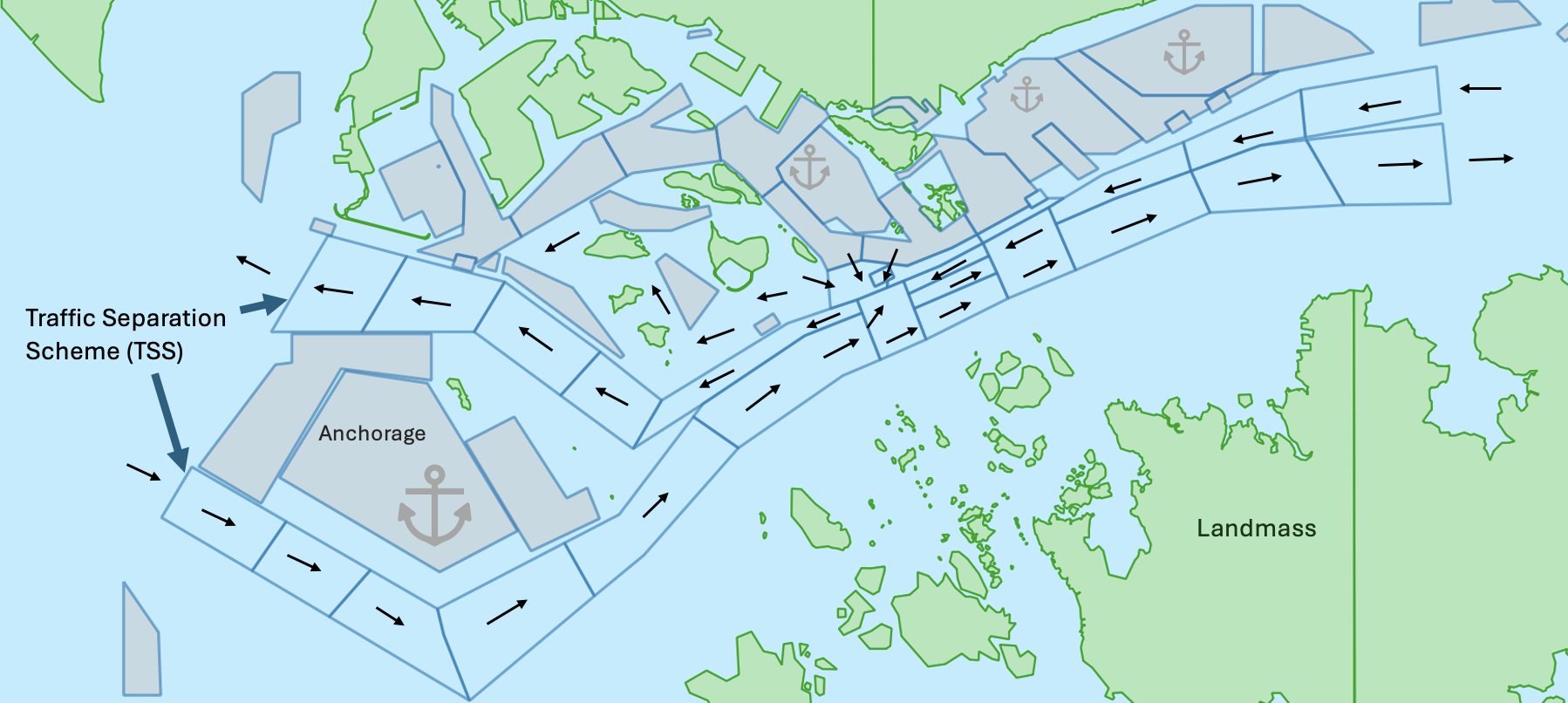}
    \caption{\small Electronic navigation chart for maritime traffic navigation in Singapore strait}
    \label{fig:enc}
\end{figure}

\paragraph{Learning from offline data} For CMDPs, getting faithful domain model (e.g., transition, reward, cost functions) for realistic problems is challenging. Thus, we focus on the offline RL setting where we learn from a pre-collected transition dataset \( \dataset = \{(s_i, a_i, r(s_i,a_i), c(s_i,a_i), s'_i)\}_{i=1}^N \) generated by one or more behavioral policies $\pi_\beta$. Offline safe RL aims to learn a policy \( \pi \) that optimizes CMDP objective subject to cost constraints using this dataset alone, without further environment interaction. As a real world example, consider the safe maritime navigation problem in figure~\ref{fig:enc}. The Singapore strait is a high traffic, constrained waterway, with vessels navigating narrow lanes, TSS corridors, anchorage zones, and areas near landmasses. Ships must make sequential decisions--adjusting speed or heading--while avoiding restricted zones, minimizing collision risk, and following navigation rules.
Because unsafe exploration is impossible in real waters, learning through trial-and-error is not feasible. Yet, the sea traffic generates extensive AIS (automatic identification system) data on vessel positions, speeds, headings, and interactions. This makes safe offline RL a natural fit for such problems where safe policies are learned solely from historical data without real-world experimentation.

\subsection{In-Sample Q-learning Algorithms}
\label{sec:insample_rl}
\modified{For unconstrained reinforcement learning (i.e., standard RL without any cost constraints), several algorithms use asymmetric loss functions to estimate the value function instead of the exact maximum, learning policies from offline data without querying Q-values of unseen actions. Methods such as IQL~\cite{kostrikov_offline_2021}, XQL~\cite{garg_extreme_2023}, and SparseQL~\cite{xu_offline_2023} employ asymmetric regression to fit the Bellman optimal value, replacing the max operator while still following temporal-difference learning. The critic \(Q_\theta(s,a)\), target critic \(Q_{\theta_T}(s,a)\), and value function \(V_\psi(s)\), parameterized by \(\theta\), \(\theta_T\), and \(\psi\), are trained by minimizing the following losses:
}
{
\begin{equation}
\mc{L}^V(\psi) = \E_{(s,a)\sim \dataset}
[\mathcal{L}^*(Q_{\theta_T}(s,a) - V_{\psi}(s))]
\label{eq:iql_v}
\end{equation}
\begin{equation}
\label{eq:iql_q}
\mc{L}^Q(\theta) = 
\E_{(s,a,s')\sim \dataset}
[(r(s,a) + \gamma V_\psi(s') - Q_{\theta}(s,a))^2]
\end{equation}
}

\modified{Here, $L^*$ is an asymmetric loss function that penalizes positive and negative errors differently, weighting overestimation more heavily to approximate the maximum Q-value. In IQL~\cite{kostrikov_offline_2021}, $L^*(u)=L^\tau(u)=|\tau - \1_{\{u < 0\}}|u^2$ is the expectile loss~\cite{newey_asymmetric_1987}, with $\tau > 0.5$ pushing $V_\psi(s)$ toward the maximum. Similarly, XQL~\cite{garg_extreme_2023} uses Gumbel regression, and SparseQL~\cite{xu_offline_2023} introduces a robust asymmetric loss.}
Target parameters are updated via $\theta_T \leftarrow (1-\alpha)\theta_T + \alpha \theta$. 
These methods are computationally efficient, avoid querying OOD actions, and have strong theoretical guarantees~\cite{kostrikov_offline_2021, xu_offline_2023}. For policy extraction, they use in-sample methods similar to Advantage Weighted Regression (AWR)~\cite{peng_advantage-weighted_2019}. In this work, we adopt primarily in-sample methods to avoid OOD actions.

\subsection{Persistent Safety with Value Functions}\label{sec:pre-persistent}
In safe RL, we sometimes have access to a safety indicator for a given state, which can be viewed as a state-wise safety constraint. However, it is not sufficient to ensure only \emph{instantaneous safety}. When an agent takes an action, we also need to guarantee that in the future the agent will always have at least one safe action available. In other words, the agent should not be driven into an unrecoverable situation or a dead end.
Formally, a state is unrecoverable if there exists no sequence of future actions that can prevent an eventual safety violation.
A safety signal that indicates the agent is not in an unrecoverable state is referred to as a \emph{persistent safety signal}~\cite{yu_reachability_2022}.
Hamilton–Jacobi (HJ) reachability-based RL methods~\cite{zheng_safe_2024,yu_reachability_2022,ganai_iterative_2023} address this by estimating a \emph{forward-invariant safety set}. This set consists of states from which the agent can remain safe indefinitely, provided it follows an appropriate policy. 
For example, in FISOR~\cite{zheng_safe_2024}, the immediate safety indicator $h(s)$ defines the safe state set
\(
S_f := \{ s \in S \mid h(s) \leq 0 \}.
\)
A learned value function is then used to estimate
\begin{align}
V^*_h(s) &= \min_{\pi} V_h^{\pi}(s) 
= \min_{\pi} \max_{t \in \mathbb{N}} h(s_t) 
\end{align}
If $V^*_h(s) \leq 0$, this implies that there exists a policy $\pi$ such that the future cost satisfies $V_h^{\pi}(s) \leq 0$. Therefore, $V_h^*$ acts as a persistent safety indicator. The set
\(
S_p := \{ s \in S \mid V_h^*(s) \leq 0 \}
\)
is then used to define the persistent safety set.

\section{Budget-Conditioned Reachability}

Prior reachability and persistent safety estimation methods are mostly limited to hard constraints (i.e., whether safety indicator $h(s)\leq 0$). They do not extend trivially to cumulative cost based safety constraints in CMDPs defined in Section~\ref{sec:cmdp}. Thus, we aim to estimate the persistent safety set \textit{independently} of the policy being learned, using it to enforce cumulative cost constraints. Decoupling safety estimation from policy optimization ensures the learned policy remains safe while making learning more stable and tractable, without using Lagrangian or min-max adversarial optimization. Our approach builds on three key ideas: \textbf{(1)} augmenting the CMDP to explicitly track the remaining budget, \textbf{(2)} estimating a budget-conditioned persistent safety set for \textit{any} remaining budget, and \textbf{(3)} solving the augmented CMDP.

\noindent \modified{ \textbf{Methodological Intuition:} Rather than enforcing a static constraint over an entire episode, we augment the state with a remaining budget that dynamically depletes as costs are incurred. By pre-computing whether this budget is sufficient for safe completion (the persistent safety set), we can actively prune unsafe actions at each step before the agent acts, entirely decoupling safety enforcement from reward maximization.
}

\subsection{A Budget-Conditioned Persistent Safety Set}

Similar to the definition of persistent safety discussed in Section~\ref{sec:pre-persistent}, we now define the budget-conditioned safety set that indicate feasibility based on the discounted future cost. 
\begin{definition}[The optimal cost-value function] The optimal state-cost-value function $\vcs$ and the optimal action-cost-value function $\qcs$ are defined as:
\begin{align}
\vcs(s) &= \min_{\pi} V_C^{\pi}(s) 
= \min_{\pi} \E_{
\substack{
\pi \\
s_0 = s}
} \sum_{t \in \mathbb{N}} \gamma^t c(s_t,a_t),  \label{eq:def_vcs}\\
\qcs(s,a) &= \min_{\pi} Q_C^{\pi}(s,a) 
= \min_{\pi} 
 \E_{
\substack{
 \pi \\
s_0 = s\\ a_0=a}
}
\sum_{t \in \mathbb{N}} \gamma^t c(s_t,a_t) \label{eq:def_qcs}
\end{align}
\end{definition}

Following a similar reasoning as in Section~\ref{sec:pre-persistent}, we claim that, given a budget $\ct$, if $\vcs(s) \leq \ct$, then there exists a policy $\pi$ such that the future discounted cost satisfies $V_C^{\pi}(s) \leq \ct$. Therefore, $\vcs(s) \leq \ct$ serves as a persistent safety indicator for the given budget, guaranteeing that there exists a policy under which the future cost remains within the budget. This allows us to define the {largest set} of feasible states, as well as the feasible actions for each state, that remain within the given budget.

\begin{definition}[Budget-Conditioned Persistent Safety Sets]
Given a budget $\ct$, the \emph{budget-conditioned persistent safety set} is defined as:
\begin{align}\label{eq:def_sp}
\Sp(\ct) := \{ s \in S \mid \vcs(s) \leq \ct \}.
\end{align}
Similarly, the \emph{budget-conditioned persistent safe action set} for a given state $s$ is defined as:
\begin{align}\label{eq:def_ap}
\Ap(s,\ct) := \{ a \in A \mid \qcs(s,a) \leq \ct \}.
\end{align}
\end{definition}

Intuitively, the budget-conditioned persistent safety set consists of states where a policy exists to keep the total discounted future cost within the budget $\ct$. 

\begin{lemma}[Safe Actions Always Exist for Persistent Safety States]\label{lemma:safe-actions-exist}
Given $\ct \in \sR^+$; for any state $s \in \Sp(\ct)$, the budget-conditioned persistent safe action set $\Ap(s,\ct)$ is non-empty:
\(
\Ap(s,\ct) \neq \emptyset.
\)
\end{lemma}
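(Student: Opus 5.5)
The plan is to prove the lemma directly from the Bellman relationship between $\vcs$ and $\qcs$. The key identity is
\begin{align*}
\vcs(s) = \min_{a \in A} \qcs(s,a),
\end{align*}
or, when the minimum is not attained, the corresponding infimum. With this identity in hand the argument is short. Take $s \in \Sp(\ct)$, so by \eqref{eq:def_sp} we have $\vcs(s) \leq \ct$. Pick $a^\star \in \arg\min_a \qcs(s,a)$. Then $\qcs(s,a^\star) = \vcs(s) \leq \ct$, so $a^\star \in \Ap(s,\ct)$ by \eqref{eq:def_ap}, and $\Ap(s,\ct) \neq \emptyset$.

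\textbf{Step 1: show $\vcs(s) \geq \inf_a \qcs(s,a)$.} For any policy $\pi$, condition on the first action:
\begin{align*}
V_C^\pi(s) = \E_{a\sim\pi(\cdot\mid s)} Q_C^\pi(s,a).
\end{align*}
Since $Q_C^\pi(s,a) \geq \qcs(s,a) \geq \inf_{a'} \qcs(s,a')$ for every $a$, the expectation is at least $\inf_{a'} \qcs(s,a')$. Taking the minimum over $\pi$ gives the inequality.

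\textbf{Step 2: show $\vcs(s) \leq \qcs(s,a)$ for every $a$.} For a policy $\pi$ achieving (or nearly achieving) $\qcs(s,a)$, build a policy that plays $a$ deterministically at $s$ on the first step and then follows $\pi$. Its value equals $Q_C^\pi(s,a)$. This needs the policy class to allow history-dependent or non-stationary policies. Alternatively, I will invoke the standard discounted-MDP fact that optimal costs are attained by stationary Markov policies. Steps 1 and 2 together give $\vcs(s) = \inf_a \qcs(s,a)$.

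\textbf{Step 3: attainment of the minimum.} The definitions are written with $\min$, and costs are bounded in $[0,\cmax]$ with $\gamma<1$, so the values are finite. For finite $A$, or compact $A$ with continuity of $\qcs$ in $a$, the infimum over actions is attained and the argument above goes through.

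\textbf{Main obstacle.} The hardest step is making this attainment rigorous in general action spaces. I would state a mild assumption, such as finite or compact $A$ with lower semicontinuity of $\qcs(s,\cdot)$, consistent with the paper's use of $\min$ in \eqref{eq:def_vcs}.

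A cleaner route also handles the boundary case $\vcs(s)=\ct$. Let $\pi^\star$ be an optimal policy attaining $\vcs$, which the $\min$ notation presupposes. Then
\begin{align*}
\ct \geq \vcs(s) = \E_{a\sim\pi^\star(\cdot\mid s)} Q_C^{\pi^\star}(s,a) \geq \E_{a\sim\pi^\star(\cdot\mid s)} \qcs(s,a).
\end{align*}
An expectation that is at most $\ct$ forces the support of $\pi^\star(\cdot\mid s)$ to contain at least one $a$ with $\qcs(s,a)\leq\ct$. This holds in the measure-theoretic sense: the set of such actions has positive $\pi^\star$-probability. So this route gives non-emptiness of $\Ap(s,\ct)$ without needing the minimum over actions to be attained separately. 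I would present it as the main argument.
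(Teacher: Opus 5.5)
Your first argument is the paper's proof. The paper asserts $\vcs(s)=\min_{a\in A}\qcs(s,a)$ directly from the definitions, takes a minimizing $a^*$, and concludes $\qcs(s,a^*)=\vcs(s)\le\ct$. Your Steps 1--3 only supply what the paper leaves implicit: the two inequalities behind that Bellman identity, and attainment of the minimum over $A$.

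The argument you intend to present as the main one takes a genuinely different route, and it is correct. Instead of an optimal \emph{action}, it uses a policy $\pi^\star$ attaining $\vcs(s)$ and the averaging bound $\ct\ge\E_{a\sim\pi^\star(\cdot\mid s)}Q_C^{\pi^\star}(s,a)\ge\E_{a\sim\pi^\star(\cdot\mid s)}\qcs(s,a)$. This forces positive $\pi^\star$-mass on $\Ap(s,\ct)$, since an integrand that is a.s.\ strictly above $\ct$ would give an expectation strictly above $\ct$.

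This route buys you three things. It uses only the trivial inequality $Q_C^{\pi}\ge\qcs$, so you need neither the direction $\vcs(s)\le\qcs(s,a)$ (your Step 2, with its non-stationary composite policy) nor attainment of $\min_a\qcs(s,\cdot)$. The only attainment it relies on is the $\min_\pi$ already written into \eqref{eq:def_vcs}, and only in the boundary case $\vcs(s)=\ct$; when $\vcs(s)<\ct$, any $\pi$ with $V_C^\pi(s)<\ct$ does the job even under an infimum definition. It also gives a stronger conclusion: \emph{every} policy with $V_C^\pi(s)\le\ct$, not only an optimal one, places positive probability on $\Ap(s,\ct)$. This ties the lemma directly to the paper's remark that $\vcs(s)\le\ct$ guarantees some policy with $V_C^\pi(s)\le\ct$.

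The paper's route, in exchange, is a one-liner and exhibits a canonical witness, the most conservative action $\arg\min_a\qcs(s,a)$. Its price is the unproved identity and an implicit attainment assumption over $A$, which your version makes explicit or avoids. For continuous $A$, you should add that $\qcs(s,\cdot)$ must be measurable for your expectation to make sense.
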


\begin{proof}
Let $s \in \Sp(\ct)$, which means by definition
\(
\vcs(s) \leq \ct.
\)
By the definition of the value function,
{\small
\[
\vcs(s) = \min_{\pi} \mathbb{E}\Big[\sum_{t=0}^\infty \gamma^t c(s_t, a_t) \,\big|\, s_0 = s, \pi\Big] = \min_{a \in A} \qcs(s,a),
\]
}
so there exists at least one action $a^* \in A$ that attains this minimum. Therefore,
\(
\qcs(s, a^*) = \vcs(s) \leq \ct,
\)
which implies $a^* \in \Ap(s, \ct)$. Hence,
\(
\Ap(s, \ct) \neq \emptyset.
\)
\end{proof}

In the following sections, we discuss how the budget-conditioned persistent safety sets defined above can be used to enforce the cumulative cost constraint of the CMDP.

\newcommand{\ctinit}{f}
\newcommand{\ctupdate}{g}
\newcommand{\ctdomain}{\mathbb{R}^+}
\newcommand{\bS}{\bar{S}}
\newcommand{\bs}{\bar{s}}
\newcommand{\bT}{\bar{\T}}
\newcommand{\br}{\bar{r}}
\newcommand{\bc}{\bar{c}}
\newcommand{\bmu}{\bar{\mu}_0}
\newcommand{\bM}{\bar{\mc{M}}}

\subsection{Budget Adaptive MDPs}
Since the definition of the budget-conditioned persistent safety set depends on the remaining budget, it is necessary to make the budget an explicit part of the system’s dynamics. To achieve this, we extend the CMDP by augmenting its state space with a dynamic budget variable. This augmentation allows the agent to reason about safety not only as a function of the environment state but also as a function of the available safety budget over time. {One key contribution is our method for initializing and updating the budget via functions $\ctinit$ and $\ctupdate$. Specially for stochastic MDPs, these updates are non-trivial and crucial for ensuring the theoretical guarantees of our approach.}

\begin{definition}[Budget-Adaptive MDP]\label{def:bmdp}
A \emph{Budget-Adaptive Markov Decision Process} (\ourmdp) constructed from a CMDP $\mc{M}$
is a tuple
\[
\bar{\mc{M}}(\mc{M}, f, g) := \langle \bS, A, \bT, \br, \bc, \gamma, \bmu, \cta \rangle.
\]
where the augmented state space is
\(
\bS := S \times \ctdomain,
\) and the action space $A$ remain unchanged.
The reward and cost functions are unchanged from the original CMDP. That is,
\(
\br((s,\ct),a) := r(s,a), 
\;
\bc((s,\ct),a) := c(s,a).
\)

The \emph{initial budget function}
\(
\ctinit: S \times \ctdomain \to \ctdomain
\)
assigns the initial budget based on the CMDP budget $\cta$ and the starting state $s_0$, such that the augmented initial state distribution is:
\[
\bmu := \{ (s_0, \ct_0) \mid s_0 \sim \mu_0, \;\ct_0 = \ctinit(s_0, \cta) \}.
\]

The \emph{budget update function}
\(
\ctupdate: S \times A \times S \times \ctdomain \to \sR^+
\)
updates the budget after each transition. In the most general form considered here, it depends on the current state $s$, action $a$, next state $s'$, and the current budget $\ct$, such that
\(
\ct' = \ctupdate(s, a, s', \ct).
\)
This results in the following augmented transition function:
\[
\bT\big((s',\ct') \mid (s,\ct), a\big) :=
\T(s' \mid s,a)\cdot \1_{\{\ct' = \ctupdate(s,a,s',\ct)\}},
\]
where $\1_{\{\cdot\}}$ is the indicator function.
\end{definition}

\noindent\textbf{Intuition of $f$ and $g$}  
The role of $f$ and $g$ is to initialize and track a remaining budget (or a similar quantity) while executing the policy, and then use it to prune the action space, as discussed later in this section. The simplest setup is to define  
\(
f(s_0, \cta) := \cta, 
\quad 
g(s,a,s',\ct) := \frac{\ct - c(s,a)}{\gamma},
\)  
where the budget is initialized as the CMDP cost constraint. This formulation keeps track of the exact remaining budget while dividing by $\gamma$ to account for discounting. However, this is not the only possible formulation. We can also track other quantities as the budget, as long as they help enforce the original CMDP cost constraint. 
In Section~\ref{sec:stochastic}, we show how such a quantity can be utilized in stochastic CMDPs.

While we present our formulation with a single cost objective, our approach naturally extends to multiple cost types by augmenting the state space with a budget vector. We provide the detailed formulation for multiple cost constraints in Appendix~\ref{app:multiple_costs}.

\begin{definition}[Budget-Restricted Policy Set]\label{def:policy_set}
Let $\Ap(s,\ct)$ denote the persistent safety action set, as defined in the Eq~\eqref{eq:def_ap}. The \emph{budget-restricted policy set} is:
{
\small
\[
\PiP := \Big\{ \pi : \bS \to \Delta(A) \;\Big|\; 
\pi(a \mid (s,\ct)) > 0\! \implies\! a \in \Ap(s,\ct) 
\Big\}
\]
}
\end{definition}

{The goal of defining $\PiP$ is that, with an appropriate choice of $f$ and $g$, any policy in $\PiP$ will satisfy the CMDP constraint, as discussed in Section~\ref{sec:stochastic}.}

\paragraph{The objective of the \ourmdp\ } is to find a policy in $\PiP$ that maximizes the reward. 
{\small
\begin{equation}
   \pi^* = \max_{\pi \in \PiP} \E_{\pi} 
   \sum_{t=0}^\infty \gamma^t \br((s_t,\ct_t), a_t)
   \label{eq:baobjective}
\end{equation}
}

\paragraph{Feasible state space and $\boldsymbol{\PiP}$}
A policy in $\PiP$ may not be defined for all states, since there may exist states where $\qcs(s,a) > \ct$ for all actions $a \in A$, leaving no admissible action. The following definition identifies the subset of states where the budget-restricted policy set $\PiP$ can be well-defined.

\begin{definition}[Feasible State Subspace]
The \emph{feasible state subspace} is
\(
\SbP := \Big\{ (s,\ct) \;\Big|\; \ct \in \sR^+,\; s \in \Sp(\ct) \Big\}.
\)
{where $\Sp(\ct)$ is the persistent safety set in~\eqref{eq:def_sp}
}.
\end{definition}

\begin{figure}
    \centering
    \includegraphics[width=0.95\linewidth]{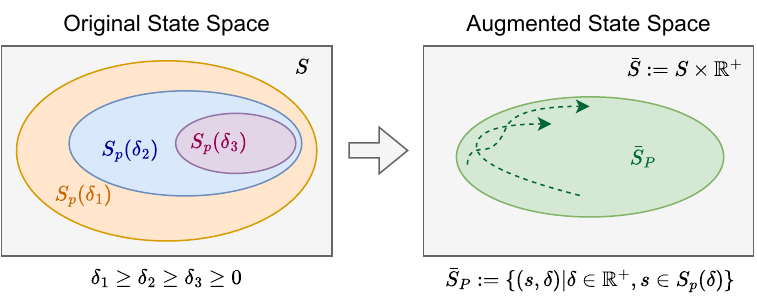}
    \vskip -5pt
    \caption{\small \normalfont\small State augmentation with budget. {A desirable property is set $\SbP$ to be persistent (trajectories starting in $\SbP$ must end in $\SbP$)}}
    \label{fig:augmented-space}
    \vskip -10pt
\end{figure}

\paragraph{Augmented State Space:}
Figure~\ref{fig:augmented-space} illustrates the modified state space and the feasible subspace defined within it. The feasible subspace represents the set of augmented states ($(s, \ct)$) where each state ($s$) remains feasible under its corresponding budget ($\ct$). As shown in Lemma~\ref{lemma:safe-actions-exist}, for every state $s \in \Sp(\ct)$, the safe action set $\Ap(s,\ct)$ is non-empty. This guarantees that the policy set $\PiP$ is well-defined over $\SbP$, since there is always at least one admissible action for each $(s,\ct) \in \SbP$.

Intuitively, some key desirable properties associated with $\SbP$ and $\PiP$ are: (a) the set $\SbP$ should be persistent. That, is an agent starting in $s\in \SbP$ remains in $\SbP$ following a policy $\pi\in\PiP$; (b) For any policy $\pi \in \PiP$, it is guaranteed that the original CMDP constraint $J_C(\pi)\le\cta$ is satisfied. 
In the following, we discuss benefits of defining $\PiP$ and $\SbP$, and then show how to choose appropriate budget update functions for properties (a) and (b) to hold.

\paragraph{Benefits of defining $\PiP$ and $\SbP$:}
Both sets are constructed from the persistent safety sets in Eq.~\ref{eq:def_sp} and Eq.~\ref{eq:def_ap}, which depend only on the conservative cost critic and are independent of both the reward signal and the reward-optimizing policy. This makes enforcing $\pi \in \PiP$ far simpler than satisfying the original CMDP constraint in Eq.~\ref{eq:cmdp_cost}, which depends on the policy being learned and creates a difficult min--max coupling between the cost critic, reward critic, and policy. Our formulation removes this circular dependency, yielding a more stable learning process.

\subsection{Budget Update for CMDPs}\label{sec:stochastic}
The most intuitive budget update function is to track the remaining cost budget exactly, where 
\(
    \ct_0 = \cta,
\)
and define the update rule as:
\(
    \ct_{t+1} = \frac{\ct_t - c(s_t, a_t)}{\gamma}.
\)
In fact, such a budget update rule can result in safety for fully deterministic CMDPs, where, transition function and policy both are deterministic. We formally define and prove it in Appendix~\ref{app:det-proof}.
However, tracking such an exact budget and using it to restrict the policy to $\PiP$ does not guarantee safety in stochastic CMDPs. 
In a stochastic setting, the cost-to-go satisfies
\[
\qcs(s,a) = c(s,a) + \gamma \mathbb{E}_{s'}[\vcs(s')],
\]
rather than the deterministic form 
\[
\qcs(s,a) = c(s,a) + \gamma \vcs(s').
\]
As a result, the next-state value $\vcs(s')$ can take a range of values, and the budget update defined for the deterministic setting no longer guarantees that $(s', \ct')$ lies inside $\SbP$. In such cases, we may need to follow a different policy—possibly the most conservative policy at these states—but doing so does not ensure that the agent will satisfy the CMDP constraint in Eq.~\eqref{eq:cmdp_cost}.
To address these issues, we propose an alternative budget update formulation, called \emph{soft budget-tracking}.

\begin{definition}[Soft Budget-Tracking]
Let $\MS$ denote the Soft Budget-Tracking \ourmdp\ defined by
\[
\MS(\mc{M}) := \bar{\mc{M}}(\mc{M}, f, g),
\]
with
\begin{align}
f(s_0, \cta) = \vcs(s_0)+ \cta - \E_{s \sim \mu_0}[\vcs(s)],
\\
g(s,a,s',\ct) = \vcs(s') + \frac{\ct - \qcs(s,a)}{\gamma} \label{eq:ms-fg}
\end{align}
\end{definition}

The advantage of the above formulation is that it ensures that after any transition, the next state $(s',\ct')\in\SbP$. Moreover, we can show that it satisfies the CMDP constraint. Although $\MS$ may seem unintuitive, its budget update functions reduce to the deterministic budget update in fully deterministic settings, and we formally show this in Appendix~\ref{app:stoch-proof}.

With the soft budget-tracking \ourmdp,  we can reason about the behavior of policies constrained to remain within the persistent safety set $\SbP$, even in stochastic environments. Under the reasonable assumption Eq.~\eqref{eq:assumption-soft} on the initial cost threshold, we can formally show that these policies maintain feasibility: they remain inside the persistent safety set and satisfy the CMDP cumulative cost constraint in expectation. The following theorem summarizes these key properties.

\newcommand{\thmStochMain}[1]{
Let $\MS$ be the soft budget-tracking \ourmdp. Assume the CMDP is feasible, i.e., the cost threshold $\cta$ satisfies
\begin{equation}
\cta \ge \mathbb{E}_{s \sim \mu_0}[V_C^*(s)].
\ifthenelse{\equal{#1}{1}}{\label{eq:assumption-soft}}{\nonumber 
}
\end{equation}

Let $\pi \in \PiP$, and define the expected discounted cumulative cost from an augmented state $(s,\ct) \in \bS$ by
\[
J_C^\pi((s,\ct)) := \E_{\pi}\Big[\sum_{t=0}^\infty \gamma^t c(s_t,a_t) \ \big|\ (s_0,\ct_0) = (s,\ct)\Big].
\]
Then the following properties hold:
\begin{enumerate}
    \item Any policy $\pi \in \PiP$ induces trajectories that start in and remain entirely within the feasible subspace $\SbP$.
    
\item For any $(s,\ct) \in \SbP$ and any policy $\pi \in \PiP$,
\(
J_C^\pi((s,\ct)) \le \ct.
\)
\item Any policy $\pi \in \PiP$ satisfies the CMDP cumulative cost constraint (Eq.~\ref{eq:cmdp_cost}): \(J_C(\pi) \le \cta\).
\end{enumerate}
}

\begin{theorem}[Properties of Policies in $\PiP$ under Soft Budget-Tracking]
\thmStochMain{1}
\label{thm:stoch-main}
\end{theorem}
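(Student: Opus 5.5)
The proof rests on the Bellman identity for the optimal cost value, $\qcs(s,a) = c(s,a) + \gamma \E_{s'\sim \T(\cdot\mid s,a)}[\vcs(s')]$, together with $\vcs(s)=\min_a \qcs(s,a)$, which was already used in Lemma~\ref{lemma:safe-actions-exist}. The three properties will be proved in order, each feeding the next.

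\textbf{Property 1 (invariance).} For the initial state, $\ct_0 = f(s_0,\cta) = \vcs(s_0) + (\cta - \E_{\mu_0}[\vcs])$. By the feasibility assumption~\eqref{eq:assumption-soft} the bracketed term is nonnegative, so $\ct_0 \ge \vcs(s_0)$, i.e.\ $(s_0,\ct_0)\in\SbP$. For the inductive step, suppose $(s,\ct)\in\SbP$ and $\pi\in\PiP$. By Lemma~\ref{lemma:safe-actions-exist} the policy is well defined there, and any sampled $a$ satisfies $\qcs(s,a)\le\ct$. Then for every successor $s'$,
\[
\ct' = \vcs(s') + \frac{\ct-\qcs(s,a)}{\gamma} \ge \vcs(s'),
\]
so $(s',\ct')\in\SbP$ regardless of which $s'$ is realized. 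This pathwise (not merely in-expectation) invariance is the point of the soft update. Induction on $t$ gives Property 1. I would also note that $\ct'\ge 0$, so that $g$ lands in $\ctdomain$.

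\textbf{Property 2 (budget bound).} The key computation is the conditional expectation of the next budget. Using the Bellman identity,
\[
\gamma\,\E_{s'}[\ct'] = \gamma \E_{s'}[\vcs(s')] + \ct - \qcs(s,a) = \ct - c(s,a).
\]
Hence $\ct = c(s,a) + \gamma\E[\ct_{1}\mid s,a]$ holds exactly, with equality for any chosen $a$. Unrolling this $T$ steps under $\pi$ yields
\[
\ct_0 = \E_\pi\Big[\sum_{t=0}^{T-1}\gamma^t c(s_t,a_t)\Big] + \gamma^T \E_\pi[\ct_T].
\]
Since $\ct_T\ge 0$ by Property 1, we get $\E_\pi[\sum_{t<T}\gamma^t c]\le \ct_0$ for every $T$. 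Letting $T\to\infty$ with monotone convergence (costs are nonnegative) gives $J_C^\pi((s,\ct))\le\ct$.

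I expect this step to be the main subtlety. It requires care that the telescoping identity uses the optimal $\vcs,\qcs$ in the update while the trajectory follows $\pi$. The identity only needs the Bellman relation at the visited $(s,a)$, so this is fine. Even so, I would double-check measurability and integrability: $\ct_t$ could grow like $\gamma^{-t}$, but we only need the lower bound $\ct_T\ge0$, so no boundedness is required.

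\textbf{Property 3 (CMDP constraint).} This follows by averaging Property 2 over the initial distribution. We have $J_C(\pi)=\E_{s_0\sim\mu_0}[J_C^\pi((s_0,\ct_0))]\le \E_{s_0}[\ct_0]$, and by construction $\E_{s_0}[\ct_0] = \E[\vcs(s_0)] + \cta - \E[\vcs(s_0)] = \cta$.
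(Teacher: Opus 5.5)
Your proof is correct, and Properties 1 and 3 are argued exactly as in the paper. For Property 2 you take a different route from the paper.

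\emph{Your route.} You turn the one-step identity $\ct = c(s,a) + \gamma\,\E_{s'}[\ct']$, which holds for every $a$ by the Bellman equation for $\qcs$, into the exact forward decomposition $\ct_0 = \E_\pi\big[\sum_{t<T}\gamma^t c(s_t,a_t)\big] + \gamma^T\E_\pi[\ct_T]$. You then discard the remainder using $\ct_T \ge \vcs(s_T) \ge 0$ from Property 1.

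\emph{The paper's route.} The paper runs a backward induction on the horizon $h$ for truncated action values, proving $Q_C^{\pi,h}((s,\ct),a)\le\ct$. Its base case is $c(s,a)\le\qcs(s,a)\le\ct$ for actions in the support of $\pi$, and its inductive step uses the same identity $c(s,a)+\gamma\,\E_{s'}[\ct']=\ct$.

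\emph{What each buys.} Both arguments rest on the same computation. Yours is an exact accounting, so it shows that the only slack is the residual budget $\lim_T\gamma^T\E_\pi[\ct_T]$. It also shows that membership in $\PiP$ enters only through keeping the budget process inside $\SbP$, and hence nonnegative. The paper's version stays in the standard dynamic-programming format and gives the bound directly on truncated action values, without following the budget process along trajectories.

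Your integrability remark is fine. In fact, since $\qcs\ge 0$ and $\vcs\le\ctmax$, one has $\ct_{t+1}\le \ctmax + \ct_t/\gamma$, so each $\ct_t$ is bounded by a deterministic constant.
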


The proof of Theorem~\ref{thm:stoch-main} is provided in Appendix~\ref{app:stoch-proof}. 
The theorem ensures that any policy in~$\PiP$ remains within~$\SbP$ and that the expected cumulative cost from any augmented state~$(s,\ct)$ does not exceed its budget~$\ct$, thereby satisfying the CMDP constraint. 
Unlike the deterministic case, we cannot guarantee that all CMDP-feasible policies lie within~$\PiP$, so the resulting constraint may be stricter. 
Nevertheless, our empirical results show that it still yields high-quality policies.

\noindent\textbf{CMDP Instantiation} {We next discuss how to formulate and solve \ourmdp\ from a given CMDP with known model. \ourmdp\ is formed by augmenting the state with a discretized budget dimension and defining transitions using \(f\) or \(g\), depending on whether the setting is stochastic or deterministic. We mask states outside \(\SbP\) and restrict actions to the support of \(\PiP\). The resulting finite MDP can then be solved with a standard LP solver. Further details appear in Appendix~\ref{app:cmdp-extension}}.

In real-world settings where dynamics are unknown, offline RL is often used to learn from pre-collected data. Next, we show how our framework integrates with standard offline RL methods to solve \textit{constrained} RL problems.

\section{Safe Offline-RL With \ourmdp\ }

In this section we discuss how our approach can be used with existing \textit{unconstrained} offline RL algorithms. Our approach can be seen as a three step process:
\modified{
\begin{enumerate}
\item We first estimate the cost-to-go functions $\vcs$ and $\qcs$ using offline RL algorithms, ignoring the reward, using the original dataset $\dataset$.
\item Then we use it create a new dataset $\bar{\dataset}$ from $\dataset$ that represent valid transitions of the augmented  MDP $\bar{\mc{M}}$ with the extended state space $\bS$ and dynamic budget.
\item Then we train an Offline RL agent, using the augmented dataset $\bar{\dataset}$ to maximize the reward for the augmented MDP $\bar{\mc{M}}$, while ensuring that the resulting policy respects the persistent safety constraints.
\end{enumerate}
}

The first step is performed on the original MDP without any modifications to the original MDP, while the third step is performed on the \ourmdp.

\paragraph{Learning the Persistent Safety Set } 
For the first step, any offline RL algorithm can be used; the only difference is that we minimize the cost instead of maximizing the reward. In practice, in-sample RL algorithms such as IQL~\cite{kostrikov_offline_2021}, XQL~\cite{garg_extreme_2023}, or SparseQL~\cite{xu_offline_2023} are particularly convenient, as they do not require learning a policy to estimate Q-values. However, if one wishes to use algorithms that rely on out-of-distribution actions, such as SAC+BC~\cite{fujimoto2021minimalistapproachofflinereinforcement}, CQL~\cite{kumar_conservative_2020}, our approach is fully compatible and does not impose any restrictions. 
For convenience and stability, we use IQL in our implementation. The following loss functions can then be used, with $\tauc \leq 0.5$ to minimize the cost (details in~\cite{kostrikov_offline_2021}):

{
\begin{equation}
\mc{L}^V_C(\psi) = \E_{(s,a)\sim \dataset}
[\mathcal{L}^{\tauc}(\qc_{\theta_T}(s,a) - \vc_{\psi}(s))]
\label{eq:loss_vcs}
\end{equation}
\begin{equation}
\label{eq:loss_qcs}
\mc{L}^Q_C(\theta) =
\E_{(s,a,s')\sim \dataset}
[(c(s,a) + \gamma \vc_\psi(s') - \qc_{\theta}(s,a))^2]
\end{equation}
}

\paragraph{Offline RL within the Augmented MDP}
In this step, we train an RL algorithm within the persistent safety set of the augmented MDP $\bar{\mc{M}}(\mc{M},f,g)$, ensuring the policy selects only actions from $\Ap$ by restricting the MDP to $\SbP$.
\noindent To facilitate training, we define an augmented dataset as follows:
{
\small
\[
\bar{\dataset} := \Bigg\{ \big(\underbrace{(s,\ct)}_{\bs}, a, \underbrace{(s',\ct')}_{\bs'}\big) \;\Big|\;
\substack{
(s,a,s') \sim \dataset, \\
\ct \sim \mc{U}_{[\qcs(s,a),\ctmax]}, \;
\ct' = g(s,a,s',\ct)
}
\Bigg\}
\]
}

\modified{
Unlike a fixed dataset, this augmented dataset is not pregenerated; instead, we sample mini-batches from it dynamically during training. Here, the budget $\ct$ is sampled from a uniform distribution $\mc{U}_{[\qcs(s,a), \ctmax]}$ to generate augmented transitions within the persistent safety set. Sampling $\ct \ge \qcs(s,a)$ ensures that the dataset extends only to the persistent safety set $\SbP$ and not to the full augmented state space $\bS$.
Crucially, this process acts as data augmentation rather than filtering. A valid budget range $[\qcs(s,a), \ctmax]$ always exists for any state-action transition $(s,a,s')$ in the offline dataset. Thus, we can always sample a valid budget $\ct$ and construct an augmented transition $((s,\ct), a, (s',\ct'))$. This ensures that the training process utilizes the entire support of the original offline dataset without discarding any samples.
The rewards $r(s,a)$ and $c(s,a)$ remain unchanged, as discussed in the Definition~\ref{def:bmdp}.
}

\begin{algorithm}[t]
\small
   \caption{\small \our\ : IQL Instantiation Algorithm }
   \label{alg:siql}
\begin{algorithmic}
\small 
   \State {\bfseries Initialize Parameters:} 
   ${\theta}$,   
   ${\theta_T}$,   
   ${\psi}$,   
   $\hat{\theta}$,   
   $\hat{\theta}_T$,   
   $\hat{\psi}$,   
   ${\phi}$
   \State {\bfseries Inputs:} Offline data $\dataset$, and other hyper parameters $\tau_c,\tau_r,\alpha,\beta$
   \For{each gradient step} 
   \State {$(s,a,s')\sim \dataset$} \Comment{sample mini batch of transitions}
   \State   ${\theta} \leftarrow {\theta} - \lambda_Q^C\nabla_{{\theta}} \loss_{Q}^C({\theta})$ \Comment{update cost-critic Eq.\eqref{eq:loss_qcs}} 
   \State   ${\psi} \leftarrow {\psi} - \lambda_V^C\nabla_{{\psi}} \loss_{V}^C({\psi})$ 
   \Comment{update cost-value Eq.\eqref{eq:loss_vcs}}
   \State   $\theta_T \leftarrow (1-\alpha){\theta}_T + \alpha{\theta}$ \Comment{soft update cost target-net.}
  \\
   \State {$\ct \sim \mc{U}_{[\qc_{\theta}(s,a),\ctmax]}$} \Comment{sample budget from uniform dist.}   \State {$\ct' = g(s,a,s',\ct)$} \Comment{compute next budget}
   \State {$\bs = (s,\ct)$ and $\bs' = (s',\ct')$} \Comment{compute augmented states}
   \State   $\hat\theta \leftarrow \hat\theta - \lambda_Q^R \nabla_{\hat\theta} \loss_{Q}^R({\hat\theta})$ 
   \Comment{update reward critic Eq.\eqref{eq:loss_qr}}
   \State   $\hat\psi \leftarrow \hat\psi - \lambda_V^R\nabla_{\hat\psi} \loss_{V}^R({\hat\psi})$
 \Comment{update value function Eq.\eqref{eq:loss_vr}}
   \State   $\hat{\theta}_T \leftarrow (1-\alpha)\hat{\theta}_T + \alpha\hat{\theta}$
   \Comment{soft update reward target-net.} 
   \State   $\phi \leftarrow \phi - \lambda_{\pi} \nabla_{\phi} \mc{L}^{\pi_R}({\phi})$
    \Comment{policy extraction (AWR) Eq.\eqref{eq:loss_pi}}
   \EndFor
\end{algorithmic}
\end{algorithm}

\noindent \textbf{Implicit Enforcement of Persistent Safety} Offline RL naturally stays close to the behavior policy, thus restricting the dataset to the persistent safety set already biases the learned policy to remain within it. In practice, we found no extra penalties are required. 

\noindent \textbf{IQL-Instantiation}
We instantiate our approach using Implicit Q-Learning (IQL) by extending the offline dataset to include only transitions within the persistent safety set $\SbP$. This modification naturally integrates our safety constraint into the learning process. The value and Q-functions are trained using the standard expectile regression (with $\taur \geq0.5$) and Bellman objectives defined over the augmented dataset $\bar{\dataset}$:
{
\begin{equation}
\mc{L}^{V}_R(\hat\psi) =
\E_{(\bs,a,\bs')\sim \bar\dataset}
\big[\mathcal{L}^{\taur}\big(\qr_{\hat\theta_T}(\bs,a) - \vr_{\hat\psi}(\bs)\big)\big]
\label{eq:loss_vr}
\end{equation}
}
{
\small
\begin{equation}
\mc{L}^{Q}_R(\hat\theta) =\!\!
\E_{
(\bs,a,\bs')\sim \bar\dataset}
\big[(\br(\bs,a) + \gamma \vr_{\hat\psi}(\bs') - \qr_{\hat\theta}(\bs,a))^2\big]
\label{eq:loss_qr}
\end{equation}
}
Here, $\theta_T$ denotes the target network parameters, which are updated using a soft update. For policy extraction, we follow Advantage-Weighted Regression (AWR) as in IQL, where the policy is trained to maximize the advantage-weighted likelihood of actions under the Q-function:

{
\begin{equation}
\mc{L}^{\pi}(\phi) =
-\!\!\!\!\E_{(\bs,a)\sim \bar\dataset}
\left[\exp{\left(\frac{A(\bs,a)}{\beta}\right)}
\log \pi_\phi(a \!\mid \!\bs)\right]
\label{eq:loss_pi}
\end{equation}
}

where $A(\bs,a) = \qr_{\hat\theta}(\bs,a) - \vr_{\hat\psi}(\bs)$ denotes the advantage, and $\beta$ controls the temperature of the weighting distribution. The exact learning process is outlined in the Algorithm~\ref{alg:siql}. In our method, called BCRL, cost critics ($Q^C, V^C$) can be learned by ignoring rewards altogether, and policy extraction only depends on reward critics. This is significantly more stable that standard Lagrangian relaxation methods.

\section{Experiments}

\begin{figure}[t]
    \centering
    \includegraphics[width=0.99\linewidth]{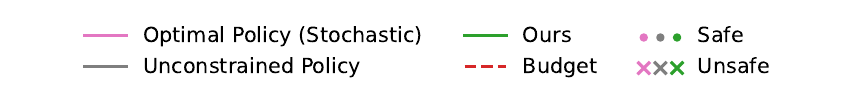} \\
    \includegraphics[width=0.46\linewidth]{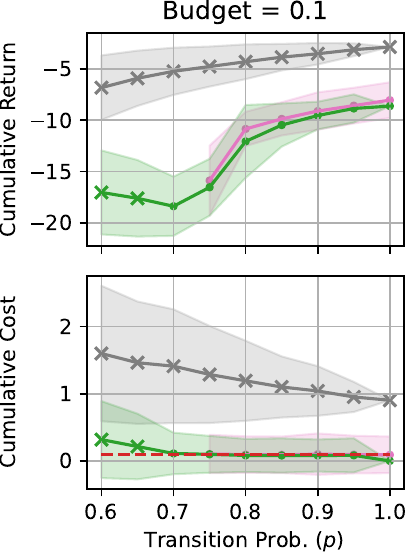}
    \hspace{0.05\linewidth}
    \includegraphics[width=0.46\linewidth]{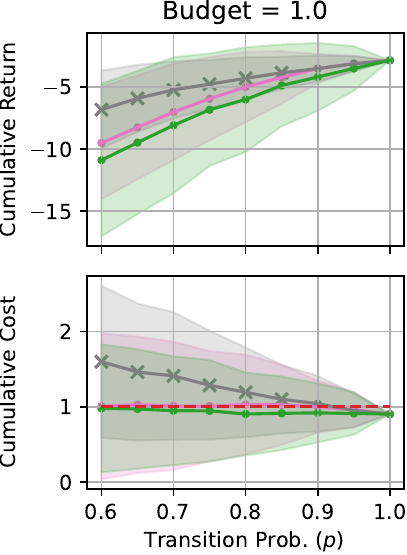}
\caption{
\textbf{\small Comparison with Optimal Solution:} Grid-world results with X-axis as the intended-movement probability $p$ (higher values indicate less noise). Top plots show total return; bottom plots show cost for two budget levels.
}
    \label{fig:synthetic}
    \vskip -10pt
\end{figure}

\begin{table*}[t]
    \centering
\resizebox{\textwidth}{!}
{
\small 
\begin{tabular}{lllllllllll}
 & \multicolumn{2}{c}{CDT} & \multicolumn{2}{c}{CAPS} & \multicolumn{2}{c}{CCAC} & \multicolumn{2}{c}{LSPC} & \multicolumn{2}{c}{\our\ (Ours)} \\
 & reward $\uparrow$ & cost $\downarrow$ & reward $\uparrow$ & cost $\downarrow$ & reward $\uparrow$ & cost $\downarrow$ & reward $\uparrow$ & cost $\downarrow$ & reward $\uparrow$ & cost $\downarrow$ \\
\hline
PointButton1 & \gray{0.53$_{\pm 0.01}$} & \gray{1.68$_{\pm 0.13}$} & \textbf{0.04$_{\pm 0.15}$} & \textbf{0.58$_{\pm 1.42}$} & \gray{0.68$_{\pm 0.14}$} & \gray{4.31$_{\pm 3.20}$} & \textbf{\blue{0.09$_{\pm 0.20}$}} & \textbf{\blue{0.69$_{\pm 1.03}$}} & \textbf{0.07$_{\pm 0.18}$} & \textbf{0.79$_{\pm 2.10}$} \\
PointButton2 & \gray{0.46$_{\pm 0.01}$} & \gray{1.57$_{\pm 0.10}$} & \textbf{\blue{0.12$_{\pm 0.20}$}} & \textbf{\blue{0.96$_{\pm 2.35}$}} & \gray{0.63$_{\pm 0.11}$} & \gray{4.82$_{\pm 3.61}$} & \gray{0.19$_{\pm 0.21}$} & \gray{1.39$_{\pm 2.53}$} & \textbf{\blue{0.12$_{\pm 0.22}$}} & \textbf{\blue{0.81$_{\pm 1.36}$}} \\
PointCircle1 & \textbf{\blue{0.59$_{\pm 0.00}$}} & \textbf{\blue{0.69$_{\pm 0.04}$}} & \textbf{0.50$_{\pm 0.12}$} & \textbf{0.68$_{\pm 1.51}$} & \gray{0.58$_{\pm 0.13}$} & \gray{2.22$_{\pm 1.05}$} & \gray{0.56$_{\pm 0.35}$} & \gray{3.49$_{\pm 3.55}$} & \textbf{0.55$_{\pm 0.17}$} & \textbf{0.86$_{\pm 0.21}$} \\
PointCircle2 & \gray{0.64$_{\pm 0.01}$} & \gray{1.05$_{\pm 0.08}$} & \textbf{0.51$_{\pm 0.21}$} & \textbf{0.76$_{\pm 0.47}$} & \gray{0.18$_{\pm 0.50}$} & \gray{1.42$_{\pm 1.09}$} & \gray{0.67$_{\pm 0.24}$} & \gray{4.18$_{\pm 4.56}$} & \textbf{\blue{0.58$_{\pm 0.10}$}} & \textbf{\blue{0.74$_{\pm 0.34}$}} \\
PointGoal1 & \gray{0.69$_{\pm 0.02}$} & \gray{1.12$_{\pm 0.07}$} & \textbf{0.46$_{\pm 0.28}$} & \textbf{0.63$_{\pm 0.81}$} & \gray{0.55$_{\pm 0.26}$} & \gray{1.88$_{\pm 1.72}$} & \textbf{0.26$_{\pm 0.27}$} & \textbf{0.23$_{\pm 0.45}$} & \textbf{\blue{0.56$_{\pm 0.25}$}} & \textbf{\blue{0.70$_{\pm 0.65}$}} \\
PointGoal2 & \gray{0.59$_{\pm 0.03}$} & \gray{1.34$_{\pm 0.05}$} & \textbf{0.34$_{\pm 0.23}$} & \textbf{0.80$_{\pm 0.74}$} & \gray{0.41$_{\pm 0.36}$} & \gray{3.60$_{\pm 3.37}$} & \textbf{0.23$_{\pm 0.24}$} & \textbf{0.50$_{\pm 0.79}$} & \textbf{\blue{0.38$_{\pm 0.21}$}} & \textbf{\blue{0.77$_{\pm 0.68}$}} \\
CarCircle1 & \gray{0.60$_{\pm 0.01}$} & \gray{1.73$_{\pm 0.04}$} & \gray{0.55$_{\pm 0.16}$} & \gray{1.54$_{\pm 1.63}$} & \gray{0.49$_{\pm 0.14}$} & \gray{5.13$_{\pm 3.53}$} & \gray{0.46$_{\pm 0.25}$} & \gray{1.86$_{\pm 3.14}$} & \textbf{\blue{0.50$_{\pm 0.12}$}} & \textbf{\blue{0.48$_{\pm 0.43}$}} \\
CarPush2 & \gray{0.19$_{\pm 0.01}$} & \gray{1.30$_{\pm 0.16}$} & \textbf{0.05$_{\pm 0.14}$} & \textbf{0.71$_{\pm 1.66}$} & \gray{0.12$_{\pm 0.31}$} & \gray{6.06$_{\pm 9.53}$} & \textbf{\blue{0.07$_{\pm 0.15}$}} & \textbf{\blue{0.80$_{\pm 1.41}$}} & \textbf{0.05$_{\pm 0.17}$} & \textbf{0.52$_{\pm 1.07}$} \\
SwimmerVelocity & \textbf{\blue{0.66$_{\pm 0.01}$}} & \textbf{\blue{0.96$_{\pm 0.08}$}} & \gray{0.44$_{\pm 0.22}$} & \gray{1.66$_{\pm 2.37}$} & \textbf{-0.08$_{\pm 0.02}$} & \textbf{0.00$_{\pm 0.01}$} & \gray{0.46$_{\pm 0.21}$} & \gray{1.28$_{\pm 2.60}$} & \textbf{0.49$_{\pm 0.18}$} & \textbf{0.99$_{\pm 0.37}$} \\
HopperVelocity & \textbf{0.63$_{\pm 0.06}$} & \textbf{0.61$_{\pm 0.08}$} & \textbf{0.44$_{\pm 0.26}$} & \textbf{0.75$_{\pm 0.83}$} & \textbf{0.46$_{\pm 0.38}$} & \textbf{0.50$_{\pm 0.55}$} & \gray{0.21$_{\pm 0.04}$} & \gray{1.25$_{\pm 1.20}$} & \textbf{\blue{0.66$_{\pm 0.23}$}} & \textbf{\blue{0.80$_{\pm 0.35}$}} \\
HalfCheetahVelocity & \textbf{\blue{1.00$_{\pm 0.01}$}} & \textbf{\blue{0.01$_{\pm 0.01}$}} & \textbf{0.94$_{\pm 0.04}$} & \textbf{0.77$_{\pm 0.13}$} & \textbf{0.94$_{\pm 0.04}$} & \textbf{0.94$_{\pm 0.07}$} & \textbf{0.97$_{\pm 0.02}$} & \textbf{1.00$_{\pm 1.04}$} & \textbf{0.93$_{\pm 0.07}$} & \textbf{0.62$_{\pm 0.27}$} \\
\hline
SafetyGym Average (21 tasks) & \gray{0.54$_{\pm 0.21}$} & \gray{1.06$_{\pm 0.59}$} & \textbf{0.36$_{\pm 0.34}$} & \textbf{0.76$_{\pm 1.43}$} & \gray{0.41$_{\pm 0.46}$} & \gray{3.18$_{\pm 5.24}$} & \gray{0.34$_{\pm 0.19}$} & \gray{1.10$_{\pm 1.75}$} & \textbf{\blue{0.38$_{\pm 0.17}$}} & \textbf{\blue{0.57$_{\pm 0.68}$}} \\
\hline
\hline
BallRun & \gray{0.39$_{\pm 0.09}$} & \gray{1.16$_{\pm 0.19}$} & \textbf{0.18$_{\pm 0.09}$} & \textbf{0.94$_{\pm 0.73}$} & \textbf{\blue{0.40$_{\pm 0.12}$}} & \textbf{\blue{0.84$_{\pm 0.31}$}} & \textbf{0.15$_{\pm 0.01}$} & \textbf{0.00$_{\pm 0.00}$} & \textbf{0.20$_{\pm 0.02}$} & \textbf{0.06$_{\pm 0.10}$} \\
CarRun & \textbf{\blue{0.99$_{\pm 0.01}$}} & \textbf{\blue{0.65$_{\pm 0.31}$}} & \textbf{0.97$_{\pm 0.01}$} & \textbf{0.23$_{\pm 0.25}$} & \textbf{0.97$_{\pm 0.04}$} & \textbf{0.68$_{\pm 0.22}$} & \textbf{0.98$_{\pm 0.01}$} & \textbf{0.84$_{\pm 1.04}$} & \textbf{0.98$_{\pm 0.01}$} & \textbf{0.98$_{\pm 0.72}$} \\
DroneRun & \textbf{\blue{0.63$_{\pm 0.04}$}} & \textbf{\blue{0.79$_{\pm 0.68}$}} & \gray{0.45$_{\pm 0.10}$} & \gray{2.91$_{\pm 4.46}$} & \gray{0.46$_{\pm 0.18}$} & \gray{4.36$_{\pm 4.32}$} & \textbf{0.59$_{\pm 0.06}$} & \textbf{0.93$_{\pm 1.51}$} & \textbf{0.58$_{\pm 0.03}$} & \textbf{0.54$_{\pm 0.91}$} \\
AntRun & \textbf{\blue{0.72$_{\pm 0.04}$}} & \textbf{\blue{0.91$_{\pm 0.42}$}} & \textbf{0.61$_{\pm 0.13}$} & \textbf{0.78$_{\pm 0.59}$} & \textbf{0.12$_{\pm 0.09}$} & \textbf{0.04$_{\pm 0.08}$} & \gray{0.66$_{\pm 0.13}$} & \gray{1.43$_{\pm 1.25}$} & \textbf{0.68$_{\pm 0.11}$} & \textbf{0.95$_{\pm 0.29}$} \\
BallCircle & \gray{0.77$_{\pm 0.06}$} & \gray{1.07$_{\pm 0.27}$} & \textbf{0.69$_{\pm 0.10}$} & \textbf{0.59$_{\pm 0.23}$} & \textbf{\blue{0.78$_{\pm 0.05}$}} & \textbf{\blue{0.46$_{\pm 0.31}$}} & \gray{0.62$_{\pm 0.28}$} & \gray{1.17$_{\pm 1.95}$} & \textbf{0.69$_{\pm 0.11}$} & \textbf{0.97$_{\pm 0.27}$} \\
CarCircle & \textbf{\blue{0.75$_{\pm 0.06}$}} & \textbf{\blue{0.95$_{\pm 0.61}$}} & \textbf{0.70$_{\pm 0.10}$} & \textbf{0.66$_{\pm 0.27}$} & \textbf{0.72$_{\pm 0.04}$} & \textbf{0.77$_{\pm 0.44}$} & \gray{0.78$_{\pm 0.12}$} & \gray{1.88$_{\pm 3.23}$} & \textbf{0.53$_{\pm 0.17}$} & \textbf{0.51$_{\pm 0.51}$} \\
DroneCircle & \textbf{\blue{0.63$_{\pm 0.07}$}} & \textbf{\blue{0.98$_{\pm 0.27}$}} & \textbf{0.55$_{\pm 0.06}$} & \textbf{0.65$_{\pm 0.29}$} & \textbf{0.29$_{\pm 0.27}$} & \textbf{0.63$_{\pm 0.70}$} & \gray{0.60$_{\pm 0.02}$} & \gray{1.31$_{\pm 0.86}$} & \textbf{0.42$_{\pm 0.12}$} & \textbf{0.52$_{\pm 0.32}$} \\
AntCircle & \gray{0.54$_{\pm 0.20}$} & \gray{1.78$_{\pm 4.33}$} & \textbf{0.37$_{\pm 0.18}$} & \textbf{0.15$_{\pm 0.25}$} & \textbf{\blue{0.61$_{\pm 0.14}$}} & \textbf{\blue{0.75$_{\pm 0.90}$}} & \textbf{0.44$_{\pm 0.14}$} & \textbf{0.53$_{\pm 0.93}$} & \textbf{0.56$_{\pm 0.17}$} & \textbf{0.79$_{\pm 0.60}$} \\
\hline
BulletGym Average (8 tasks) & \gray{0.68$_{\pm 0.19}$} & \gray{1.04$_{\pm 1.65}$} & \textbf{0.57$_{\pm 0.25}$} & \textbf{0.86$_{\pm 1.82}$} & \gray{0.54$_{\pm 0.30}$} & \gray{1.07$_{\pm 2.04}$} & \gray{0.60$_{\pm 0.10}$} & \gray{1.01$_{\pm 1.35}$} & \textbf{\blue{0.58$_{\pm 0.09}$}} & \textbf{\blue{0.67$_{\pm 0.46}$}} \\
\hline
\hline
easysparse & \textbf{0.17$_{\pm 0.14}$} & \textbf{0.23$_{\pm 0.32}$} & \textbf{0.12$_{\pm 0.20}$} & \textbf{0.37$_{\pm 0.43}$} & \textbf{-0.06$_{\pm 0.00}$} & \textbf{0.10$_{\pm 0.05}$} & \gray{0.79$_{\pm 0.14}$} & \gray{1.34$_{\pm 1.43}$} & \textbf{\blue{0.70$_{\pm 0.19}$}} & \textbf{\blue{0.94$_{\pm 0.17}$}} \\
easymean & \textbf{0.45$_{\pm 0.11}$} & \textbf{0.54$_{\pm 0.55}$} & \textbf{0.02$_{\pm 0.07}$} & \textbf{0.21$_{\pm 0.23}$} & \textbf{-0.06$_{\pm 0.01}$} & \textbf{0.07$_{\pm 0.08}$} & \gray{0.82$_{\pm 0.06}$} & \gray{1.33$_{\pm 0.86}$} & \textbf{\blue{0.73$_{\pm 0.17}$}} & \textbf{\blue{0.94$_{\pm 0.13}$}} \\
easydense & \textbf{0.32$_{\pm 0.18}$} & \textbf{0.62$_{\pm 0.43}$} & \textbf{0.11$_{\pm 0.15}$} & \textbf{0.16$_{\pm 0.17}$} & \textbf{-0.06$_{\pm 0.00}$} & \textbf{0.07$_{\pm 0.04}$} & \gray{0.84$_{\pm 0.10}$} & \gray{1.88$_{\pm 1.20}$} & \textbf{\blue{0.70$_{\pm 0.17}$}} & \textbf{\blue{0.90$_{\pm 0.16}$}} \\
mediumsparse & \gray{0.87$_{\pm 0.11}$} & \gray{1.10$_{\pm 0.26}$} & \textbf{0.59$_{\pm 0.36}$} & \textbf{0.74$_{\pm 0.97}$} & \textbf{-0.08$_{\pm 0.00}$} & \textbf{0.07$_{\pm 0.04}$} & \gray{0.97$_{\pm 0.02}$} & \gray{1.24$_{\pm 0.68}$} & \textbf{\blue{0.94$_{\pm 0.07}$}} & \textbf{\blue{0.82$_{\pm 0.32}$}} \\
mediummean & \textbf{0.45$_{\pm 0.39}$} & \textbf{0.75$_{\pm 0.83}$} & \textbf{0.66$_{\pm 0.35}$} & \textbf{0.90$_{\pm 0.89}$} & \textbf{-0.07$_{\pm 0.01}$} & \textbf{0.02$_{\pm 0.03}$} & \textbf{\blue{0.93$_{\pm 0.12}$}} & \textbf{\blue{0.78$_{\pm 0.51}$}} & \textbf{0.92$_{\pm 0.11}$} & \textbf{0.77$_{\pm 0.16}$} \\
mediumdense & \gray{0.88$_{\pm 0.12}$} & \gray{2.41$_{\pm 0.71}$} & \textbf{0.75$_{\pm 0.29}$} & \textbf{0.65$_{\pm 0.57}$} & \textbf{-0.08$_{\pm 0.00}$} & \textbf{0.06$_{\pm 0.03}$} & \gray{0.82$_{\pm 0.30}$} & \gray{1.07$_{\pm 0.71}$} & \textbf{\blue{0.78$_{\pm 0.28}$}} & \textbf{\blue{0.68$_{\pm 0.31}$}} \\
hardsparse & \textbf{0.25$_{\pm 0.08}$} & \textbf{0.41$_{\pm 0.33}$} & \textbf{0.45$_{\pm 0.15}$} & \textbf{0.75$_{\pm 0.56}$} & \textbf{-0.05$_{\pm 0.00}$} & \textbf{0.07$_{\pm 0.04}$} & \textbf{\blue{0.54$_{\pm 0.09}$}} & \textbf{\blue{0.91$_{\pm 0.51}$}} & \textbf{0.49$_{\pm 0.13}$} & \textbf{0.76$_{\pm 0.44}$} \\
hardmean & \textbf{0.33$_{\pm 0.21}$} & \textbf{0.97$_{\pm 0.31}$} & \textbf{0.29$_{\pm 0.13}$} & \textbf{0.28$_{\pm 0.31}$} & \textbf{-0.05$_{\pm 0.00}$} & \textbf{0.06$_{\pm 0.03}$} & \gray{0.48$_{\pm 0.14}$} & \gray{1.16$_{\pm 1.19}$} & \textbf{\blue{0.46$_{\pm 0.15}$}} & \textbf{\blue{0.84$_{\pm 0.54}$}} \\
harddense & \textbf{0.08$_{\pm 0.15}$} & \textbf{0.21$_{\pm 0.42}$} & \textbf{0.36$_{\pm 0.18}$} & \textbf{0.66$_{\pm 0.92}$} & \textbf{-0.03$_{\pm 0.01}$} & \textbf{0.11$_{\pm 0.08}$} & \gray{0.47$_{\pm 0.19}$} & \gray{1.43$_{\pm 0.93}$} & \textbf{\blue{0.44$_{\pm 0.17}$}} & \textbf{\blue{0.63$_{\pm 0.24}$}} \\
\hline
MetaDrive Average (9 tasks) & \textbf{0.42$_{\pm 0.31}$} & \textbf{0.80$_{\pm 0.61}$} & \textbf{0.38$_{\pm 0.34}$} & \textbf{0.54$_{\pm 0.69}$} & \textbf{-0.06$_{\pm 0.02}$} & \textbf{0.07$_{\pm 0.06}$} & \gray{0.74$_{\pm 0.13}$} & \gray{1.24$_{\pm 0.89}$} & \textbf{\blue{0.69$_{\pm 0.16}$}} & \textbf{\blue{0.81$_{\pm 0.27}$}} \\
\hline
\bottomrule
\end{tabular}
}
\caption{
\normalfont
\small 
\textbf{Results for normalized cost and normalized reward:} \(\uparrow\) indicates that a higher value is better, while \(\downarrow\) indicates that a lower value is preferable. \textbf{Bold} text denotes safe policies, \gray{gray} indicates unsafe policies, and \blue{\textbf{blue}} highlights the highest reward among the safe policies for each task.  
 Each result is obtained by running three random seeds across three distinct cost thresholds and evaluating over 20 episodes. {Some tasks are omitted to save space. Complete results are provided in Table~\ref{app:main_table} of Appendix~\ref{app:exp}.}}
\label{tab:main-table}
\end{table*}

\begin{figure*}[bt]
    \centering
    \begin{minipage}{0.55\textwidth}
    \centering

\resizebox{\textwidth}{!}
{
\begin{tabular}{lccccc}
\toprule
 & \makecell{ADE \\ ($m$) $\downarrow$ }  & \makecell{Close\\quaters\\ Rate $\downarrow$} & \makecell{Avg.\\ Acceleration \\ ($ms^{-2}$)} & \makecell{Avg. \\ Speed \\ ($ms^{-1}$)} & \makecell{Success \\ Rate $\uparrow$ }\\
\midrule
EXPERT & 0.0 & 0.3 & 0.0001 & 4.44  & 1.0  \\
CAPS & 87.3 ± 19.82 & 0.23 ± 0.05 & 0.0034 ± 0.0006 & 5.67 ± 0.07 & 0.73 ± 0.16 \\
CCAC & 371.07 ± 26.74 & \textbf{0.11 ± 0.06} & 0.7549 ± 0.1339 & 21.41 ± 0.0 & 0.0 ± 0.0 \\
LSPC & 251.61 ± 22.68 & 0.17 ± 0.02 & 0.0087 ± 0.0157 & 8.47 ± 0.37 & 0.23 ± 0.1 \\
\hline
\textbf{\our} & \textbf{52.08 ± 10.64} & 0.26 ± 0.03 & -0.0006 ± 0.0003 & 3.65 ± 0.14 & \textbf{0.88 ± 0.03} \\
\hline
\bottomrule
\end{tabular}
}
\newline \newline
    \captionof{table}{\normalfont\small Performance metrics in the Maritime Navigation Task (section~\ref{exp:real})}
    \label{tab:ship-nav}

    \end{minipage}
    \hfill
    \begin{minipage}{0.40\textwidth}
        \centering
    \includegraphics[width=0.9\linewidth]{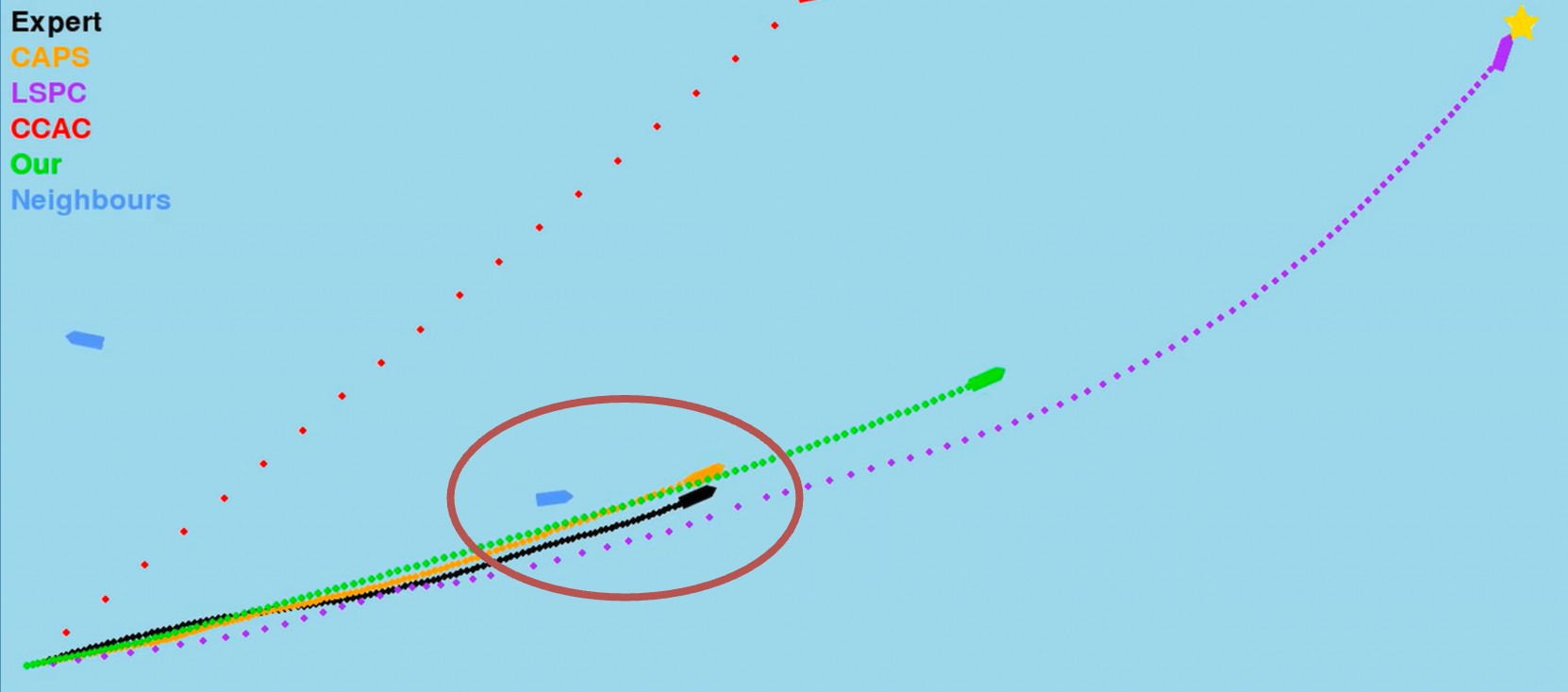}
    \caption{
\normalfont\small Expert, learned trajectories in marine navigation
    }
    \label{fig:ship-nav}
    \end{minipage}
    \vskip -10pt
\end{figure*}

We evaluate \our\ on DSRL benchmarks~\cite{liu_datasets_2024} and a real-world maritime navigation task. We aim to answer following questions:
{\textbf{(Q1)} Does \our\ create a notable gap from the optimal CMDP ({model known}) solution, particularly under stochastic conditions? (Section \ref{sec:comp-optimal})}
\textbf{(Q2)} Can \our\ outperform existing state-of-the-art offline ({model-free}) safe RL baselines on standard benchmarks? (Section~\ref{exp:dsrl})
\textbf{(Q3)} How sensitive is \our\ to different hyperparameter settings, and how do these choices affect its performance and safety behavior?
\textbf{(Q4)} How does \our\ behave on real-world-data, both qualitatively (in terms of safe and interpretable trajectories) and quantitatively (in terms of performance and safety metrics)? (Section~\ref{exp:real}).

\noindent\textbf{Baselines and benchmarks:} For our main results, we compare against state-of-the-art baselines that also generalize to different cost budgets similar to our method. These methods include \textbf{CDT}~\cite{liu_constrained_2023}, \textbf{CAPS}~\cite{chemingui_constraint-adaptive_2025}, \textbf{CCAC}~\cite{guo_constraint-conditioned_2025}, and \textbf{LSPC}~\cite{koirala_latent_2025}. An additional comparison with methods that do not adapt to different cost budgets is provided in the supplementary. We use the DSRL~\cite{liu_datasets_2024} dataset and environments. We evaluate our approach on all $38$ tasks, which include three types of  environments: \emph{SafetyGym}, \emph{BulletGym}, and \emph{MetaDrive}. The evaluation metrics include \emph{Normalized-Reward} and \emph{Normalized-Cost}, where the cost is normalized with respect to the cost threshold; that is, \emph{Normalized-Cost}~$> 1$ indicates an unsafe policy. More details on these metrics are in the Appendix. 

\subsection{Synthetic CMDPs}\label{sec:comp-optimal}
Under deterministic dynamics, our method matches the optimal solution; in stochastic settings, it can be slightly conservative. We test this on a discrete grid world where the optimal CMDP policy is computed via Linear Programming (LP) solvers. The agent starts at a fixed point and must reach a goal (appendix~\ref{app:grid-world} provides environment details.). Figure~\ref{fig:synthetic} compares our method with the LP solution and an unconstrained baseline. The X-axis is the probability of executing the intended action; lower values introduce more randomness. When this probability is too low, no feasible policy satisfies the budget, so the LP solver yields no solution (shown as missing plot segments). For all the cases, both methods satisfy the cost constraint, and our approach incurs only a small reward gap. As the environment becomes less noisy, our solution converges to the LP optimum. This shows that even in a highly stochastic setting, BCRL is fairly accurate.

\subsection{Offline Safe RL}\label{exp:dsrl}

\paragraph{Main Results.}
Table~\ref{tab:main-table} reports the main results under the standard DSRL evaluation settings. All algorithms are trained following the official DSRL benchmark protocols, with additional implementation details provided in the Appendix. We instantiate \our\ using IQL. In stochastic domains (e.g., SafetyGym, Bullet-SafetyGym), we use the stochastic variant, while for deterministic environments like MetaDrive, we report the deterministic version. Both variant results are detailed in the Appendix for completeness.
Overall, the results show \our\ consistently outperforms baselines while satisfying safety across all domains. \our\ produces safe policies in all 38 out of 38 tasks (indicated in \textbf{bold}), outperforming all baselines in 16 out of 38 tasks, and achieving strongly consistent results as highlighted in \blue{blue}. \our\ attains \textbf{highest average performance across all three benchmarks}.

Table~\ref{tab:main-table} focuses on baselines capable of adapting to different cost budgets. Additional comparisons with non–budget-conditioned baselines such as BC-Safe \cite{liu_datasets_2024}, BCQ-Lag, COptiDICE \cite{lee_coptidice_2022}, and CPQ \cite{xu_constraints_2022} are in the Appendix~\ref{app:exp}.
\textbf{Runtime:} Our algorithm was tested on an NVIDIA RTX 3090 GPU. As shown in Table~\ref{tab:run_time} (in appendix), it completes training and evaluation within a few minutes---significantly faster than baseline methods that require 2–3 hours under the same conditions. Although runtime comparisons are approximate, the results indicate notable efficiency gains of our method.

\noindent\textbf{Ablations:}
Our IQL variant is governed by three hyperparameters: the cost critic expectile ($\tauc$), the reward critic expectile ($\taur$), and the AWR policy temperature ($\beta$). Among these, the expectile values influence performance more, as also noted in~\cite{kostrikov_offline_2021}. Expectile ablation results are in Table~\ref{tab:expectile}, appendix~\ref{app:hyperparam}. Additionally, the appendix discusses how the quality of the learned cost critic affects overall performance and how inclusive the learned persistent safety set is.

\subsection{Evaluation on Real-World Maritime Data}\label{exp:real}

We test \our\ on a real-world {maritime navigation} task, learning a policy that imitates expert captains navigating congested routes while minimizing collision risk. 
We use the maritime traffic simulator, and the imitation-learning dataset from~\cite{pham_shipnavisim_2025}, augmented with reward and cost signals. The agent receives a sparse reward of $100$ for reaching the goal, and a cost of $1$ for entering a \textbf{close-quarter scenario} (getting close to another vessel within 555,m~\cite{pham_shipnavisim_2025}).

The dataset contains $\sim$2 years of AIS trajectories from vessels operating in the Singapore Strait. We simulate a \textbf{multi-agent environment} where surrounding vessels replay their historical (log-play) paths, while the RL-controlled ego agent starts from a historical position but must follow its learned policy. The agent observes the past five steps of its own and nearby vessels’ states and uses a delta-action space~\cite{gulino_waymax_2023}. We evaluate performance using the \textbf{average displacement error (ADE)}, the \textbf{close-quarter rate}, and the \textbf{success rate}. To model dense traffic, we select the top 20\% most congested scenarios. We additionally report \textbf{average acceleration and speed} to assess how closely the policy matches real-world maneuvering. Further environment and dataset details appear in Appendix~\ref{app:maritime}.

As shown in Table~\ref{tab:ship-nav}, \our\ reduces close-quarter events from 30\% to 26\%—a meaningful improvement given the high-risk nature of close encounters involving large cargo and tanker vessels. \our\ also achieves the \textbf{lowest ADE} ($\sim$52\,m) and the highest \textbf{success rate} (88\%). Some baselines reduce close-quarter rates further but at the cost of unrealistic deviations from expert trajectories or reduced task success. In contrast, \our\ maintains expert-like \textbf{acceleration and speed} profiles. Results are averaged over 3 seeds with 100 evaluation episodes each.

Figure~\ref{fig:ship-nav} illustrates qualitative behavior: the expert’s trajectory (dotted \textbf{black}), neighboring vessels (\blue{blue}), and learned-policy rollouts—\green{\our}, \textcolor{purple}{LSPC}, \textcolor{red}{CCAC}, and \textcolor{orange}{CAPS}. The red ellipse shows a close-quarter scenario where the expert gets too close to a \blue{neighbor}. \our\ avoids near-misses through smooth path and speed adjustments, whereas LSPC and CCAC exhibit unrealistic deviations. CAPS performs more reasonably but shows higher acceleration and lower success than \our. Additional results and videos are included in the supplementary material.

\section{Conclusion} 
\modified{
We introduced Budget-Conditioned Reachability (BCR), a novel framework that fundamentally decouples reward maximization from cumulative safety constraints in Constrained Markov Decision Processes (CMDPs). By defining a dynamic, safety-conditioned reachability set---tracked through a step-wise remaining budget---our approach prunes unsafe actions at each timestep and ensures reliable constraint satisfaction without resorting to unstable min-max adversarial optimization or computationally heavy generative models.
}

\modified{
A significant advantage of our framework is its plug-and-play compatibility with any standard offline RL algorithm (e.g., IQL), enabling the cost critics to be learned independently of the reward for improved training stability. Through extensive cross-domain validation---ranging from interpretable grid-worlds and the high-dimensional DSRL benchmark to a complex, real-world maritime navigation task---\our\ consistently matches or exceeds state-of-the-art performance while reliably ensuring safety.
}



\bibliography{content/zotero}
\onecolumn
\appendix
\section{Theoretical Results}

\subsection{Budget Update Functions for Fully Deterministic CMDPs}
\label{app:det-proof}

In fully deterministic CMDPs, where transitions and trajectories are uniquely determined by the policy and environment dynamics, the evolution of the budget can be defined directly through a budget update rules below.

\begin{definition}[Direct Budget-Tracking]
Let $\MD$ denote the Direct Budget Tracking \ourmdp\ defined by
\[
\MD(\mc{M}) := \bar{\mc{M}}(\mc{M}, f, g).
\]
Budget update functions are:
\[
f(s_0, \cta) = \cta,
\qquad
g(s,a,s',\ct) = \frac{\ct - c(s,a)}{\gamma}.
\]
\end{definition}

This formulation keeps track of the remaining budget, and the division by $\gamma$ accounts for the discounting effect of the CMDP constraint.
In a deterministic environment, tracking such a budget and restricting actions to the persistent safe action set $\Ap$ is both necessary and sufficient for satisfying the original cumulative cost constraint. That is, enforcing $\pi \in \PiP$ in $\MD$ is equivalent to satisfying the CMDP constraint in Eq.~\eqref{eq:cmdp_cost}. The formal theorem and proof are provided in Appendix~\ref{app:det-proof}. Therefore, during training we can restrict the policy to select actions only from the persistent safe action set $\Ap$ \eqref{eq:def_ap}, without explicitly enforcing the cumulative cost constraint \eqref{eq:cmdp_cost}.

\newcommand{\thmDetMain}{
Let $\MD(\mc{M})$ be the Direct Budget-Tracking \ourmdp\ defined above, and assume the underlying CMDP $\mc{M}$ is deterministic with cost budget $\cta \ge \vcs(s_0)$. Then,  
\begin{enumerate}
    \item Any policy $\pi \in \PiP$ induces trajectories that start in and remain entirely within the feasible subspace $\SbP$.
    \item Any deterministic policy $\pi$ satisfying the CMDP cost constraint (Eq.~\ref{eq:cmdp_cost}) belongs to $\PiP$.
    \item Conversely, if $\pi \in \PiP$, then it satisfies the cumulative cost constraint of the underlying CMDP $\mc{M}$.
\end{enumerate}
Therefore, in deterministic environments, it is sufficient to enforce $\pi \in \PiP$ instead of explicitly enforcing the cumulative cost constraint.
}

\begin{theorem}\label{thm:det-main}
\thmDetMain
\end{theorem}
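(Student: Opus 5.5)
The argument rests on one invariant. In a deterministic CMDP, for every $(s,a)$ with successor $s'$ we have $\qcs(s,a) = c(s,a) + \gamma\vcs(s')$. Here $\vcs(s')=\min_{a'}\qcs(s',a')$, and the minimum is attained as in Lemma~\ref{lemma:safe-actions-exist}. Each part is proved by induction along the unique trajectory generated from $s_0$. Stochasticity in $\pi$ is harmless for part 1, since every action in the support of $\pi$ is handled. For parts 2 and 3, I read ``fully deterministic'' as making both the dynamics and the policy deterministic, so $J_C(\pi)$ is the cost of a single path.

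\textbf{Part 1.} The base case is $\ct_0=\cta\ge\vcs(s_0)$, so $(s_0,\ct_0)\in\SbP$. For the inductive step, assume $(s_t,\ct_t)\in\SbP$ and let $a_t\in\Ap(s_t,\ct_t)$, which is nonempty by Lemma~\ref{lemma:safe-actions-exist}. Then $\qcs(s_t,a_t)\le \ct_t$. The one-step identity gives $\gamma\vcs(s_{t+1}) = \qcs(s_t,a_t)-c(s_t,a_t)\le \ct_t - c(s_t,a_t)$. Dividing by $\gamma$ yields $\vcs(s_{t+1})\le \ct_{t+1}$. Since costs are nonnegative, $\vcs\ge 0$, so $\ct_{t+1}\ge 0$ and the budget stays in $\ctdomain$. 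Hence $(s_{t+1},\ct_{t+1})\in\SbP$.

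\textbf{Part 3 (sufficiency).} Unrolling the budget recursion gives the telescoping identity
\[
\ct_t = \gamma^{-t}\Big(\cta - \sum_{k<t}\gamma^k c(s_k,a_k)\Big).
\]
By part 1, $\ct_t\ge \vcs(s_t)\ge 0$, so $\sum_{k<t}\gamma^k c(s_k,a_k)\le \cta$ for every $t$. Letting $t\to\infty$, monotone convergence (costs are nonnegative) gives $J_C(\pi)\le\cta$.

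\textbf{Part 2 (necessity).} Let $\pi$ be deterministic with $J_C(\pi)\le \cta$, and let $(s_t,a_t)$ be its trajectory. Define the tail cost $R_t=\sum_{k\ge t}\gamma^{k-t}c(s_k,a_k)$. The telescoping identity together with $J_C(\pi)=\sum_{k}\gamma^k c(s_k,a_k)\le\cta$ gives
\[
\ct_t=\gamma^{-t}\Big(\cta-\sum_{k<t}\gamma^k c(s_k,a_k)\Big)\ge \gamma^{-t}\sum_{k\ge t}\gamma^k c(s_k,a_k)=R_t .
\]
The tail of the trajectory is the trajectory of some policy started from $(s_t,a_t)$, so $R_t\ge \qcs(s_t,a_t)$ by the definition of $\qcs$ as a minimum. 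Therefore $\qcs(s_t,a_t)\le\ct_t$, i.e. $a_t\in\Ap(s_t,\ct_t)$, at every reached augmented state. This shows that $\pi$, viewed on $\bS$, satisfies the support condition of Definition~\ref{def:policy_set} on all reachable states.

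\textbf{Main obstacle.} The delicate point is in part 2, where membership in $\PiP$ is a condition on all of $\bS$ while we only control reached states. I would resolve it by reading $\PiP$ as restricted to states reachable from $\bmu$. Equivalently, one can modify $\pi$ off the reached trajectory to pick an action in $\Ap$, which is possible on $\SbP$ by Lemma~\ref{lemma:safe-actions-exist}; this modification does not change $J_C$. Part 2 also needs that the tail of a policy from $s_t$ is a valid competitor in the minimum defining $\qcs(s_t,a_t)$. This holds because, with deterministic dynamics, the history up to time $t$ is fixed, so a history-dependent policy induces a well-defined continuation from $s_t$. I would state explicitly that the minimum in $\qcs$ ranges over history-dependent policies, or invoke the standard fact that Markov policies achieve the same optimum.
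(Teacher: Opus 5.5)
Your proposal is correct and follows the paper's proof essentially step for step. Part 1 uses the same induction via $Q_C^*(s,a)=c(s,a)+\gamma V_C^*(s')$, part 2 the same unrolled budget identity $\gamma^t\delta_t=\delta_{\text{init}}-\sum_{k<t}\gamma^k c_k$ giving $\delta_t\ge R_t\ge Q_C^*(s_t,a_t)$ (the paper gets the last inequality through $V_C^*(s_{t+1})\le R_{t+1}$ rather than directly), and part 3 the same telescoping with $\delta_{T+1}\ge 0$. Your remark that part 2 only controls reached augmented states, so membership in $\Pi_P$ must be read on reachable states (or $\pi$ modified elsewhere), is a legitimate refinement of a point the paper passes over silently.
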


\begin{proof}
Let $\bM_{\mathrm{Det}}(\mc{M})$ and notation be as above, and assume $\mc{M}$ is deterministic. Denote by $(s_t,a_t)_{t\ge0}$ the state–action sequence induced by a deterministic policy $\pi$ started at $s_0$, and let $c_t := c(s_t,a_t)$. Recall the direct budget update
\(
\ct_{t+1} = \frac{\ct_t - c_t}{\gamma}, \qquad \ct_0 = \cta.
\)

\paragraph{(1) Any $\pi \in \PiP$ induces trajectories that remain in $\SbP$.}
Let $\pi \in \PiP$ and consider the trajectory starting from $(s_0,\ct_0)$. By assumption, $\vcs(s_0) \leq \cta$, and since $\ct_0 = \cta$, it follows that $s_0 \in \Sp(\ct_0)$. 
Since $\pi \in \PiP$ we get $\qcs(s_0,a_0)\le \ct_0$. Using the bellman-identity $\qcs(s_0,a_0)=c_0+\gamma\vcs(s_1)$ we obtain
\[
\vcs(s_1)=\frac{\qcs(s_0,a_0) - c_0}{\gamma} \le \frac{\ct_0 - c_0}{\gamma} =\ct_1.
\]
Since cost is always positive, $\vcs(s_1) \geq 0$, which implies $\ct_1 \in \sR^+$ and $s_1 \in \Sp(\ct_1)$. Therefore, by definition, $(s_1,\ct_1) \in \SbP$. Moreover, there exists at least one action at $s_1$ with $\qcs(\cdot) \leq \ct_1$. Repeating the same argument at each step shows that all subsequent augmented states $(s_t,\ct_t)$ remain in $\SbP$. Thus, trajectories under $\pi$ never leave $\SbP$.

\paragraph{(2) If $\pi$ satisfies the cumulative cost constraint then $\pi\in\PiP$.}
Assume the policy $\pi$ satisfies the cumulative cost constraint in $\mc{M}$, i.e.
\[
J(\tau)=\sum_{t=0}^\infty \gamma^t c_t \le \cta = \ct_0.
\]
Unrolling the budget recursion gives the identity
\[
\gamma^t \ct_t = \ct_0 - \sum_{i=0}^{t-1} \gamma^i c_i \qquad (t\ge1),
\]
hence, since $J(\tau)\le \ct_0$,
\[
\gamma^t \ct_t \;\ge\; \sum_{k=t}^\infty \gamma^k c_k \;=\; \gamma^t \sum_{k=t}^\infty \gamma^{k-t} c_k.
\]
Dividing by $\gamma^t$ yields
\[
\ct_t \ge \sum_{k=t}^\infty \gamma^{k-t} c_k \quad (=:\; R_t),
\]
where $R_t$ is the actual remaining discounted cost from time $t$. For the deterministic MDP we have for the chosen action
\[
\qcs(s_t,a_t) = c(s_t,a_t) + \gamma \vcs(s_{t+1}),
\]
and since $\vcs(s_{t+1})$ is the \emph{minimal} possible remaining cost from $s_{t+1}$, it holds that $\vcs(s_{t+1}) \le \sum_{k=t+1}^\infty \gamma^{k-(t+1)} c(s_k,a_k)$. Therefore
\[
\qcs(s_t,a_t) \le c(s_t,a_t) + \gamma \sum_{k=t+1}^\infty \gamma^{k-(t+1)} c_k = R_t \le \ct_t.
\]
Thus at every visited augmented state $(s_t,\ct_t)$ the chosen action satisfies $\qcs(s_t,a_t)\le\ct_t$, so $\pi\in\PiP$.

\paragraph{(3) If $\pi\in\PiP$ then $\pi$ satisfies the cumulative cost constraint.}
Assume $\pi\in\PiP$, i.e.\ for every $t$ the chosen action satisfies $\qcs(s_t,a_t)\le\ct_t$. By the direct budget update we have equality
\[
c(s_t, a_t) = \ct_t - \gamma \ct_{t+1},
\]
so multiplying by $\gamma^t$ and summing over $t\ge0$ gives the telescoping series
\[
\sum_{t=0}^T \gamma^t c(s_t,a_t) = \sum_{t=0}^T \big(\gamma^t \ct_t - \gamma^{t+1}\ct_{t+1}\big) = \ct_0 - \gamma^{T+1}\ct_{T+1}.
\]
Since $\ct_{T+1}\ge0$ for all $T$, letting $T\to\infty$ yields
\[
\sum_{t=0}^\infty \gamma^t c(s_t,a_t) \le \ct_0,
\]
so the cumulative cost constraint holds.

\paragraph{Conclusion.} Combining (1)--(3) we obtain the claimed equivalence in the deterministic setting: enforcing $\pi\in\PiP$ is both necessary and sufficient for satisfying the cumulative cost constraint, and any policy in $\PiP$ produces trajectories that stay in $\SbP$. Therefore, in deterministic environments it suffices to enforce $\pi\in\PiP$ instead of directly enforcing the cumulative cost constraint.
\end{proof}

\subsection{Stochastic Case}\label{app:stoch-proof}

\begin{lemma}[Reduction to Deterministic Budget-Tracking]
In a fully deterministic setting, the soft budget-tracking \ourmdp\ $\MS$ simplifies to the direct budget-tracking \ourmdp\ $\MD$.
\end{lemma}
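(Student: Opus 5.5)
We have to show that, under full determinism, the two functions $f$ and $g$ of $\MS$ coincide with those of $\MD$, at least along every trajectory that matters. Fully deterministic means three things: the transition $\T(\cdot\mid s,a)$ is a point mass at some $s'=\mathrm{next}(s,a)$, the policy is deterministic, and $\mu_0$ is a point mass at a single $s_0$. We compare $f$ and then $g$.

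\textbf{Initial budget.} Since $\mu_0$ is concentrated on $s_0$, we have $\E_{s\sim\mu_0}[\vcs(s)]=\vcs(s_0)$. Substituting into the soft initialization gives
\[
f(s_0,\cta)=\vcs(s_0)+\cta-\vcs(s_0)=\cta,
\]
which is exactly the direct initialization. If we only assume deterministic dynamics but allow a random start, this step fails. We therefore state the lemma with the paper's \emph{fully} deterministic convention, which includes a deterministic initial state. This mirrors the hypothesis $\cta\ge\vcs(s_0)$ in Theorem~\ref{thm:det-main}, which presupposes a single start state $s_0$.

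\textbf{Budget update.} With deterministic transitions, the Bellman optimality identity for the cost becomes $\qcs(s,a)=c(s,a)+\gamma\vcs(s')$, where $s'$ is the unique successor. This is the identity already used in part (1) of the proof of Theorem~\ref{thm:det-main}. Rearranging gives $\vcs(s')=(\qcs(s,a)-c(s,a))/\gamma$. Plugging this into the soft update $g(s,a,s',\ct)=\vcs(s')+(\ct-\qcs(s,a))/\gamma$ makes the $\qcs$ terms cancel:
\[
g(s,a,s',\ct)=\frac{\ct-c(s,a)}{\gamma}.
\]
This is the direct update. Because both $f$ and $g$ agree, the augmented initial distribution $\bmu$ and the augmented transition $\bT$ coincide. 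All remaining components ($\bS$, $A$, $\br$, $\bc$, $\gamma$, $\cta$) are identical by Definition~\ref{def:bmdp}, so $\MS=\MD$.

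\textbf{Main obstacle.} The delicate point is the domain of $g$. The identity $\qcs=c+\gamma\vcs(s')$ holds only when $s'$ is the actual successor of $(s,a)$. For any other $s'$, the two functions $g$ differ, but $\bT$ assigns such tuples probability zero because of the factor $\T(s'\mid s,a)$. We will therefore argue equality of the induced augmented transition kernels and of $\bmu$, not pointwise equality of $g$ on all of $S\times A\times S\times\ctdomain$. A secondary check is that the Bellman identity itself holds, that is, that the minimum in \eqref{eq:def_qcs} is attained. This is already assumed in Lemma~\ref{lemma:safe-actions-exist}.
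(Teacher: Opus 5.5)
Your proposal is correct and follows the paper's proof: substitute the deterministic Bellman identity $\qcs(s,a)=c(s,a)+\gamma\vcs(s')$ into the soft update $g$, and use $\vcs(s_0)=\E_{s\sim\mu_0}[\vcs(s)]$ to reduce $f$ to $\cta$. Your added remarks make explicit what the paper leaves implicit. The first is that this last equality requires $\mu_0$ to be a point mass. The second is that the two $g$'s agree only on the support of $\T(\cdot\mid s,a)$, so what is identified is $\bmu$ and $\bT$ rather than $g$ pointwise.
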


\begin{proof}
In a deterministic MDP, a well trained cost critic satisfies
\(
 \qcs(s,a) = c(s,a) + \gamma \vcs(s') 
\)
since there is no stochasticity and the Bellman equation reduces to the deterministic form.  
Substituting this into the soft budget-tracking update:
\[
g(s,a,s',\ct) = \vcs(s') + \frac{\ct - \qcs(s,a)}{\gamma}
= \frac{\ct - c(s,a)}{\gamma},
\]
which is exactly the budget update defined in the direct budget-tracking CMDP $\MD$.  

Similarly in the deterministic setting $\vcs(s_0) = \mathbb{E}_{s \sim \mu_0}[\vcs(s)]$. Therefore, the initial budget:
\[
f(s_0, \cta) = \vcs(s_0) + \cta - \mathbb{E}_{s \sim \mu_0}[\vcs(s)] = \cta,
\]

Thus, $\MS$ reduces to $\MD$ in the deterministic setting.
\end{proof}

\begingroup
\renewcommand\thetheorem{\ref{thm:stoch-main}}
\begin{theorem}
\thmStochMain{0}
\end{theorem}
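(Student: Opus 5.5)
The plan is to handle the three claims in order, using the soft budget update $\ct' = \vcs(s') + (\ct - \qcs(s,a))/\gamma$ together with the stochastic Bellman identity $\qcs(s,a) = c(s,a) + \gamma\,\E_{s'}[\vcs(s')]$.

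\textbf{Claim 1 (invariance of $\SbP$).} This goes by induction.

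\emph{Base case.} The initial budget is $\ct_0 = \vcs(s_0) + \cta - \E_{s\sim\mu_0}[\vcs(s)]$. The feasibility assumption \eqref{eq:assumption-soft} gives $\ct_0 \ge \vcs(s_0) \ge 0$, since costs are nonnegative. Hence $\ct_0\in\sR^+$, $s_0\in\Sp(\ct_0)$, and so $(s_0,\ct_0)\in\SbP$.

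\emph{Inductive step.} Suppose $(s,\ct)\in\SbP$ and $a\sim\pi(\cdot\mid(s,\ct))$ with $\pi\in\PiP$. Then $a\in\Ap(s,\ct)$, i.e.\ $\qcs(s,a)\le\ct$. For every realized $s'$,
\[
\ct' - \vcs(s') = \frac{\ct - \qcs(s,a)}{\gamma} \ge 0 .
\]
So $\ct'\ge\vcs(s')\ge0$ and $(s',\ct')\in\SbP$ for every successor, not just in expectation. Lemma~\ref{lemma:safe-actions-exist} guarantees that $\Ap(s',\ct')\ne\emptyset$, so $\pi$ is well defined at the next state and the induction continues.

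\textbf{Claim 2 (budget bounds expected cost).} The key step is that the budget is a supermartingale-like quantity. Taking the expectation of the update over $s'\sim\T(\cdot\mid s,a)$ and using the Bellman identity:
\[
\gamma\,\E[\ct_{t+1}\mid s_t,\ct_t,a_t] = \gamma\,\E[\vcs(s_{t+1})] + \ct_t - \qcs(s_t,a_t) = \ct_t - c(s_t,a_t).
\]
Thus $c(s_t,a_t) = \ct_t - \gamma\,\E[\ct_{t+1}\mid\cdot]$, which is exact in conditional expectation. This mirrors the telescoping identity in part (3) of Theorem~\ref{thm:det-main}.

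Multiplying by $\gamma^t$, taking full expectations under $\pi$ from $(s,\ct)$, and summing to $T$ gives
\[
\E\sum_{t=0}^T\gamma^t c_t = \ct - \gamma^{T+1}\E[\ct_{T+1}] \le \ct .
\]
The inequality uses $\ct_{T+1}\ge0$, which holds by Claim 1. Letting $T\to\infty$ with monotone convergence (costs are nonnegative) yields $J_C^\pi((s,\ct))\le\ct$.

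\textbf{Claim 3 (CMDP constraint).} Apply Claim 2 at the random initial augmented state and average over $\mu_0$:
\[
J_C(\pi) = \E_{s_0\sim\mu_0}\big[J_C^\pi((s_0,\ct_0))\big] \le \E_{s_0}[\ct_0] = \E[\vcs(s_0)] + \cta - \E[\vcs(s_0)] = \cta .
\]
Here $\pi$ is viewed as a policy on the original CMDP whose internal budget evolves deterministically from the history. This is legitimate because $\ct_t$ is a deterministic function of the trajectory.

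\textbf{Main obstacle.} The delicate step is the expectation manipulation in Claim 2. It needs integrability of $\ct_t$, which can grow like $\gamma^{-t}$. The fix is to keep the finite-$T$ identity, where every term is finite, and to discard the remainder $-\gamma^{T+1}\E[\ct_{T+1}]\le0$ rather than argue that it vanishes. This requires $\E[\ct_t]<\infty$, which follows by induction from the identity itself: $\gamma\,\E[\ct_{t+1}] = \E[\ct_t] - \E[c_t] \le \E[\ct_t]$. The base case uses that $\vcs$ is bounded by $\ctmax$ and, under \eqref{eq:assumption-soft}, $\ct_0 \le \ctmax + \cta$, so $\E[\ct_0]<\infty$.
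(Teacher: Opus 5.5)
Your proof is correct. Claims 1 and 3 match the paper's argument: the paper uses the same one-step induction $\ct_{t+1}-\vcs(s_{t+1}) = (\ct_t-\qcs(s_t,a_t))/\gamma\ge 0$ and the same averaging of $\ct_0$ over $\mu_0$.

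The real difference is in Claim 2. The paper argues backward. It proves by induction on the horizon $h$ that the truncated value satisfies $Q_C^{\pi,h}((s,\ct),a)\le\ct$ for all $(s,\ct)\in\SbP$ and all supported actions $a$. The step is $Q_C^{\pi,h+1}((s,\ct),a) = c(s,a) + \gamma\E_{s'}[V_C^{\pi,h}((s',\ct'))] \le c(s,a)+\gamma\E_{s'}[\ct'] = \ct$, and it then lets $h\to\infty$. You argue forward: you show that $\gamma^t\ct_t$ loses exactly $\gamma^t c_t$ in conditional expectation at each step, and you telescope along the trajectory, mirroring part (3) of the deterministic theorem. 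Both arguments rest on the same one-step identity $c(s,a)+\gamma\E_{s'}[\ct']=\ct$, which combines the soft update with the Bellman equation for $\qcs$.

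Your route buys an exact decomposition, $\E\sum_{t\le T}\gamma^t c_t = \ct - \gamma^{T+1}\E[\ct_{T+1}]$. This makes the slack in the bound explicit. It also shows that membership in $\PiP$ is used only to keep the budgets nonnegative. The price is integrability bookkeeping for $\ct_t$, which can grow like $\gamma^{-t}$; you handle this correctly by keeping finite $T$ and discarding the nonnegative remainder.

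The paper's backward induction avoids that bookkeeping. It only takes the one-step expectation $\E_{s'}[\ct']$, which is finite because $\vcs$ is bounded, and it works with truncated, bounded costs. In exchange it must carry the hypothesis uniformly over $\SbP$ and over supported actions. The paper's phrase ``for every action $a$'' should be read in that restricted sense, whereas your version never needs to quantify over actions at all.
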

\addtocounter{theorem}{-1}
\endgroup

\begin{proof}
We prove each property separately:

\paragraph{1. Trajectories remain in $\SbP$.}  
Consider any policy $\pi \in \PiP$ and the augmented state trajectory $\{(s_t,\ct_t)\}_{t\ge 0}$ it induces.  
From Assumption~\ref{eq:assumption-soft}, we have $\cta \ge \E_{s \sim \mu_0}[V_C^*(s)]$. 
Substituting this into the budget initialization function $f$ gives, \[
\ct_0 = \vcs(s_0) + \cta - \E_{s \sim \mu_0}[V_C^*(s)] \ge V_C^*(s_0),
\] and therefore $s_0 \in \Sp$, and $(s_0,\ct_0) \in \SbP$.

At each time step $t$, the policy selects an action $a_t \in \Ap(\ct_t)$, which ensures $Q_C^*(s_t,a_t)\le \ct_t$.  
The budget update in $\MS$ gives
\[
\ct_{t+1} = V_C^*(s_{t+1}) + \frac{\ct_t - Q_C^*(s_t,a_t)}{\gamma} \ge V_C^*(s_{t+1}),
\]
so $(s_{t+1},\ct_{t+1}) \in \SbP$.  

By induction, all subsequent augmented states remain in $\SbP$. Therefore, trajectories under any $\pi \in \PiP$ start in and remain entirely within the feasible subspace $\SbP$.

\paragraph{2. Pointwise budget satisfaction.}
Fix an augmented state \((s,\ct)\in\SbP\) and a policy \(\pi\in\PiP\). By \((s,\ct)\in\SbP\) we have $s\in \Sp(\ct)$ and \(\ct \ge V_C^*(s)\), and by \(\pi\in\PiP\) the support condition
\(
\mathrm{supp}\,\pi(\cdot\mid(s,\ct)) \subseteq \{a:\; Q_C^*(s,a)\le\ct\}
\)
holds. 

Define the finite-horizon truncated action-value in the augmented CMDP:
\[
Q_C^{\pi,h}((s,\ct),a)
:= \E_{\pi}\Big[\sum_{t=0}^{h} \gamma^t c(s_t,a_t)\ \Big|\ (s_0,\ct_0)=(s,\ct),\,a_0=a\Big],
\]
and the truncated state-value
\[
V_C^{\pi,h}((s,\ct)) := \E_{a\sim\pi(\cdot\mid(s,\ct))}\big[ Q_C^{\pi,h}((s,\ct),a)\big].
\]

We show by induction on the horizon \(h\) that for every action \(a\) and every \(h\ge0\),
\begin{equation}\label{eq:trunc-bound}
Q_C^{\pi,h}((s,\ct),a) \le \ct.
\end{equation}

\paragraph{Base ($h=0$).} \(Q_C^{\pi,0}((s,\ct),a)=c(s,a)\). Since \(c(s,a)\le Q_C^*(s,a)\) and actions in the support satisfy \(Q_C^*(s,a)\le\ct\), we get \(Q_C^{\pi,0}((s,\ct),a)\le\ct\).

\paragraph{Inductive step.} Assume \eqref{eq:trunc-bound} holds for horizon \(h\). Consider horizon \(h+1\). Using the recursion for truncated values and the soft budget update
\(
\ct' \;=\; V_C^*(s') \;+\; \frac{\ct - Q_C^*(s,a)}{\gamma},
\)
we have
\begin{align*}
Q_C^{\pi,h+1}((s,\ct),a)
&= c(s,a) + \gamma\,\E_{s'\sim \T(\cdot\mid s,a)}\big[ V_C^{\pi,h}((s',\ct')) \big] \\
&\le c(s,a) + \gamma\,\E_{s'}\big[ \ct' \big] 
\qquad\text{(by the inductive hypothesis applied at each }(s',\ct'))\\
&= c(s,a) + \gamma\,\E_{s'}\!\Big[ V_C^*(s') + \frac{\ct - Q_C^*(s,a)}{\gamma} \Big] \\
&= c(s,a) + \gamma\,\E_{s'}[V_C^*(s')] + (\ct - Q_C^*(s,a)) \\
&= Q_C^*(s,a) + (\ct - Q_C^*(s,a)) \qquad\text{(Bellman equation for }Q_C^*)\\
&= \ct,
\end{align*}
so \eqref{eq:trunc-bound} holds for \(h+1\).

\paragraph{Pass to the limit.} Letting \(h\to\infty\) (justified by \(\gamma\in[0,1)\) and nonnegative bounded costs) yields
\[
Q_C^{\pi}((s,\ct),a) \le \ct \qquad\forall a\in A,
\]
and therefore
\begin{equation}\label{eq:point-budget-satisfy}
V_C^\pi((s,\ct)) = \E_{a\sim\pi(\cdot\mid(s,\ct))}\big[Q_C^\pi((s,\ct),a)\big] \le \ct.
\end{equation}
By definition \(J_C^\pi((s,\ct)) = V_C^\pi((s,\ct))\), hence \(J_C^\pi((s,\ct)) \le \ct\). This holds for every \((s,\ct)\in\SbP\), proving the theorem.

\paragraph{3. Global CMDP feasibility}
Based on the budget initialization function for every $s_0$ the initialized budget
\[
\ct_0 \;=\; V_C^*(s_0) + \cta - \E_{s \sim \mu_0}[V_C^*(s)]
\]
Taking expectation over $s_0\sim\mu_0$ gives
\begin{align}
J_C(\pi) &= \E_{(s_0, \ct_0)\sim \bmu}\big[J_C^\pi((s_0,\ct_0))\big] \\
&= \E_{(s_0, \ct_0)\sim \bmu}\big[V_C^\pi((s_0,\ct_0))\big] \\
&\le \E_{(s_0, \ct_0)\sim \bmu}[\ct_0] \quad \text{From Eq.~\ref{eq:point-budget-satisfy}} \\
&= \E_{s_0 \sim \mu_0}\big[\vcs(s_0) + \cta - \E_{s \sim \mu_0} [\vcs(s)]\big]\\
&=  \cta.
\end{align}
This completes the proof and shows that $\pi \in \PiP$ satisfy the cumulative cost constraint in the underlying CMDP.
\end{proof}


\subsection{Extension to Multiple Cost Objectives}
\label{app:multiple_costs}

In many real-world applications, agents must satisfy multiple distinct safety constraints simultaneously (e.g., avoiding collisions while maintaining battery levels). Our \textit{Budget-Conditioned Reachability} (\our) framework naturally extends to this setting by augmenting the state space with a \textit{budget vector}. Let there be $N$ distinct cost functions $\{c_i\}_{i=1}^N$, where each $c_i: S \times A \rightarrow \mathbb{R}^+$, and a corresponding vector of initial cost thresholds $\boldsymbol{\delta}_{init} = [\delta_{init, 1}, \dots, \delta_{init, N}]^T$.

\paragraph{Augmented State Space}
The augmented state space is defined as $\bar{S} := S \times (\mathbb{R}^+)^N$. A state is represented as $(s, \boldsymbol{\delta})$, where $\boldsymbol{\delta} \in \mathbb{R}^N$ tracks the remaining budget for each constraint independently.

\paragraph{Persistent Safety Set Intersection}
The budget-conditioned persistent safety set is defined as the intersection of the feasible sets for each individual cost objective. For a budget vector $\boldsymbol{\delta}$, the safe state set is:
\begin{equation}
    S_P(\boldsymbol{\delta}) := \bigcap_{i=1}^N \{ s \in S \mid V_{C_i}^*(s) \leq \delta_i \}
\end{equation}
Similarly, the safe action set $A_P(s, \boldsymbol{\delta})$ is the intersection of actions safe for all objectives:
\begin{equation}
    A_P(s, \boldsymbol{\delta}) := \bigcap_{i=1}^N \{ a \in A \mid Q_{C_i}^*(s, a) \leq \delta_i \}
\end{equation}

\paragraph{Vectorized Budget Initialization and Update}
The budget is initialized and updated using vector-valued functions $\mathbf{f}$ and $\mathbf{g}$, which generalize the scalar functions from the main text.

The initialization is given by $\boldsymbol{\delta}_0 = \mathbf{f}(s_0, \boldsymbol{\delta}_{init})$, where $\mathbf{f}: S \times (\mathbb{R}^+)^N \rightarrow (\mathbb{R}^+)^N$. Typically, this operates element-wise for each objective $i$:
\begin{equation}
    \delta_{0,i} = f_i(s_0, \delta_{init, i})
\end{equation}

The budget update rule is defined by $\boldsymbol{\delta}' = \mathbf{g}(s, a, s', \boldsymbol{\delta})$, where $\mathbf{g}: S \times A \times S \times (\mathbb{R}^+)^N \rightarrow (\mathbb{R}^+)^N$. For independent constraints, this update is applied element-wise using the scalar update function $g_i$ for each cost component:
\begin{equation}
    \delta'_i = g_i(s, a, s', \delta_i)
\end{equation}

\paragraph{Training Considerations}
We learn $N$ independent cost-critics $\{Q_{C_i}^*, V_{C_i}^*\}_{i=1}^N$, one for each cost type. Since the persistent safety sets are computed independently of the policy, these critics can be trained in parallel. During the policy learning phase, we sample a budget vector $\boldsymbol{\delta}$ such that $\delta_i \sim \mathcal{U}_{[Q_{C_i}^*(s,a), \delta_{max, i}]}$ for all $i$, ensuring the augmented transition remains within the intersection of all persistent safety sets.

\section{Additional Experiments}\label{app:exp}

In additional experiments, we:  
\textbf{(1)} main results for both stochastic and deterministic approaches;  
\textbf{(2)} include an additional comparison with algorithms that are not adaptive to different cost budgets; 
\textbf{(2)} more details on the real-world maritime navigation experiment;
\textbf{(4) } provide an ablation study on the expectile parameters for reward and cost ($\tau_R$, $\tau_C$), the temperature parameter $\beta$ of IQL instantiation and the number of budget samples $n$ used during training;  
\textbf{(5)} evaluate the quality of the learned feasible action set; 
\textbf{(6)}compare our results with the best policy obtained by LSPC to address the discrepancy discussed in Section~\ref{app:lspc}.

\subsection{Main Results with Stochastic and Deterministic Versions of Our Algorithm}

\begin{table*}[t]
    \centering
\resizebox{\textwidth}{!}
{
\small 
\begin{tabular}{lllllllllll}
\hline
 & \multicolumn{2}{c}{CDT} & \multicolumn{2}{c}{CAPS} & \multicolumn{2}{c}{CCAC} & \multicolumn{2}{c}{LSPC} & \multicolumn{2}{c}{\our\ (Ours)} \\
 & reward $\uparrow$ & cost $\downarrow$ & reward $\uparrow$ & cost $\downarrow$ & reward $\uparrow$ & cost $\downarrow$ & reward $\uparrow$ & cost $\downarrow$ & reward $\uparrow$ & cost $\downarrow$ \\
\hline
PointButton1 & \gray{0.53$_{\pm 0.01}$} & \gray{1.68$_{\pm 0.13}$} & \textbf{0.04$_{\pm 0.15}$} & \textbf{0.58$_{\pm 1.42}$} & \gray{0.68$_{\pm 0.14}$} & \gray{4.31$_{\pm 3.20}$} & \textbf{\blue{0.09$_{\pm 0.20}$}} & \textbf{\blue{0.69$_{\pm 1.03}$}} & \textbf{0.07$_{\pm 0.18}$} & \textbf{0.79$_{\pm 2.10}$} \\
PointButton2 & \gray{0.46$_{\pm 0.01}$} & \gray{1.57$_{\pm 0.10}$} & \textbf{\blue{0.12$_{\pm 0.20}$}} & \textbf{\blue{0.96$_{\pm 2.35}$}} & \gray{0.63$_{\pm 0.11}$} & \gray{4.82$_{\pm 3.61}$} & \gray{0.19$_{\pm 0.21}$} & \gray{1.39$_{\pm 2.53}$} & \textbf{\blue{0.12$_{\pm 0.22}$}} & \textbf{\blue{0.81$_{\pm 1.36}$}} \\
PointCircle1 & \textbf{\blue{0.59$_{\pm 0.00}$}} & \textbf{\blue{0.69$_{\pm 0.04}$}} & \textbf{0.50$_{\pm 0.12}$} & \textbf{0.68$_{\pm 1.51}$} & \gray{0.58$_{\pm 0.13}$} & \gray{2.22$_{\pm 1.05}$} & \gray{0.56$_{\pm 0.35}$} & \gray{3.49$_{\pm 3.55}$} & \textbf{0.55$_{\pm 0.17}$} & \textbf{0.86$_{\pm 0.21}$} \\
PointCircle2 & \gray{0.64$_{\pm 0.01}$} & \gray{1.05$_{\pm 0.08}$} & \textbf{0.51$_{\pm 0.21}$} & \textbf{0.76$_{\pm 0.47}$} & \gray{0.18$_{\pm 0.50}$} & \gray{1.42$_{\pm 1.09}$} & \gray{0.67$_{\pm 0.24}$} & \gray{4.18$_{\pm 4.56}$} & \textbf{\blue{0.58$_{\pm 0.10}$}} & \textbf{\blue{0.74$_{\pm 0.34}$}} \\
PointGoal1 & \gray{0.69$_{\pm 0.02}$} & \gray{1.12$_{\pm 0.07}$} & \textbf{0.46$_{\pm 0.28}$} & \textbf{0.63$_{\pm 0.81}$} & \gray{0.55$_{\pm 0.26}$} & \gray{1.88$_{\pm 1.72}$} & \textbf{0.26$_{\pm 0.27}$} & \textbf{0.23$_{\pm 0.45}$} & \textbf{\blue{0.56$_{\pm 0.25}$}} & \textbf{\blue{0.70$_{\pm 0.65}$}} \\
PointGoal2 & \gray{0.59$_{\pm 0.03}$} & \gray{1.34$_{\pm 0.05}$} & \textbf{0.34$_{\pm 0.23}$} & \textbf{0.80$_{\pm 0.74}$} & \gray{0.41$_{\pm 0.36}$} & \gray{3.60$_{\pm 3.37}$} & \textbf{0.23$_{\pm 0.24}$} & \textbf{0.50$_{\pm 0.79}$} & \textbf{\blue{0.38$_{\pm 0.21}$}} & \textbf{\blue{0.77$_{\pm 0.68}$}} \\
PointPush1 & \textbf{\blue{0.24$_{\pm 0.02}$}} & \textbf{\blue{0.48$_{\pm 0.05}$}} & \textbf{0.15$_{\pm 0.17}$} & \textbf{0.33$_{\pm 0.66}$} & \gray{0.02$_{\pm 0.33}$} & \gray{1.42$_{\pm 5.37}$} & \textbf{0.14$_{\pm 0.18}$} & \textbf{0.37$_{\pm 0.77}$} & \textbf{0.14$_{\pm 0.16}$} & \textbf{0.47$_{\pm 0.79}$} \\
PointPush2 & \textbf{\blue{0.21$_{\pm 0.04}$}} & \textbf{\blue{0.65$_{\pm 0.03}$}} & \textbf{0.12$_{\pm 0.19}$} & \textbf{0.60$_{\pm 1.12}$} & \textbf{-0.18$_{\pm 0.66}$} & \textbf{1.00$_{\pm 1.70}$} & \textbf{0.12$_{\pm 0.17}$} & \textbf{0.80$_{\pm 2.65}$} & \textbf{0.14$_{\pm 0.18}$} & \textbf{0.71$_{\pm 1.11}$} \\
CarButton1 & \gray{0.21$_{\pm 0.02}$} & \gray{1.60$_{\pm 0.12}$} & \textbf{\blue{-0.02$_{\pm 0.17}$}} & \textbf{\blue{0.26$_{\pm 0.53}$}} & \gray{0.45$_{\pm 0.26}$} & \gray{11.31$_{\pm 8.91}$} & \textbf{-0.03$_{\pm 0.23}$} & \textbf{0.59$_{\pm 1.42}$} & \textbf{-0.06$_{\pm 0.18}$} & \textbf{0.34$_{\pm 0.76}$} \\
CarButton2 & \gray{0.13$_{\pm 0.01}$} & \gray{1.58$_{\pm 0.02}$} & \textbf{\blue{-0.07$_{\pm 0.18}$}} & \textbf{\blue{0.59$_{\pm 2.07}$}} & \gray{0.53$_{\pm 0.26}$} & \gray{9.82$_{\pm 9.00}$} & \textbf{-0.13$_{\pm 0.28}$} & \textbf{0.93$_{\pm 2.06}$} & \textbf{-0.20$_{\pm 0.26}$} & \textbf{0.27$_{\pm 0.62}$} \\
CarCircle1 & \gray{0.60$_{\pm 0.01}$} & \gray{1.73$_{\pm 0.04}$} & \gray{0.55$_{\pm 0.16}$} & \gray{1.54$_{\pm 1.63}$} & \gray{0.49$_{\pm 0.14}$} & \gray{5.13$_{\pm 3.53}$} & \gray{0.46$_{\pm 0.25}$} & \gray{1.86$_{\pm 3.14}$} & \textbf{\blue{0.50$_{\pm 0.12}$}} & \textbf{\blue{0.48$_{\pm 0.43}$}} \\
CarCircle2 & \gray{0.66$_{\pm 0.00}$} & \gray{2.53$_{\pm 0.03}$} & \gray{0.50$_{\pm 0.18}$} & \gray{1.56$_{\pm 2.32}$} & \gray{0.54$_{\pm 0.08}$} & \gray{2.75$_{\pm 4.36}$} & \gray{0.43$_{\pm 0.27}$} & \gray{2.40$_{\pm 4.55}$} & \textbf{\blue{0.48$_{\pm 0.15}$}} & \textbf{\blue{0.44$_{\pm 0.48}$}} \\
CarGoal1 & \gray{0.66$_{\pm 0.01}$} & \gray{1.21$_{\pm 0.17}$} & \textbf{0.32$_{\pm 0.26}$} & \textbf{0.33$_{\pm 0.53}$} & \gray{0.84$_{\pm 0.08}$} & \gray{1.81$_{\pm 1.91}$} & \textbf{0.27$_{\pm 0.26}$} & \textbf{0.24$_{\pm 0.44}$} & \textbf{\blue{0.43$_{\pm 0.27}$}} & \textbf{\blue{0.43$_{\pm 0.58}$}} \\
CarGoal2 & \gray{0.48$_{\pm 0.01}$} & \gray{1.25$_{\pm 0.14}$} & \textbf{0.16$_{\pm 0.21}$} & \textbf{0.71$_{\pm 1.91}$} & \gray{0.90$_{\pm 0.12}$} & \gray{5.53$_{\pm 5.01}$} & \textbf{0.16$_{\pm 0.21}$} & \textbf{0.41$_{\pm 0.77}$} & \textbf{\blue{0.19$_{\pm 0.20}$}} & \textbf{\blue{0.47$_{\pm 0.92}$}} \\
CarPush1 & \textbf{\blue{0.31$_{\pm 0.01}$}} & \textbf{\blue{0.40$_{\pm 0.10}$}} & \textbf{0.19$_{\pm 0.13}$} & \textbf{0.34$_{\pm 1.09}$} & \textbf{-0.12$_{\pm 0.70}$} & \textbf{0.59$_{\pm 0.97}$} & \textbf{0.18$_{\pm 0.15}$} & \textbf{0.41$_{\pm 1.29}$} & \textbf{0.19$_{\pm 0.14}$} & \textbf{0.24$_{\pm 0.74}$} \\
CarPush2 & \gray{0.19$_{\pm 0.01}$} & \gray{1.30$_{\pm 0.16}$} & \textbf{0.05$_{\pm 0.14}$} & \textbf{0.71$_{\pm 1.66}$} & \gray{0.12$_{\pm 0.31}$} & \gray{6.06$_{\pm 9.53}$} & \textbf{\blue{0.07$_{\pm 0.15}$}} & \textbf{\blue{0.80$_{\pm 1.41}$}} & \textbf{0.05$_{\pm 0.17}$} & \textbf{0.52$_{\pm 1.07}$} \\
SwimmerVelocity & \textbf{\blue{0.66$_{\pm 0.01}$}} & \textbf{\blue{0.96$_{\pm 0.08}$}} & \gray{0.44$_{\pm 0.22}$} & \gray{1.66$_{\pm 2.37}$} & \textbf{-0.08$_{\pm 0.02}$} & \textbf{0.00$_{\pm 0.01}$} & \gray{0.46$_{\pm 0.21}$} & \gray{1.28$_{\pm 2.60}$} & \textbf{0.49$_{\pm 0.18}$} & \textbf{0.99$_{\pm 0.37}$} \\
HopperVelocity & \textbf{0.63$_{\pm 0.06}$} & \textbf{0.61$_{\pm 0.08}$} & \textbf{0.44$_{\pm 0.26}$} & \textbf{0.75$_{\pm 0.83}$} & \textbf{0.46$_{\pm 0.38}$} & \textbf{0.50$_{\pm 0.55}$} & \gray{0.21$_{\pm 0.04}$} & \gray{1.25$_{\pm 1.20}$} & \textbf{\blue{0.66$_{\pm 0.23}$}} & \textbf{\blue{0.80$_{\pm 0.35}$}} \\
HalfCheetahVelocity & \textbf{\blue{1.00$_{\pm 0.01}$}} & \textbf{\blue{0.01$_{\pm 0.01}$}} & \textbf{0.94$_{\pm 0.04}$} & \textbf{0.77$_{\pm 0.13}$} & \textbf{0.94$_{\pm 0.04}$} & \textbf{0.94$_{\pm 0.07}$} & \textbf{0.97$_{\pm 0.02}$} & \textbf{1.00$_{\pm 1.04}$} & \textbf{0.93$_{\pm 0.07}$} & \textbf{0.62$_{\pm 0.27}$} \\
Walker2dVelocity & \textbf{0.78$_{\pm 0.09}$} & \textbf{0.06$_{\pm 0.34}$} & \textbf{\blue{0.80$_{\pm 0.04}$}} & \textbf{\blue{0.70$_{\pm 0.51}$}} & \gray{0.17$_{\pm 0.28}$} & \gray{1.08$_{\pm 1.90}$} & \textbf{0.79$_{\pm 0.01}$} & \textbf{0.17$_{\pm 0.24}$} & \textbf{0.79$_{\pm 0.06}$} & \textbf{0.14$_{\pm 0.35}$} \\
AntVelocity & \textbf{\blue{0.98$_{\pm 0.00}$}} & \textbf{\blue{0.39$_{\pm 0.12}$}} & \textbf{0.95$_{\pm 0.04}$} & \textbf{0.64$_{\pm 0.21}$} & \textbf{0.51$_{\pm 0.41}$} & \textbf{0.59$_{\pm 0.34}$} & \textbf{0.97$_{\pm 0.05}$} & \textbf{0.14$_{\pm 0.15}$} & \textbf{0.97$_{\pm 0.03}$} & \textbf{0.34$_{\pm 0.14}$} \\
\hline
SafetyGym Average (21 tasks) & \gray{0.54$_{\pm 0.21}$} & \gray{1.06$_{\pm 0.59}$} & \textbf{0.36$_{\pm 0.34}$} & \textbf{0.76$_{\pm 1.43}$} & \gray{0.41$_{\pm 0.46}$} & \gray{3.18$_{\pm 5.24}$} & \gray{0.34$_{\pm 0.19}$} & \gray{1.10$_{\pm 1.75}$} & \textbf{\blue{0.38$_{\pm 0.17}$}} & \textbf{\blue{0.57$_{\pm 0.68}$}} \\
\hline
\hline
BallRun & \gray{0.39$_{\pm 0.09}$} & \gray{1.16$_{\pm 0.19}$} & \textbf{0.18$_{\pm 0.09}$} & \textbf{0.94$_{\pm 0.73}$} & \textbf{\blue{0.40$_{\pm 0.12}$}} & \textbf{\blue{0.84$_{\pm 0.31}$}} & \textbf{0.15$_{\pm 0.01}$} & \textbf{0.00$_{\pm 0.00}$} & \textbf{0.20$_{\pm 0.02}$} & \textbf{0.06$_{\pm 0.10}$} \\
CarRun & \textbf{\blue{0.99$_{\pm 0.01}$}} & \textbf{\blue{0.65$_{\pm 0.31}$}} & \textbf{0.97$_{\pm 0.01}$} & \textbf{0.23$_{\pm 0.25}$} & \textbf{0.97$_{\pm 0.04}$} & \textbf{0.68$_{\pm 0.22}$} & \textbf{0.98$_{\pm 0.01}$} & \textbf{0.84$_{\pm 1.04}$} & \textbf{0.98$_{\pm 0.01}$} & \textbf{0.98$_{\pm 0.72}$} \\
DroneRun & \textbf{\blue{0.63$_{\pm 0.04}$}} & \textbf{\blue{0.79$_{\pm 0.68}$}} & \gray{0.45$_{\pm 0.10}$} & \gray{2.91$_{\pm 4.46}$} & \gray{0.46$_{\pm 0.18}$} & \gray{4.36$_{\pm 4.32}$} & \textbf{0.59$_{\pm 0.06}$} & \textbf{0.93$_{\pm 1.51}$} & \textbf{0.58$_{\pm 0.03}$} & \textbf{0.54$_{\pm 0.91}$} \\
AntRun & \textbf{\blue{0.72$_{\pm 0.04}$}} & \textbf{\blue{0.91$_{\pm 0.42}$}} & \textbf{0.61$_{\pm 0.13}$} & \textbf{0.78$_{\pm 0.59}$} & \textbf{0.12$_{\pm 0.09}$} & \textbf{0.04$_{\pm 0.08}$} & \gray{0.66$_{\pm 0.13}$} & \gray{1.43$_{\pm 1.25}$} & \textbf{0.68$_{\pm 0.11}$} & \textbf{0.95$_{\pm 0.29}$} \\
BallCircle & \gray{0.77$_{\pm 0.06}$} & \gray{1.07$_{\pm 0.27}$} & \textbf{0.69$_{\pm 0.10}$} & \textbf{0.59$_{\pm 0.23}$} & \textbf{\blue{0.78$_{\pm 0.05}$}} & \textbf{\blue{0.46$_{\pm 0.31}$}} & \gray{0.62$_{\pm 0.28}$} & \gray{1.17$_{\pm 1.95}$} & \textbf{0.69$_{\pm 0.11}$} & \textbf{0.97$_{\pm 0.27}$} \\
CarCircle & \textbf{\blue{0.75$_{\pm 0.06}$}} & \textbf{\blue{0.95$_{\pm 0.61}$}} & \textbf{0.70$_{\pm 0.10}$} & \textbf{0.66$_{\pm 0.27}$} & \textbf{0.72$_{\pm 0.04}$} & \textbf{0.77$_{\pm 0.44}$} & \gray{0.78$_{\pm 0.12}$} & \gray{1.88$_{\pm 3.23}$} & \textbf{0.53$_{\pm 0.17}$} & \textbf{0.51$_{\pm 0.51}$} \\
DroneCircle & \textbf{\blue{0.63$_{\pm 0.07}$}} & \textbf{\blue{0.98$_{\pm 0.27}$}} & \textbf{0.55$_{\pm 0.06}$} & \textbf{0.65$_{\pm 0.29}$} & \textbf{0.29$_{\pm 0.27}$} & \textbf{0.63$_{\pm 0.70}$} & \gray{0.60$_{\pm 0.02}$} & \gray{1.31$_{\pm 0.86}$} & \textbf{0.42$_{\pm 0.12}$} & \textbf{0.52$_{\pm 0.32}$} \\
AntCircle & \gray{0.54$_{\pm 0.20}$} & \gray{1.78$_{\pm 4.33}$} & \textbf{0.37$_{\pm 0.18}$} & \textbf{0.15$_{\pm 0.25}$} & \textbf{\blue{0.61$_{\pm 0.14}$}} & \textbf{\blue{0.75$_{\pm 0.90}$}} & \textbf{0.44$_{\pm 0.14}$} & \textbf{0.53$_{\pm 0.93}$} & \textbf{0.56$_{\pm 0.17}$} & \textbf{0.79$_{\pm 0.60}$} \\
\hline
BulletGym Average (8 tasks) & \gray{0.68$_{\pm 0.19}$} & \gray{1.04$_{\pm 1.65}$} & \textbf{0.57$_{\pm 0.25}$} & \textbf{0.86$_{\pm 1.82}$} & \gray{0.54$_{\pm 0.30}$} & \gray{1.07$_{\pm 2.04}$} & \gray{0.60$_{\pm 0.10}$} & \gray{1.01$_{\pm 1.35}$} & \textbf{\blue{0.58$_{\pm 0.09}$}} & \textbf{\blue{0.67$_{\pm 0.46}$}} \\
\hline
\hline
easysparse & \textbf{0.17$_{\pm 0.14}$} & \textbf{0.23$_{\pm 0.32}$} & \textbf{0.12$_{\pm 0.20}$} & \textbf{0.37$_{\pm 0.43}$} & \textbf{-0.06$_{\pm 0.00}$} & \textbf{0.10$_{\pm 0.05}$} & \gray{0.79$_{\pm 0.14}$} & \gray{1.34$_{\pm 1.43}$} & \textbf{\blue{0.70$_{\pm 0.19}$}} & \textbf{\blue{0.94$_{\pm 0.17}$}} \\
easymean & \textbf{0.45$_{\pm 0.11}$} & \textbf{0.54$_{\pm 0.55}$} & \textbf{0.02$_{\pm 0.07}$} & \textbf{0.21$_{\pm 0.23}$} & \textbf{-0.06$_{\pm 0.01}$} & \textbf{0.07$_{\pm 0.08}$} & \gray{0.82$_{\pm 0.06}$} & \gray{1.33$_{\pm 0.86}$} & \textbf{\blue{0.73$_{\pm 0.17}$}} & \textbf{\blue{0.94$_{\pm 0.13}$}} \\
easydense & \textbf{0.32$_{\pm 0.18}$} & \textbf{0.62$_{\pm 0.43}$} & \textbf{0.11$_{\pm 0.15}$} & \textbf{0.16$_{\pm 0.17}$} & \textbf{-0.06$_{\pm 0.00}$} & \textbf{0.07$_{\pm 0.04}$} & \gray{0.84$_{\pm 0.10}$} & \gray{1.88$_{\pm 1.20}$} & \textbf{\blue{0.70$_{\pm 0.17}$}} & \textbf{\blue{0.90$_{\pm 0.16}$}} \\
mediumsparse & \gray{0.87$_{\pm 0.11}$} & \gray{1.10$_{\pm 0.26}$} & \textbf{0.59$_{\pm 0.36}$} & \textbf{0.74$_{\pm 0.97}$} & \textbf{-0.08$_{\pm 0.00}$} & \textbf{0.07$_{\pm 0.04}$} & \gray{0.97$_{\pm 0.02}$} & \gray{1.24$_{\pm 0.68}$} & \textbf{\blue{0.94$_{\pm 0.07}$}} & \textbf{\blue{0.82$_{\pm 0.32}$}} \\
mediummean & \textbf{0.45$_{\pm 0.39}$} & \textbf{0.75$_{\pm 0.83}$} & \textbf{0.66$_{\pm 0.35}$} & \textbf{0.90$_{\pm 0.89}$} & \textbf{-0.07$_{\pm 0.01}$} & \textbf{0.02$_{\pm 0.03}$} & \textbf{\blue{0.93$_{\pm 0.12}$}} & \textbf{\blue{0.78$_{\pm 0.51}$}} & \textbf{0.92$_{\pm 0.11}$} & \textbf{0.77$_{\pm 0.16}$} \\
mediumdense & \gray{0.88$_{\pm 0.12}$} & \gray{2.41$_{\pm 0.71}$} & \textbf{0.75$_{\pm 0.29}$} & \textbf{0.65$_{\pm 0.57}$} & \textbf{-0.08$_{\pm 0.00}$} & \textbf{0.06$_{\pm 0.03}$} & \gray{0.82$_{\pm 0.30}$} & \gray{1.07$_{\pm 0.71}$} & \textbf{\blue{0.78$_{\pm 0.28}$}} & \textbf{\blue{0.68$_{\pm 0.31}$}} \\
hardsparse & \textbf{0.25$_{\pm 0.08}$} & \textbf{0.41$_{\pm 0.33}$} & \textbf{0.45$_{\pm 0.15}$} & \textbf{0.75$_{\pm 0.56}$} & \textbf{-0.05$_{\pm 0.00}$} & \textbf{0.07$_{\pm 0.04}$} & \textbf{\blue{0.54$_{\pm 0.09}$}} & \textbf{\blue{0.91$_{\pm 0.51}$}} & \textbf{0.49$_{\pm 0.13}$} & \textbf{0.76$_{\pm 0.44}$} \\
hardmean & \textbf{0.33$_{\pm 0.21}$} & \textbf{0.97$_{\pm 0.31}$} & \textbf{0.29$_{\pm 0.13}$} & \textbf{0.28$_{\pm 0.31}$} & \textbf{-0.05$_{\pm 0.00}$} & \textbf{0.06$_{\pm 0.03}$} & \gray{0.48$_{\pm 0.14}$} & \gray{1.16$_{\pm 1.19}$} & \textbf{\blue{0.46$_{\pm 0.15}$}} & \textbf{\blue{0.84$_{\pm 0.54}$}} \\
harddense & \textbf{0.08$_{\pm 0.15}$} & \textbf{0.21$_{\pm 0.42}$} & \textbf{0.36$_{\pm 0.18}$} & \textbf{0.66$_{\pm 0.92}$} & \textbf{-0.03$_{\pm 0.01}$} & \textbf{0.11$_{\pm 0.08}$} & \gray{0.47$_{\pm 0.19}$} & \gray{1.43$_{\pm 0.93}$} & \textbf{\blue{0.44$_{\pm 0.17}$}} & \textbf{\blue{0.63$_{\pm 0.24}$}} \\
\hline
MetaDrive Average (9 tasks) & \textbf{0.42$_{\pm 0.31}$} & \textbf{0.80$_{\pm 0.61}$} & \textbf{0.38$_{\pm 0.34}$} & \textbf{0.54$_{\pm 0.69}$} & \textbf{-0.06$_{\pm 0.02}$} & \textbf{0.07$_{\pm 0.06}$} & \gray{0.74$_{\pm 0.13}$} & \gray{1.24$_{\pm 0.89}$} & \textbf{\blue{0.69$_{\pm 0.16}$}} & \textbf{\blue{0.81$_{\pm 0.27}$}} \\
\hline
\bottomrule
\end{tabular}
}
\caption{
\normalfont\small 
\textbf{Results for normalized cost and normalized reward:} \(\uparrow\) indicates that a higher value is better, while \(\downarrow\) indicates that a lower value is preferable.\textbf{Bold} text denotes safe policies, \gray{gray} indicates unsafe policies, and \blue{\textbf{blue}} highlights the highest reward among the safe policies for each task.  
 Each result is obtained by running three random seeds across three distinct cost thresholds and evaluating over 20 episodes.}
\label{app:main_table}
\end{table*}

In Table~\ref{tab:main-table-two-col}, we present results for both the stochastic and deterministic versions of our algorithm, shown side by side with other baselines. The main table in the paper reports results only for the default configuration (either the stochastic or deterministic variant, depending on the domain), while this section provides a detailed comparison of both. The results indicate that the stochastic version performs better in \textit{SafetyGym} and \textit{Bullet-SafetyGym}—domains with inherently stochastic dynamics—whereas the deterministic version achieves superior performance in \textit{MetaDrive}, which features deterministic transitions.
\begin{table*}[tb]
    \centering
\resizebox{\textwidth}{!}{

\begin{tabular}{lllllllll|llll}
\hline
 & \multicolumn{2}{r}{CDT} & \multicolumn{2}{r}{CAPS(IQL)} & \multicolumn{2}{r}{CCAC} & \multicolumn{2}{r}{LSPC-Final (Our-Run)} & \multicolumn{2}{r}{\our\ -Deterministic} & \multicolumn{2}{r}{\our\ -Stochastic} \\
 & reward $\uparrow$ & cost $\downarrow$ & reward $\uparrow$ & cost $\downarrow$ & reward $\uparrow$ & cost $\downarrow$ & reward $\uparrow$ & cost $\downarrow$ & reward $\uparrow$ & cost $\downarrow$ & reward $\uparrow$ & cost $\downarrow$ \\
\hline
PointButton1 & \gray{0.53$_{\pm 0.01}$} & \gray{1.68$_{\pm 0.13}$} & \textbf{0.04$_{\pm 0.15}$} & \textbf{0.58$_{\pm 1.42}$} & \gray{0.68$_{\pm 0.14}$} & \gray{4.31$_{\pm 3.20}$} & \textbf{0.09$_{\pm 0.20}$} & \textbf{0.69$_{\pm 1.03}$} & \textbf{\blue{0.16$_{\pm 0.20}$}} & \textbf{\blue{0.97$_{\pm 1.33}$}} & \textbf{0.07$_{\pm 0.18}$} & \textbf{0.79$_{\pm 2.10}$} \\
PointButton2 & \gray{0.46$_{\pm 0.01}$} & \gray{1.57$_{\pm 0.10}$} & \textbf{0.12$_{\pm 0.20}$} & \textbf{0.96$_{\pm 2.35}$} & \gray{0.63$_{\pm 0.11}$} & \gray{4.82$_{\pm 3.61}$} & \gray{0.19$_{\pm 0.21}$} & \gray{1.39$_{\pm 2.53}$} & \textbf{\blue{0.14$_{\pm 0.21}$}} & \textbf{\blue{0.72$_{\pm 0.84}$}} & \textbf{0.12$_{\pm 0.22}$} & \textbf{0.81$_{\pm 1.36}$} \\
PointCircle1 & \textbf{\blue{0.59$_{\pm 0.00}$}} & \textbf{\blue{0.69$_{\pm 0.04}$}} & \textbf{0.50$_{\pm 0.12}$} & \textbf{0.68$_{\pm 1.51}$} & \gray{0.58$_{\pm 0.13}$} & \gray{2.22$_{\pm 1.05}$} & \gray{0.56$_{\pm 0.35}$} & \gray{3.49$_{\pm 3.55}$} & \textbf{0.47$_{\pm 0.10}$} & \textbf{0.78$_{\pm 0.87}$} & \textbf{0.55$_{\pm 0.17}$} & \textbf{0.86$_{\pm 0.21}$} \\
PointCircle2 & \gray{0.64$_{\pm 0.01}$} & \gray{1.05$_{\pm 0.08}$} & \textbf{0.51$_{\pm 0.21}$} & \textbf{0.76$_{\pm 0.47}$} & \gray{0.18$_{\pm 0.50}$} & \gray{1.42$_{\pm 1.09}$} & \gray{0.67$_{\pm 0.24}$} & \gray{4.18$_{\pm 4.56}$} & \textbf{0.45$_{\pm 0.18}$} & \textbf{0.76$_{\pm 0.41}$} & \textbf{\blue{0.58$_{\pm 0.10}$}} & \textbf{\blue{0.74$_{\pm 0.34}$}} \\
PointGoal1 & \gray{0.69$_{\pm 0.02}$} & \gray{1.12$_{\pm 0.07}$} & \textbf{0.46$_{\pm 0.28}$} & \textbf{0.63$_{\pm 0.81}$} & \gray{0.55$_{\pm 0.26}$} & \gray{1.88$_{\pm 1.72}$} & \textbf{0.26$_{\pm 0.27}$} & \textbf{0.23$_{\pm 0.45}$} & \textbf{\blue{0.61$_{\pm 0.21}$}} & \textbf{\blue{0.88$_{\pm 0.65}$}} & \textbf{0.56$_{\pm 0.25}$} & \textbf{0.70$_{\pm 0.65}$} \\
PointGoal2 & \gray{0.59$_{\pm 0.03}$} & \gray{1.34$_{\pm 0.05}$} & \textbf{0.34$_{\pm 0.23}$} & \textbf{0.80$_{\pm 0.74}$} & \gray{0.41$_{\pm 0.36}$} & \gray{3.60$_{\pm 3.37}$} & \textbf{0.23$_{\pm 0.24}$} & \textbf{0.50$_{\pm 0.79}$} & \textbf{\blue{0.38$_{\pm 0.19}$}} & \textbf{\blue{0.99$_{\pm 0.49}$}} & \textbf{\blue{0.38$_{\pm 0.21}$}} & \textbf{\blue{0.77$_{\pm 0.68}$}} \\
PointPush1 & \textbf{\blue{0.24$_{\pm 0.02}$}} & \textbf{\blue{0.48$_{\pm 0.05}$}} & \textbf{0.15$_{\pm 0.17}$} & \textbf{0.33$_{\pm 0.66}$} & \gray{0.02$_{\pm 0.33}$} & \gray{1.42$_{\pm 5.37}$} & \textbf{0.14$_{\pm 0.18}$} & \textbf{0.37$_{\pm 0.77}$} & \textbf{0.19$_{\pm 0.16}$} & \textbf{0.45$_{\pm 0.65}$} & \textbf{0.14$_{\pm 0.16}$} & \textbf{0.47$_{\pm 0.79}$} \\
PointPush2 & \textbf{\blue{0.21$_{\pm 0.04}$}} & \textbf{\blue{0.65$_{\pm 0.03}$}} & \textbf{0.12$_{\pm 0.19}$} & \textbf{0.60$_{\pm 1.12}$} & \textbf{-0.18$_{\pm 0.66}$} & \textbf{1.00$_{\pm 1.70}$} & \textbf{0.12$_{\pm 0.17}$} & \textbf{0.80$_{\pm 2.65}$} & \textbf{0.12$_{\pm 0.19}$} & \textbf{0.88$_{\pm 2.85}$} & \textbf{0.14$_{\pm 0.18}$} & \textbf{0.71$_{\pm 1.11}$} \\
CarButton1 & \gray{0.21$_{\pm 0.02}$} & \gray{1.60$_{\pm 0.12}$} & \textbf{-0.02$_{\pm 0.17}$} & \textbf{0.26$_{\pm 0.53}$} & \gray{0.45$_{\pm 0.26}$} & \gray{11.31$_{\pm 8.91}$} & \textbf{-0.03$_{\pm 0.23}$} & \textbf{0.59$_{\pm 1.42}$} & \textbf{\blue{0.04$_{\pm 0.17}$}} & \textbf{\blue{0.62$_{\pm 0.68}$}} & \textbf{-0.06$_{\pm 0.18}$} & \textbf{0.34$_{\pm 0.76}$} \\
CarButton2 & \gray{0.13$_{\pm 0.01}$} & \gray{1.58$_{\pm 0.02}$} & \textbf{\blue{-0.07$_{\pm 0.18}$}} & \textbf{\blue{0.59$_{\pm 2.07}$}} & \gray{0.53$_{\pm 0.26}$} & \gray{9.82$_{\pm 9.00}$} & \textbf{-0.13$_{\pm 0.28}$} & \textbf{0.93$_{\pm 2.06}$} & \textbf{-0.16$_{\pm 0.25}$} & \textbf{0.36$_{\pm 0.51}$} & \textbf{-0.20$_{\pm 0.26}$} & \textbf{0.27$_{\pm 0.62}$} \\
CarCircle1 & \gray{0.60$_{\pm 0.01}$} & \gray{1.73$_{\pm 0.04}$} & \gray{0.55$_{\pm 0.16}$} & \gray{1.54$_{\pm 1.63}$} & \gray{0.49$_{\pm 0.14}$} & \gray{5.13$_{\pm 3.53}$} & \gray{0.46$_{\pm 0.25}$} & \gray{1.86$_{\pm 3.14}$} & \textbf{0.37$_{\pm 0.16}$} & \textbf{0.54$_{\pm 0.55}$} & \textbf{\blue{0.50$_{\pm 0.12}$}} & \textbf{\blue{0.48$_{\pm 0.43}$}} \\
CarCircle2 & \gray{0.66$_{\pm 0.00}$} & \gray{2.53$_{\pm 0.03}$} & \gray{0.50$_{\pm 0.18}$} & \gray{1.56$_{\pm 2.32}$} & \gray{0.54$_{\pm 0.08}$} & \gray{2.75$_{\pm 4.36}$} & \gray{0.43$_{\pm 0.27}$} & \gray{2.40$_{\pm 4.55}$} & \textbf{0.43$_{\pm 0.18}$} & \textbf{0.54$_{\pm 0.72}$} & \textbf{\blue{0.48$_{\pm 0.15}$}} & \textbf{\blue{0.44$_{\pm 0.48}$}} \\
CarGoal1 & \gray{0.66$_{\pm 0.01}$} & \gray{1.21$_{\pm 0.17}$} & \textbf{0.32$_{\pm 0.26}$} & \textbf{0.33$_{\pm 0.53}$} & \gray{0.84$_{\pm 0.08}$} & \gray{1.81$_{\pm 1.91}$} & \textbf{0.27$_{\pm 0.26}$} & \textbf{0.24$_{\pm 0.44}$} & \textbf{\blue{0.49$_{\pm 0.22}$}} & \textbf{\blue{0.64$_{\pm 0.69}$}} & \textbf{0.43$_{\pm 0.27}$} & \textbf{0.43$_{\pm 0.58}$} \\
CarGoal2 & \gray{0.48$_{\pm 0.01}$} & \gray{1.25$_{\pm 0.14}$} & \textbf{0.16$_{\pm 0.21}$} & \textbf{0.71$_{\pm 1.91}$} & \gray{0.90$_{\pm 0.12}$} & \gray{5.53$_{\pm 5.01}$} & \textbf{0.16$_{\pm 0.21}$} & \textbf{0.41$_{\pm 0.77}$} & \textbf{\blue{0.25$_{\pm 0.23}$}} & \textbf{\blue{0.89$_{\pm 1.23}$}} & \textbf{0.19$_{\pm 0.20}$} & \textbf{0.47$_{\pm 0.92}$} \\
CarPush1 & \textbf{\blue{0.31$_{\pm 0.01}$}} & \textbf{\blue{0.40$_{\pm 0.10}$}} & \textbf{0.19$_{\pm 0.13}$} & \textbf{0.34$_{\pm 1.09}$} & \textbf{-0.12$_{\pm 0.70}$} & \textbf{0.59$_{\pm 0.97}$} & \textbf{0.18$_{\pm 0.15}$} & \textbf{0.41$_{\pm 1.29}$} & \textbf{0.20$_{\pm 0.12}$} & \textbf{0.25$_{\pm 0.50}$} & \textbf{0.19$_{\pm 0.14}$} & \textbf{0.24$_{\pm 0.74}$} \\
CarPush2 & \gray{0.19$_{\pm 0.01}$} & \gray{1.30$_{\pm 0.16}$} & \textbf{0.05$_{\pm 0.14}$} & \textbf{0.71$_{\pm 1.66}$} & \gray{0.12$_{\pm 0.31}$} & \gray{6.06$_{\pm 9.53}$} & \textbf{0.07$_{\pm 0.15}$} & \textbf{0.80$_{\pm 1.41}$} & \textbf{\blue{0.08$_{\pm 0.15}$}} & \textbf{\blue{0.53$_{\pm 0.75}$}} & \textbf{0.05$_{\pm 0.17}$} & \textbf{0.52$_{\pm 1.07}$} \\
SwimmerVelocity & \textbf{\blue{0.66$_{\pm 0.01}$}} & \textbf{\blue{0.96$_{\pm 0.08}$}} & \gray{0.44$_{\pm 0.22}$} & \gray{1.66$_{\pm 2.37}$} & \textbf{-0.08$_{\pm 0.02}$} & \textbf{0.00$_{\pm 0.01}$} & \gray{0.46$_{\pm 0.21}$} & \gray{1.28$_{\pm 2.60}$} & \textbf{0.38$_{\pm 0.20}$} & \textbf{0.93$_{\pm 0.33}$} & \textbf{0.49$_{\pm 0.18}$} & \textbf{0.99$_{\pm 0.37}$} \\
HopperVelocity & \textbf{0.63$_{\pm 0.06}$} & \textbf{0.61$_{\pm 0.08}$} & \textbf{0.44$_{\pm 0.26}$} & \textbf{0.75$_{\pm 0.83}$} & \textbf{0.46$_{\pm 0.38}$} & \textbf{0.50$_{\pm 0.55}$} & \gray{0.21$_{\pm 0.04}$} & \gray{1.25$_{\pm 1.20}$} & \textbf{0.32$_{\pm 0.22}$} & \textbf{0.51$_{\pm 0.76}$} & \textbf{\blue{0.66$_{\pm 0.23}$}} & \textbf{\blue{0.80$_{\pm 0.35}$}} \\
HalfCheetahVelocity & \textbf{\blue{1.00$_{\pm 0.01}$}} & \textbf{\blue{0.01$_{\pm 0.01}$}} & \textbf{0.94$_{\pm 0.04}$} & \textbf{0.77$_{\pm 0.13}$} & \textbf{0.94$_{\pm 0.04}$} & \textbf{0.94$_{\pm 0.07}$} & \textbf{0.97$_{\pm 0.02}$} & \textbf{1.00$_{\pm 1.04}$} & \textbf{0.99$_{\pm 0.01}$} & \textbf{0.98$_{\pm 0.03}$} & \textbf{0.93$_{\pm 0.07}$} & \textbf{0.62$_{\pm 0.27}$} \\
Walker2dVelocity & \textbf{0.78$_{\pm 0.09}$} & \textbf{0.06$_{\pm 0.34}$} & \textbf{\blue{0.80$_{\pm 0.04}$}} & \textbf{\blue{0.70$_{\pm 0.51}$}} & \gray{0.17$_{\pm 0.28}$} & \gray{1.08$_{\pm 1.90}$} & \textbf{0.79$_{\pm 0.01}$} & \textbf{0.17$_{\pm 0.24}$} & \textbf{0.77$_{\pm 0.02}$} & \textbf{0.08$_{\pm 0.23}$} & \textbf{0.79$_{\pm 0.06}$} & \textbf{0.14$_{\pm 0.35}$} \\
AntVelocity & \textbf{\blue{0.98$_{\pm 0.00}$}} & \textbf{\blue{0.39$_{\pm 0.12}$}} & \textbf{0.95$_{\pm 0.04}$} & \textbf{0.64$_{\pm 0.21}$} & \textbf{0.51$_{\pm 0.41}$} & \textbf{0.59$_{\pm 0.34}$} & \textbf{0.97$_{\pm 0.05}$} & \textbf{0.14$_{\pm 0.15}$} & \textbf{\blue{0.98$_{\pm 0.05}$}} & \textbf{\blue{0.42$_{\pm 0.24}$}} & \textbf{0.97$_{\pm 0.03}$} & \textbf{0.34$_{\pm 0.14}$} \\
\hline
SafetyGym Average & \gray{0.54$_{\pm 0.21}$} & \gray{1.06$_{\pm 0.59}$} & \textbf{0.36$_{\pm 0.34}$} & \textbf{0.76$_{\pm 1.43}$} & \gray{0.41$_{\pm 0.46}$} & \gray{3.18$_{\pm 5.24}$} & \gray{0.34$_{\pm 0.19}$} & \gray{1.10$_{\pm 1.75}$} & \textbf{0.36$_{\pm 0.16}$} & \textbf{0.65$_{\pm 0.73}$} & \textbf{\blue{0.38$_{\pm 0.17}$}} & \textbf{\blue{0.57$_{\pm 0.68}$}} \\
\hline
\hline
BallRun & \gray{0.39$_{\pm 0.09}$} & \gray{1.16$_{\pm 0.19}$} & \textbf{0.18$_{\pm 0.09}$} & \textbf{0.94$_{\pm 0.73}$} & \textbf{\blue{0.40$_{\pm 0.12}$}} & \textbf{\blue{0.84$_{\pm 0.31}$}} & \textbf{0.15$_{\pm 0.01}$} & \textbf{0.00$_{\pm 0.00}$} & \textbf{0.22$_{\pm 0.13}$} & \textbf{0.87$_{\pm 1.35}$} & \textbf{0.20$_{\pm 0.02}$} & \textbf{0.06$_{\pm 0.10}$} \\
CarRun & \textbf{\blue{0.99$_{\pm 0.01}$}} & \textbf{\blue{0.65$_{\pm 0.31}$}} & \textbf{0.97$_{\pm 0.01}$} & \textbf{0.23$_{\pm 0.25}$} & \textbf{0.97$_{\pm 0.04}$} & \textbf{0.68$_{\pm 0.22}$} & \textbf{0.98$_{\pm 0.01}$} & \textbf{0.84$_{\pm 1.04}$} & \textbf{\blue{0.99$_{\pm 0.01}$}} & \textbf{\blue{0.34$_{\pm 0.29}$}} & \textbf{0.98$_{\pm 0.01}$} & \textbf{0.98$_{\pm 0.72}$} \\
DroneRun & \textbf{\blue{0.63$_{\pm 0.04}$}} & \textbf{\blue{0.79$_{\pm 0.68}$}} & \gray{0.45$_{\pm 0.10}$} & \gray{2.91$_{\pm 4.46}$} & \gray{0.46$_{\pm 0.18}$} & \gray{4.36$_{\pm 4.32}$} & \textbf{0.59$_{\pm 0.06}$} & \textbf{0.93$_{\pm 1.51}$} & \textbf{0.55$_{\pm 0.04}$} & \textbf{0.86$_{\pm 1.86}$} & \textbf{0.58$_{\pm 0.03}$} & \textbf{0.54$_{\pm 0.91}$} \\
AntRun & \textbf{\blue{0.72$_{\pm 0.04}$}} & \textbf{\blue{0.91$_{\pm 0.42}$}} & \textbf{0.61$_{\pm 0.13}$} & \textbf{0.78$_{\pm 0.59}$} & \textbf{0.12$_{\pm 0.09}$} & \textbf{0.04$_{\pm 0.08}$} & \gray{0.66$_{\pm 0.13}$} & \gray{1.43$_{\pm 1.25}$} & \textbf{0.57$_{\pm 0.14}$} & \textbf{0.61$_{\pm 0.34}$} & \textbf{0.68$_{\pm 0.11}$} & \textbf{0.95$_{\pm 0.29}$} \\
BallCircle & \gray{0.77$_{\pm 0.06}$} & \gray{1.07$_{\pm 0.27}$} & \textbf{0.69$_{\pm 0.10}$} & \textbf{0.59$_{\pm 0.23}$} & \textbf{\blue{0.78$_{\pm 0.05}$}} & \textbf{\blue{0.46$_{\pm 0.31}$}} & \gray{0.62$_{\pm 0.28}$} & \gray{1.17$_{\pm 1.95}$} & \textbf{0.59$_{\pm 0.12}$} & \textbf{0.86$_{\pm 0.15}$} & \textbf{0.69$_{\pm 0.11}$} & \textbf{0.97$_{\pm 0.27}$} \\
CarCircle & \textbf{\blue{0.75$_{\pm 0.06}$}} & \textbf{\blue{0.95$_{\pm 0.61}$}} & \textbf{0.70$_{\pm 0.10}$} & \textbf{0.66$_{\pm 0.27}$} & \textbf{0.72$_{\pm 0.04}$} & \textbf{0.77$_{\pm 0.44}$} & \gray{0.78$_{\pm 0.12}$} & \gray{1.88$_{\pm 3.23}$} & \textbf{0.42$_{\pm 0.19}$} & \textbf{0.88$_{\pm 0.33}$} & \textbf{0.53$_{\pm 0.17}$} & \textbf{0.51$_{\pm 0.51}$} \\
DroneCircle & \textbf{\blue{0.63$_{\pm 0.07}$}} & \textbf{\blue{0.98$_{\pm 0.27}$}} & \textbf{0.55$_{\pm 0.06}$} & \textbf{0.65$_{\pm 0.29}$} & \textbf{0.29$_{\pm 0.27}$} & \textbf{0.63$_{\pm 0.70}$} & \gray{0.60$_{\pm 0.02}$} & \gray{1.31$_{\pm 0.86}$} & \textbf{0.41$_{\pm 0.10}$} & \textbf{0.75$_{\pm 0.31}$} & \textbf{0.42$_{\pm 0.12}$} & \textbf{0.52$_{\pm 0.32}$} \\
AntCircle & \gray{0.54$_{\pm 0.20}$} & \gray{1.78$_{\pm 4.33}$} & \textbf{0.37$_{\pm 0.18}$} & \textbf{0.15$_{\pm 0.25}$} & \textbf{\blue{0.61$_{\pm 0.14}$}} & \textbf{\blue{0.75$_{\pm 0.90}$}} & \textbf{0.44$_{\pm 0.14}$} & \textbf{0.53$_{\pm 0.93}$} & \textbf{0.40$_{\pm 0.16}$} & \textbf{0.59$_{\pm 1.33}$} & \textbf{0.56$_{\pm 0.17}$} & \textbf{0.79$_{\pm 0.60}$} \\
\hline
BulletGym Average & \gray{0.68$_{\pm 0.19}$} & \gray{1.04$_{\pm 1.65}$} & \textbf{0.57$_{\pm 0.25}$} & \textbf{0.86$_{\pm 1.82}$} & \gray{0.54$_{\pm 0.30}$} & \gray{1.07$_{\pm 2.04}$} & \gray{0.60$_{\pm 0.10}$} & \gray{1.01$_{\pm 1.35}$} & \textbf{0.52$_{\pm 0.11}$} & \textbf{0.72$_{\pm 0.75}$} & \textbf{\blue{0.58$_{\pm 0.09}$}} & \textbf{\blue{0.67$_{\pm 0.46}$}} \\
\hline
\hline
easysparse & \textbf{0.17$_{\pm 0.14}$} & \textbf{0.23$_{\pm 0.32}$} & \textbf{0.12$_{\pm 0.20}$} & \textbf{0.37$_{\pm 0.43}$} & \textbf{-0.06$_{\pm 0.00}$} & \textbf{0.10$_{\pm 0.05}$} & \gray{0.79$_{\pm 0.14}$} & \gray{1.34$_{\pm 1.43}$} & \textbf{\blue{0.70$_{\pm 0.19}$}} & \textbf{\blue{0.94$_{\pm 0.17}$}} & \textbf{0.38$_{\pm 0.13}$} & \textbf{0.41$_{\pm 0.65}$} \\
easymean & \textbf{0.45$_{\pm 0.11}$} & \textbf{0.54$_{\pm 0.55}$} & \textbf{0.02$_{\pm 0.07}$} & \textbf{0.21$_{\pm 0.23}$} & \textbf{-0.06$_{\pm 0.01}$} & \textbf{0.07$_{\pm 0.08}$} & \gray{0.82$_{\pm 0.06}$} & \gray{1.33$_{\pm 0.86}$} & \textbf{\blue{0.73$_{\pm 0.17}$}} & \textbf{\blue{0.94$_{\pm 0.13}$}} & \textbf{0.39$_{\pm 0.26}$} & \textbf{0.95$_{\pm 1.91}$} \\
easydense & \textbf{0.32$_{\pm 0.18}$} & \textbf{0.62$_{\pm 0.43}$} & \textbf{0.11$_{\pm 0.15}$} & \textbf{0.16$_{\pm 0.17}$} & \textbf{-0.06$_{\pm 0.00}$} & \textbf{0.07$_{\pm 0.04}$} & \gray{0.84$_{\pm 0.10}$} & \gray{1.88$_{\pm 1.20}$} & \textbf{\blue{0.70$_{\pm 0.17}$}} & \textbf{\blue{0.90$_{\pm 0.16}$}} & \textbf{0.34$_{\pm 0.21}$} & \textbf{0.51$_{\pm 1.24}$} \\
mediumsparse & \gray{0.87$_{\pm 0.11}$} & \gray{1.10$_{\pm 0.26}$} & \textbf{0.59$_{\pm 0.36}$} & \textbf{0.74$_{\pm 0.97}$} & \textbf{-0.08$_{\pm 0.00}$} & \textbf{0.07$_{\pm 0.04}$} & \gray{0.97$_{\pm 0.02}$} & \gray{1.24$_{\pm 0.68}$} & \textbf{\blue{0.94$_{\pm 0.07}$}} & \textbf{\blue{0.82$_{\pm 0.32}$}} & \textbf{0.65$_{\pm 0.15}$} & \textbf{0.14$_{\pm 0.16}$} \\
mediummean & \textbf{0.45$_{\pm 0.39}$} & \textbf{0.75$_{\pm 0.83}$} & \textbf{0.66$_{\pm 0.35}$} & \textbf{0.90$_{\pm 0.89}$} & \textbf{-0.07$_{\pm 0.01}$} & \textbf{0.02$_{\pm 0.03}$} & \textbf{\blue{0.93$_{\pm 0.12}$}} & \textbf{\blue{0.78$_{\pm 0.51}$}} & \textbf{0.92$_{\pm 0.11}$} & \textbf{0.77$_{\pm 0.16}$} & \textbf{0.67$_{\pm 0.19}$} & \textbf{0.32$_{\pm 0.37}$} \\
mediumdense & \gray{0.88$_{\pm 0.12}$} & \gray{2.41$_{\pm 0.71}$} & \textbf{0.75$_{\pm 0.29}$} & \textbf{0.65$_{\pm 0.57}$} & \textbf{-0.08$_{\pm 0.00}$} & \textbf{0.06$_{\pm 0.03}$} & \gray{0.82$_{\pm 0.30}$} & \gray{1.07$_{\pm 0.71}$} & \textbf{\blue{0.78$_{\pm 0.28}$}} & \textbf{\blue{0.68$_{\pm 0.31}$}} & \textbf{\blue{0.78$_{\pm 0.19}$}} & \textbf{\blue{0.36$_{\pm 0.28}$}} \\
hardsparse & \textbf{0.25$_{\pm 0.08}$} & \textbf{0.41$_{\pm 0.33}$} & \textbf{0.45$_{\pm 0.15}$} & \textbf{0.75$_{\pm 0.56}$} & \textbf{-0.05$_{\pm 0.00}$} & \textbf{0.07$_{\pm 0.04}$} & \textbf{\blue{0.54$_{\pm 0.09}$}} & \textbf{\blue{0.91$_{\pm 0.51}$}} & \textbf{0.49$_{\pm 0.13}$} & \textbf{0.76$_{\pm 0.44}$} & \textbf{0.40$_{\pm 0.17}$} & \textbf{0.83$_{\pm 0.89}$} \\
hardmean & \textbf{0.33$_{\pm 0.21}$} & \textbf{0.97$_{\pm 0.31}$} & \textbf{0.29$_{\pm 0.13}$} & \textbf{0.28$_{\pm 0.31}$} & \textbf{-0.05$_{\pm 0.00}$} & \textbf{0.06$_{\pm 0.03}$} & \gray{0.48$_{\pm 0.14}$} & \gray{1.16$_{\pm 1.19}$} & \textbf{\blue{0.46$_{\pm 0.15}$}} & \textbf{\blue{0.84$_{\pm 0.54}$}} & \textbf{0.38$_{\pm 0.16}$} & \textbf{0.81$_{\pm 0.94}$} \\
harddense & \textbf{0.08$_{\pm 0.15}$} & \textbf{0.21$_{\pm 0.42}$} & \textbf{0.36$_{\pm 0.18}$} & \textbf{0.66$_{\pm 0.92}$} & \textbf{-0.03$_{\pm 0.01}$} & \textbf{0.11$_{\pm 0.08}$} & \gray{0.47$_{\pm 0.19}$} & \gray{1.43$_{\pm 0.93}$} & \textbf{\blue{0.44$_{\pm 0.17}$}} & \textbf{\blue{0.63$_{\pm 0.24}$}} & \textbf{0.29$_{\pm 0.16}$} & \textbf{0.38$_{\pm 0.69}$} \\
\hline
MetaDrive Average & \textbf{0.42$_{\pm 0.31}$} & \textbf{0.80$_{\pm 0.61}$} & \textbf{0.38$_{\pm 0.34}$} & \textbf{0.54$_{\pm 0.69}$} & \textbf{-0.06$_{\pm 0.02}$} & \textbf{0.07$_{\pm 0.06}$} & \gray{0.74$_{\pm 0.13}$} & \gray{1.24$_{\pm 0.89}$} & \textbf{\blue{0.69$_{\pm 0.16}$}} & \textbf{\blue{0.81$_{\pm 0.27}$}} & \textbf{0.48$_{\pm 0.18}$} & \textbf{0.52$_{\pm 0.79}$} \\
\hline
\hline
\end{tabular}

}
    \caption{
\normalfont
\textbf{Results for normalized cost and normalized reward:} \(\uparrow\) indicates that a higher value is better, while \(\downarrow\) indicates that a lower value is preferable.\textbf{Bold} text denotes safe policies, \gray{gray} indicates unsafe policies, and \blue{\textbf{blue}} highlights the highest reward among the safe policies for each task.  
 Each result is obtained by running three random seeds across three distinct cost thresholds and evaluating over 20 episodes. }
    \label{tab:main-table-two-col}
\end{table*}

\subsection{Comparison with Non-Adaptive Baselines}
In the main results (Main Paper Table~\ref{tab:main-table}), we compared our method only with approaches that are capable of generalizing across different cost budgets. In Table~\ref{tab:main-table-other-baselines}, we compare our results with algorithms that require separate training for each cost budget. We use the same cost budgets as those in the DSRL benchmark, as shown in Table~\ref{tab:hyperparameters}. The results indicate that our approach outperforms baseline methods, even without retraining for each individual budget.
\begin{table*}[tb]
    \centering
    
\resizebox{\textwidth}{!}{
\begin{tabular}{lllllllll|ll}
\hline
 & \multicolumn{2}{c}{BC-Safe} & \multicolumn{2}{c}{BCQ-Lag} & \multicolumn{2}{c}{COptiDICE} & \multicolumn{2}{c}{CPQ} & \multicolumn{2}{c}{\our\ (Ours)} \\
 & reward $\uparrow$ & cost $\downarrow$ & reward $\uparrow$ & cost $\downarrow$ & reward $\uparrow$ & cost $\downarrow$ & reward $\uparrow$ & cost $\downarrow$ & reward $\uparrow$ & cost $\downarrow$ \\
\hline
PointButton1 & \textbf{0.06$_{\pm 0.04}$} & \textbf{0.52$_{\pm 0.21}$} & \gray{0.24$_{\pm 0.04}$} & \gray{1.73$_{\pm 1.11}$} & \gray{0.13$_{\pm 0.02}$} & \gray{1.35$_{\pm 0.91}$} & \gray{0.69$_{\pm 0.05}$} & \gray{3.20$_{\pm 1.57}$} & \textbf{\blue{0.07$_{\pm 0.18}$}} & \textbf{\blue{0.79$_{\pm 2.10}$}} \\
PointButton2 & \gray{0.16$_{\pm 0.04}$} & \gray{1.10$_{\pm 0.84}$} & \gray{0.40$_{\pm 0.03}$} & \gray{2.66$_{\pm 1.47}$} & \gray{0.15$_{\pm 0.03}$} & \gray{1.51$_{\pm 0.96}$} & \gray{0.58$_{\pm 0.07}$} & \gray{4.30$_{\pm 2.35}$} & \textbf{\blue{0.12$_{\pm 0.22}$}} & \textbf{\blue{0.81$_{\pm 1.36}$}} \\
PointCircle1 & \textbf{0.41$_{\pm 0.08}$} & \textbf{0.16$_{\pm 0.11}$} & \gray{0.54$_{\pm 0.17}$} & \gray{2.38$_{\pm 1.30}$} & \gray{0.86$_{\pm 0.01}$} & \gray{5.51$_{\pm 2.93}$} & \textbf{0.43$_{\pm 0.07}$} & \textbf{0.75$_{\pm 1.86}$} & \textbf{\blue{0.55$_{\pm 0.17}$}} & \textbf{\blue{0.86$_{\pm 0.21}$}} \\
PointCircle2 & \textbf{0.48$_{\pm 0.08}$} & \textbf{0.99$_{\pm 0.35}$} & \gray{0.66$_{\pm 0.13}$} & \gray{2.60$_{\pm 0.71}$} & \gray{0.85$_{\pm 0.01}$} & \gray{8.61$_{\pm 4.62}$} & \gray{0.24$_{\pm 0.40}$} & \gray{3.58$_{\pm 3.09}$} & \textbf{\blue{0.58$_{\pm 0.10}$}} & \textbf{\blue{0.74$_{\pm 0.34}$}} \\
PointGoal1 & \textbf{0.43$_{\pm 0.12}$} & \textbf{0.54$_{\pm 0.24}$} & \textbf{\blue{0.71$_{\pm 0.02}$}} & \textbf{\blue{0.98$_{\pm 0.46}$}} & \gray{0.49$_{\pm 0.05}$} & \gray{1.66$_{\pm 1.05}$} & \textbf{0.57$_{\pm 0.08}$} & \textbf{0.35$_{\pm 0.37}$} & \textbf{0.56$_{\pm 0.25}$} & \textbf{0.70$_{\pm 0.65}$} \\
PointGoal2 & \textbf{0.29$_{\pm 0.09}$} & \textbf{0.78$_{\pm 0.27}$} & \gray{0.67$_{\pm 0.06}$} & \gray{3.18$_{\pm 1.79}$} & \gray{0.38$_{\pm 0.03}$} & \gray{1.92$_{\pm 1.15}$} & \gray{0.40$_{\pm 0.15}$} & \gray{1.31$_{\pm 0.71}$} & \textbf{\blue{0.38$_{\pm 0.21}$}} & \textbf{\blue{0.77$_{\pm 0.68}$}} \\
PointPush1 & \textbf{0.13$_{\pm 0.05}$} & \textbf{0.43$_{\pm 0.29}$} & \textbf{\blue{0.33$_{\pm 0.04}$}} & \textbf{\blue{0.86$_{\pm 0.45}$}} & \textbf{0.13$_{\pm 0.02}$} & \textbf{0.83$_{\pm 0.52}$} & \textbf{0.20$_{\pm 0.08}$} & \textbf{0.83$_{\pm 0.44}$} & \textbf{0.14$_{\pm 0.16}$} & \textbf{0.47$_{\pm 0.79}$} \\
PointPush2 & \textbf{0.11$_{\pm 0.04}$} & \textbf{0.80$_{\pm 0.59}$} & \textbf{\blue{0.23$_{\pm 0.03}$}} & \textbf{\blue{0.99$_{\pm 0.57}$}} & \gray{0.02$_{\pm 0.07}$} & \gray{1.18$_{\pm 0.74}$} & \gray{0.11$_{\pm 0.14}$} & \gray{1.04$_{\pm 0.61}$} & \textbf{0.14$_{\pm 0.18}$} & \textbf{0.71$_{\pm 1.11}$} \\
CarButton1 & \textbf{\blue{0.07$_{\pm 0.03}$}} & \textbf{\blue{0.85$_{\pm 0.39}$}} & \gray{0.04$_{\pm 0.05}$} & \gray{1.63$_{\pm 0.59}$} & \gray{-0.08$_{\pm 0.09}$} & \gray{1.68$_{\pm 1.29}$} & \gray{0.42$_{\pm 0.05}$} & \gray{9.66$_{\pm 5.71}$} & \textbf{-0.06$_{\pm 0.18}$} & \textbf{0.34$_{\pm 0.76}$} \\
CarButton2 & \textbf{\blue{-0.01$_{\pm 0.02}$}} & \textbf{\blue{0.63$_{\pm 0.30}$}} & \gray{0.06$_{\pm 0.05}$} & \gray{2.13$_{\pm 1.19}$} & \gray{-0.07$_{\pm 0.06}$} & \gray{1.59$_{\pm 1.10}$} & \gray{0.37$_{\pm 0.11}$} & \gray{12.51$_{\pm 8.54}$} & \textbf{-0.20$_{\pm 0.26}$} & \textbf{0.27$_{\pm 0.62}$} \\
CarCircle1 & \gray{0.37$_{\pm 0.10}$} & \gray{1.38$_{\pm 0.44}$} & \gray{0.73$_{\pm 0.02}$} & \gray{5.25$_{\pm 2.76}$} & \gray{0.70$_{\pm 0.02}$} & \gray{5.72$_{\pm 3.04}$} & \gray{0.02$_{\pm 0.15}$} & \gray{2.29$_{\pm 2.13}$} & \textbf{\blue{0.50$_{\pm 0.12}$}} & \textbf{\blue{0.48$_{\pm 0.43}$}} \\
CarCircle2 & \gray{0.54$_{\pm 0.08}$} & \gray{3.38$_{\pm 1.30}$} & \gray{0.72$_{\pm 0.04}$} & \gray{6.58$_{\pm 3.02}$} & \gray{0.77$_{\pm 0.03}$} & \gray{7.99$_{\pm 4.23}$} & \gray{0.44$_{\pm 0.10}$} & \gray{2.69$_{\pm 2.61}$} & \textbf{\blue{0.48$_{\pm 0.15}$}} & \textbf{\blue{0.44$_{\pm 0.48}$}} \\
CarGoal1 & \textbf{0.24$_{\pm 0.08}$} & \textbf{0.28$_{\pm 0.11}$} & \textbf{\blue{0.47$_{\pm 0.05}$}} & \textbf{\blue{0.78$_{\pm 0.50}$}} & \textbf{0.35$_{\pm 0.06}$} & \textbf{0.54$_{\pm 0.33}$} & \gray{0.79$_{\pm 0.07}$} & \gray{1.42$_{\pm 0.81}$} & \textbf{0.43$_{\pm 0.27}$} & \textbf{0.43$_{\pm 0.58}$} \\
CarGoal2 & \textbf{0.14$_{\pm 0.05}$} & \textbf{0.51$_{\pm 0.26}$} & \gray{0.30$_{\pm 0.05}$} & \gray{1.44$_{\pm 0.99}$} & \textbf{\blue{0.25$_{\pm 0.04}$}} & \textbf{\blue{0.91$_{\pm 0.41}$}} & \gray{0.65$_{\pm 0.20}$} & \gray{3.75$_{\pm 2.00}$} & \textbf{0.19$_{\pm 0.20}$} & \textbf{0.47$_{\pm 0.92}$} \\
CarPush1 & \textbf{0.14$_{\pm 0.03}$} & \textbf{0.33$_{\pm 0.23}$} & \textbf{\blue{0.23$_{\pm 0.03}$}} & \textbf{\blue{0.43$_{\pm 0.19}$}} & \textbf{\blue{0.23$_{\pm 0.04}$}} & \textbf{\blue{0.50$_{\pm 0.40}$}} & \textbf{-0.03$_{\pm 0.24}$} & \textbf{0.95$_{\pm 0.53}$} & \textbf{0.19$_{\pm 0.14}$} & \textbf{0.24$_{\pm 0.74}$} \\
CarPush2 & \textbf{\blue{0.05$_{\pm 0.02}$}} & \textbf{\blue{0.45$_{\pm 0.19}$}} & \gray{0.15$_{\pm 0.02}$} & \gray{1.38$_{\pm 0.68}$} & \gray{0.09$_{\pm 0.02}$} & \gray{1.07$_{\pm 0.69}$} & \gray{0.24$_{\pm 0.06}$} & \gray{4.25$_{\pm 2.44}$} & \textbf{\blue{0.05$_{\pm 0.17}$}} & \textbf{\blue{0.52$_{\pm 1.07}$}} \\
SwimmerVelocity & \gray{0.51$_{\pm 0.20}$} & \gray{1.07$_{\pm 0.07}$} & \gray{0.48$_{\pm 0.33}$} & \gray{6.58$_{\pm 3.95}$} & \gray{0.63$_{\pm 0.06}$} & \gray{7.58$_{\pm 1.77}$} & \gray{0.13$_{\pm 0.06}$} & \gray{2.66$_{\pm 0.96}$} & \textbf{\blue{0.49$_{\pm 0.18}$}} & \textbf{\blue{0.99$_{\pm 0.37}$}} \\
HopperVelocity & \textbf{0.36$_{\pm 0.13}$} & \textbf{0.67$_{\pm 0.27}$} & \gray{0.78$_{\pm 0.09}$} & \gray{5.02$_{\pm 3.43}$} & \gray{0.13$_{\pm 0.06}$} & \gray{1.51$_{\pm 1.54}$} & \gray{0.14$_{\pm 0.09}$} & \gray{2.11$_{\pm 2.29}$} & \textbf{\blue{0.66$_{\pm 0.23}$}} & \textbf{\blue{0.80$_{\pm 0.35}$}} \\
HalfCheetahVelocity & \textbf{0.88$_{\pm 0.03}$} & \textbf{0.54$_{\pm 0.63}$} & \gray{1.05$_{\pm 0.07}$} & \gray{18.21$_{\pm 8.29}$} & \textbf{0.65$_{\pm 0.01}$} & \textbf{0.00$_{\pm 0.00}$} & \textbf{0.29$_{\pm 0.14}$} & \textbf{0.74$_{\pm 0.19}$} & \textbf{\blue{0.93$_{\pm 0.07}$}} & \textbf{\blue{0.62$_{\pm 0.27}$}} \\
Walker2dVelocity & \textbf{\blue{0.79$_{\pm 0.05}$}} & \textbf{\blue{0.04$_{\pm 0.32}$}} & \textbf{\blue{0.79$_{\pm 0.01}$}} & \textbf{\blue{0.17$_{\pm 0.06}$}} & \textbf{0.12$_{\pm 0.01}$} & \textbf{0.74$_{\pm 0.07}$} & \textbf{0.04$_{\pm 0.05}$} & \textbf{0.21$_{\pm 0.09}$} & \textbf{\blue{0.79$_{\pm 0.06}$}} & \textbf{\blue{0.14$_{\pm 0.35}$}} \\
AntVelocity & \textbf{\blue{0.98$_{\pm 0.01}$}} & \textbf{\blue{0.29$_{\pm 0.10}$}} & \gray{1.02$_{\pm 0.01}$} & \gray{4.15$_{\pm 1.63}$} & \gray{1.00$_{\pm 0.00}$} & \gray{3.28$_{\pm 2.01}$} & \textbf{-1.01$_{\pm 0.00}$} & \textbf{0.00$_{\pm 0.00}$} & \textbf{0.97$_{\pm 0.03}$} & \textbf{0.34$_{\pm 0.14}$} \\
\hline
SafetyGym Average & \textbf{0.34$_{\pm 0.31}$} & \textbf{0.75$_{\pm 0.80}$} & \gray{0.50$_{\pm 0.32}$} & \gray{3.29$_{\pm 4.97}$} & \gray{0.37$_{\pm 0.32}$} & \gray{2.65$_{\pm 3.24}$} & \gray{0.27$_{\pm 0.37}$} & \gray{2.79$_{\pm 3.86}$} & \textbf{\blue{0.38$_{\pm 0.17}$}} & \textbf{\blue{0.57$_{\pm 0.68}$}} \\
\hline
\hline
BallRun & \gray{0.27$_{\pm 0.14}$} & \gray{1.46$_{\pm 0.39}$} & \gray{0.76$_{\pm 0.01}$} & \gray{3.91$_{\pm 0.35}$} & \gray{0.59$_{\pm 0.00}$} & \gray{3.52$_{\pm 0.00}$} & \gray{0.22$_{\pm 0.00}$} & \gray{1.27$_{\pm 0.12}$} & \textbf{\blue{0.20$_{\pm 0.02}$}} & \textbf{\blue{0.06$_{\pm 0.10}$}} \\
CarRun & \textbf{0.94$_{\pm 0.00}$} & \textbf{0.22$_{\pm 0.02}$} & \textbf{0.94$_{\pm 0.01}$} & \textbf{0.15$_{\pm 0.91}$} & \textbf{0.87$_{\pm 0.00}$} & \textbf{0.00$_{\pm 0.00}$} & \gray{0.95$_{\pm 0.01}$} & \gray{1.79$_{\pm 0.18}$} & \textbf{\blue{0.98$_{\pm 0.01}$}} & \textbf{\blue{0.98$_{\pm 0.72}$}} \\
DroneRun & \textbf{0.28$_{\pm 0.25}$} & \textbf{0.74$_{\pm 0.97}$} & \gray{0.72$_{\pm 0.12}$} & \gray{5.54$_{\pm 0.81}$} & \gray{0.67$_{\pm 0.02}$} & \gray{4.15$_{\pm 0.10}$} & \gray{0.33$_{\pm 0.10}$} & \gray{3.52$_{\pm 0.58}$} & \textbf{\blue{0.58$_{\pm 0.03}$}} & \textbf{\blue{0.54$_{\pm 0.91}$}} \\
AntRun & \gray{0.65$_{\pm 0.15}$} & \gray{1.09$_{\pm 0.84}$} & \gray{0.76$_{\pm 0.07}$} & \gray{5.11$_{\pm 2.39}$} & \textbf{0.61$_{\pm 0.01}$} & \textbf{0.94$_{\pm 0.69}$} & \textbf{0.03$_{\pm 0.02}$} & \textbf{0.02$_{\pm 0.09}$} & \textbf{\blue{0.68$_{\pm 0.11}$}} & \textbf{\blue{0.95$_{\pm 0.29}$}} \\
BallCircle & \textbf{0.52$_{\pm 0.08}$} & \textbf{0.65$_{\pm 0.17}$} & \gray{0.69$_{\pm 0.11}$} & \gray{2.36$_{\pm 1.04}$} & \gray{0.70$_{\pm 0.04}$} & \gray{2.61$_{\pm 0.79}$} & \textbf{0.64$_{\pm 0.01}$} & \textbf{0.76$_{\pm 0.00}$} & \textbf{\blue{0.69$_{\pm 0.11}$}} & \textbf{\blue{0.97$_{\pm 0.27}$}} \\
CarCircle & \textbf{0.50$_{\pm 0.22}$} & \textbf{0.84$_{\pm 0.67}$} & \gray{0.63$_{\pm 0.19}$} & \gray{1.89$_{\pm 1.37}$} & \gray{0.49$_{\pm 0.05}$} & \gray{3.14$_{\pm 2.98}$} & \textbf{\blue{0.71$_{\pm 0.02}$}} & \textbf{\blue{0.33$_{\pm 0.00}$}} & \textbf{0.53$_{\pm 0.17}$} & \textbf{0.51$_{\pm 0.51}$} \\
DroneCircle & \textbf{\blue{0.56$_{\pm 0.18}$}} & \textbf{\blue{0.57$_{\pm 0.27}$}} & \gray{0.80$_{\pm 0.07}$} & \gray{3.07$_{\pm 0.89}$} & \gray{0.26$_{\pm 0.03}$} & \gray{1.02$_{\pm 0.46}$} & \gray{-0.22$_{\pm 0.05}$} & \gray{1.28$_{\pm 0.97}$} & \textbf{0.42$_{\pm 0.12}$} & \textbf{0.52$_{\pm 0.32}$} \\
AntCircle & \textbf{0.40$_{\pm 0.16}$} & \textbf{0.96$_{\pm 2.67}$} & \gray{0.58$_{\pm 0.25}$} & \gray{2.87$_{\pm 3.08}$} & \gray{0.17$_{\pm 0.10}$} & \gray{5.04$_{\pm 6.70}$} & \gray{-} & \gray{-} & \textbf{\blue{0.56$_{\pm 0.17}$}} & \textbf{\blue{0.79$_{\pm 0.60}$}} \\
\hline
BulletGym Average & \textbf{0.52$_{\pm 0.27}$} & \textbf{0.82$_{\pm 1.27}$} & \gray{0.74$_{\pm 0.25}$} & \gray{3.11$_{\pm 3.55}$} & \gray{0.55$_{\pm 0.24}$} & \gray{2.55$_{\pm 3.62}$} & \gray{0.33$_{\pm 0.29}$} & \gray{1.12$_{\pm 1.85}$} & \textbf{\blue{0.58$_{\pm 0.09}$}} & \textbf{\blue{0.67$_{\pm 0.46}$}} \\
\hline
\hline
easysparse & \textbf{0.11$_{\pm 0.08}$} & \textbf{0.21$_{\pm 0.02}$} & \gray{0.78$_{\pm 0.00}$} & \gray{5.01$_{\pm 0.06}$} & \gray{0.96$_{\pm 0.02}$} & \gray{5.44$_{\pm 0.27}$} & \textbf{-0.06$_{\pm 0.00}$} & \textbf{0.07$_{\pm 0.02}$} & \textbf{\blue{0.70$_{\pm 0.19}$}} & \textbf{\blue{0.94$_{\pm 0.17}$}} \\
easymean & \textbf{0.04$_{\pm 0.03}$} & \textbf{0.29$_{\pm 0.02}$} & \gray{0.71$_{\pm 0.06}$} & \gray{3.44$_{\pm 0.35}$} & \gray{0.66$_{\pm 0.16}$} & \gray{3.97$_{\pm 1.47}$} & \textbf{-0.07$_{\pm 0.00}$} & \textbf{0.07$_{\pm 0.01}$} & \textbf{\blue{0.73$_{\pm 0.17}$}} & \textbf{\blue{0.94$_{\pm 0.13}$}} \\
easydense & \textbf{0.11$_{\pm 0.07}$} & \textbf{0.14$_{\pm 0.01}$} & \textbf{0.26$_{\pm 0.00}$} & \textbf{0.47$_{\pm 0.01}$} & \gray{0.50$_{\pm 0.10}$} & \gray{2.54$_{\pm 0.53}$} & \textbf{-0.06$_{\pm 0.00}$} & \textbf{0.03$_{\pm 0.01}$} & \textbf{\blue{0.70$_{\pm 0.17}$}} & \textbf{\blue{0.90$_{\pm 0.16}$}} \\
mediumsparse & \textbf{0.33$_{\pm 0.34}$} & \textbf{0.30$_{\pm 0.32}$} & \gray{0.44$_{\pm 0.00}$} & \gray{1.16$_{\pm 0.02}$} & \gray{0.71$_{\pm 0.37}$} & \gray{2.49$_{\pm 1.90}$} & \textbf{-0.08$_{\pm 0.02}$} & \textbf{0.07$_{\pm 0.03}$} & \textbf{\blue{0.94$_{\pm 0.07}$}} & \textbf{\blue{0.82$_{\pm 0.32}$}} \\
mediummean & \textbf{0.31$_{\pm 0.06}$} & \textbf{0.21$_{\pm 0.00}$} & \gray{0.78$_{\pm 0.12}$} & \gray{1.53$_{\pm 0.21}$} & \gray{0.76$_{\pm 0.34}$} & \gray{2.05$_{\pm 0.92}$} & \textbf{-0.08$_{\pm 0.00}$} & \textbf{0.05$_{\pm 0.02}$} & \textbf{\blue{0.92$_{\pm 0.11}$}} & \textbf{\blue{0.77$_{\pm 0.16}$}} \\
mediumdense & \textbf{0.24$_{\pm 0.00}$} & \textbf{0.17$_{\pm 0.00}$} & \gray{0.58$_{\pm 0.21}$} & \gray{1.89$_{\pm 1.19}$} & \gray{0.69$_{\pm 0.13}$} & \gray{2.24$_{\pm 0.65}$} & \textbf{-0.07$_{\pm 0.00}$} & \textbf{0.07$_{\pm 0.01}$} & \textbf{\blue{0.78$_{\pm 0.28}$}} & \textbf{\blue{0.68$_{\pm 0.31}$}} \\
hardsparse & \gray{0.17$_{\pm 0.05}$} & \gray{3.25$_{\pm 0.10}$} & \gray{0.50$_{\pm 0.04}$} & \gray{1.02$_{\pm 0.05}$} & \gray{0.37$_{\pm 0.10}$} & \gray{2.05$_{\pm 0.27}$} & \textbf{-0.05$_{\pm 0.00}$} & \textbf{0.06$_{\pm 0.01}$} & \textbf{\blue{0.49$_{\pm 0.13}$}} & \textbf{\blue{0.76$_{\pm 0.44}$}} \\
hardmean & \textbf{0.13$_{\pm 0.00}$} & \textbf{0.40$_{\pm 0.00}$} & \gray{0.47$_{\pm 0.13}$} & \gray{2.56$_{\pm 0.72}$} & \gray{0.32$_{\pm 0.19}$} & \gray{2.47$_{\pm 2.00}$} & \textbf{-0.05$_{\pm 0.00}$} & \textbf{0.06$_{\pm 0.02}$} & \textbf{\blue{0.46$_{\pm 0.15}$}} & \textbf{\blue{0.84$_{\pm 0.54}$}} \\
harddense & \textbf{0.15$_{\pm 0.06}$} & \textbf{0.22$_{\pm 0.01}$} & \gray{0.35$_{\pm 0.03}$} & \gray{1.40$_{\pm 0.14}$} & \gray{0.24$_{\pm 0.21}$} & \gray{1.68$_{\pm 2.15}$} & \textbf{-0.04$_{\pm 0.01}$} & \textbf{0.08$_{\pm 0.01}$} & \textbf{\blue{0.44$_{\pm 0.17}$}} & \textbf{\blue{0.63$_{\pm 0.24}$}} \\
\hline
MetaDrive Average & \textbf{0.18$_{\pm 0.27}$} & \textbf{0.58$_{\pm 0.35}$} & \gray{0.54$_{\pm 0.35}$} & \gray{2.05$_{\pm 2.70}$} & \gray{0.58$_{\pm 0.32}$} & \gray{2.77$_{\pm 2.87}$} & \textbf{-0.06$_{\pm 0.01}$} & \textbf{0.06$_{\pm 0.04}$} & \textbf{\blue{0.69$_{\pm 0.16}$}} & \textbf{\blue{0.81$_{\pm 0.27}$}} \\
\hline
\hline
\end{tabular}

}
\caption{
\normalfont
\textbf{Comparison with Non-Adaptive Baselines}: This table presents results similar to our main results table (Table~\ref{tab:main-table}), but includes additional baseline methods that do not generalize to varying cost budgets. The results for these baselines are taken from the DSRL benchmark~\cite{liu_datasets_2024}.
}
    \label{tab:main-table-other-baselines}
\end{table*}

\begin{figure}[t]
    \centering	
     \vspace{-0.2cm}
    \includegraphics[width=0.5\linewidth]{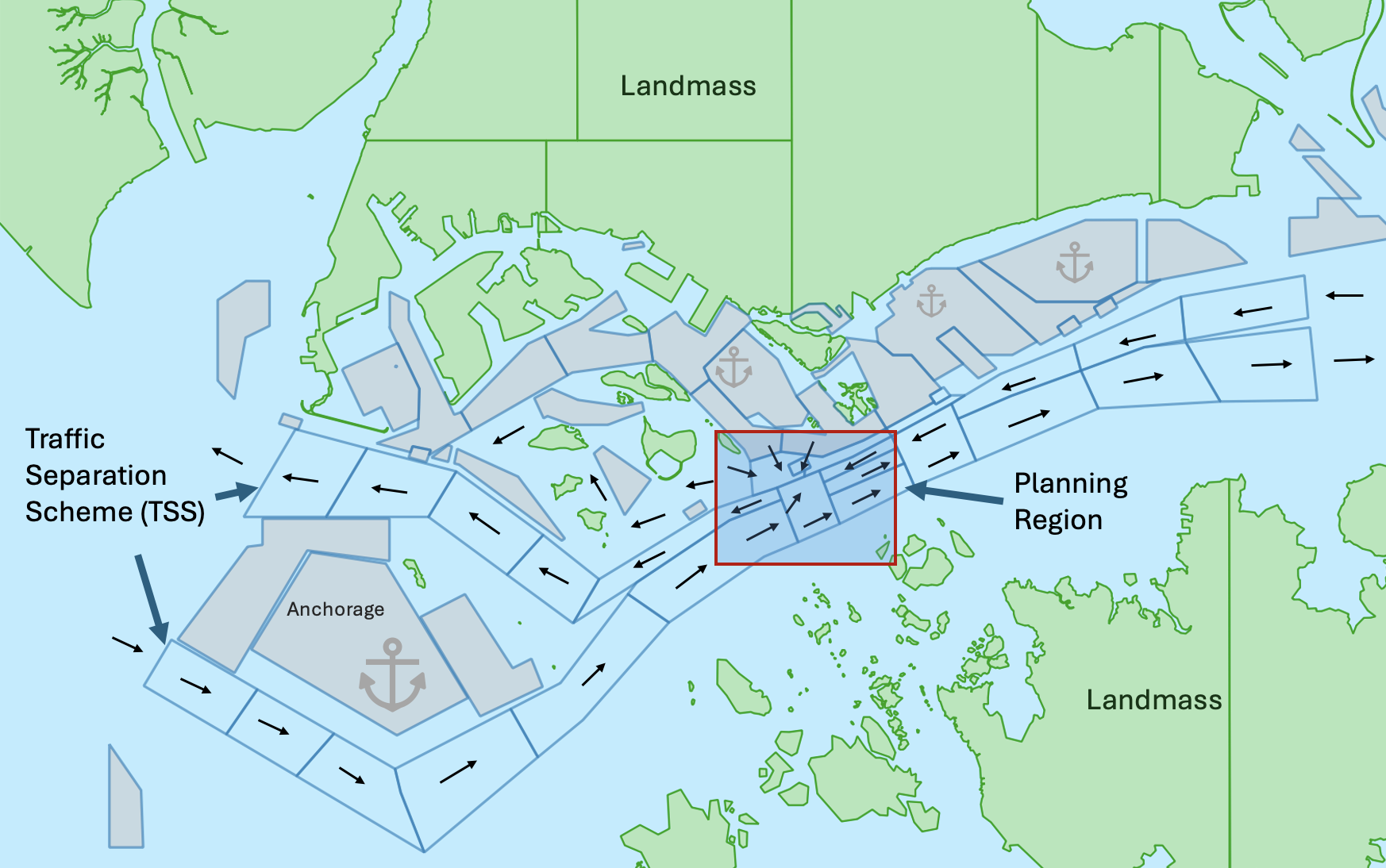}
    \caption{The red region is used as the environment area. The gray areas indicate anchorage zones, the green areas represent landmasses, and the arrows and regions with dark blue borders represent the Maritime Traffic Separation Scheme (TSS). The high density of crossing points in the red area makes it a more challenging region for navigation, providing a suitable setting for testing advanced planning techniques.} 
    \vspace{-0.2cm}
    \label{fig:maritime_tss}
\end{figure}

\subsection{Maritime Navigation task}\label{app:maritime}
The Maritime Navigation task is based entirely on the ShipNaviSim framework~\cite{pham_shipnavisim_2025}, which was constructed using two years of historical AIS data from a high-traffic hotspot in the Singapore Strait. The planning region corresponds to the area with the highest vessel density and collision risk—where many ship routes intersect (Figure~\ref{fig:maritime_tss}).
ShipNaviSim uses AIS records containing both static information (vessel width, length, type, ID) and dynamic information (latitude, longitude, speed, heading, course-over-ground). Trajectories were generated from tankers and cargo vessels, which are considered the riskiest due to their large size (200–300 m) and limited maneuverability. All trajectories were interpolated at 10-second intervals.
The resulting dataset includes approximately 125,000 trajectories (about 14 million transitions), with average lengths of 100–150 steps. ShipNaviSim uses 80\% of this data for training and 20\% for evaluation.

\noindent\textbf{Observation Space}
ShipNaviSim~\cite{pham_shipnavisim_2025} defines the observation space from the perspective of the ego vessel. At each timestep, the ego vessel has access to a historical sequence of its own trajectory together with those of the 10 nearest surrounding ships, each observed over a configurable number of past steps. For every historical point, the available features include the $x$ and $y$ coordinates, speed $v$, and heading angle $h$ for both the ego and nearby vessels. The goal location of the ego vessel is also provided. This historical context is used to capture multi-ship interactions and inform navigation decisions. For simplicity, the original implementation processes the flattened observation vector using a standard neural network architecture.

\noindent\textbf{Action Space}
In ShipNaviSim, the action space is a 3-dimensional continuous “delta” action space, represented as $\langle d_x, d_y, d_h \rangle$, corresponding to changes in $x$, $y$, and heading $h$ between consecutive steps. The vessel’s speed at the next timestep is computed from the distance traveled, using $v_{t+1} = \sqrt{d_x^2 + d_y^2} / \delta_T$ with a timestep of $\delta_T = 10$ seconds. Because actions encode differences between consecutive states, a simple inverse-kinematics model can be used to infer actions directly from state-only trajectories in the dataset.

\noindent\textbf{Evaluation Metrics}
We evaluates navigation behavior using several vessel-specific metrics that compare agent trajectories with human expert data.

\noindent\textbf{Goal-Conditioned ADE (GC-ADE)}
GC-ADE measures the average 2D displacement error between the agent’s trajectory and the historical trajectory. For trajectories $\tau_m$ and $\tau_p$ of lengths $T_m$ and $T_p$, the metric is computed over $\min(T_m, T_p)$ steps:
\[
\text{GC-ADE} = \frac{1}{\min(T_m, T_p)}
\sqrt{
\sum_{t=1}^{\min(T_m, T_p)}
(x_t^m - x_t^p)^2 + (y_t^m - y_t^p)^2
}
\]

\noindent\textbf{Success Rate}
Success rate is defined as the percentage of episodes in which the ego vessel reaches its assigned goal, with success determined by entering a 200-meter radius around the goal.

\noindent\textbf{Close-quarters Rate}
Close-quarters rate measures the proportion of timesteps in an episode where the ego vessel comes within 3 cable lengths (555 meters) of another vessel—a distance regarded by domain experts as a close-quarters situation. This metric serves as a coarse indicator of traffic density and potential navigational risk; it does not necessarily imply that a collision was imminent.

\noindent\textbf{Reward.} The reward function is defined as a sparse signal: the agent receives a reward of $100$ only when it successfully reaches the goal. No intermediate rewards are provided, which encourages the agent to focus on completing the task while avoiding unsafe situations.

\noindent\textbf{Cost.} The cost function is defined to capture close-quarters situations. Specifically, for each transition, if the next state $s'$ results in the agent coming too close to another vessel (i.e., a close-quarters scenario), the cost is set to $c(s,a)=1$; otherwise, $c(s,a)=0$. This formulation directly penalizes unsafe maneuvers and encourages the agent to maintain safe distances from other vessels throughout its trajectory.

\subsection{Ablations on Key Hyperparameters}\label{app:hyperparam}
In our approach, the IQL instantiation is primarily governed by three hyperparameters: the expectile for training the cost critic ($\tauc$), the expectile for the reward critic ($\taur$), and the policy extraction temperature associated with AWR ($\beta$). Among these, the expectile values are particularly important for performance, as also observed in the original IQL work~\cite{kostrikov_offline_2021}. A lower cost critic expectile ($\tauc \le 0.5$) encourages a more conservative cost estimate that prioritizes minimizing cost, and our experiments indicate that better results are achieved with smaller $\tauc$. Conversely, the reward critic expectile is chosen to be $\taur \ge 0.5$, aligning with the recommendations from the IQL paper~\cite{kostrikov_offline_2021}. Ablation results for these expectile values are presented in Table~\ref{tab:expectile}.

\subsection{Feasible Action Set Analysis}\label{exp:budget}
 In Figure~\ref{fig:cost_value_dist} we present the persentatage of feasible actions to evaluate how restrictive or inclusive our learned feasible action space ie.
 Using the learned cost critic, we estimate the percentage of \( (s, a) \) pairs from the dataset that are feasible based on the remaining budget \( \ct \), given by \( \frac{\sum_{(s,a) \in \mathcal{D}} \mathbf{I}[\qcs(s,a) \leq \ct]}{|\mathcal{D}|} \). If the remaining budget is $\geq 15$, almost all the samples becomes feasible for most of the environments.  For example, CarPush1 and MataDrive-mediummean, feasibility reaches 99\% by budget 5, and ~65\% of the actions become feasible in AntCircle. This indicate the feasible action set is not over conservative. 

\begin{table*}[b]
    \centering

\begin{minipage}{0.48\textwidth}  

\resizebox{\textwidth}{!}{
\begin{tabular}{lrrrrrr}
\toprule
Expectile $\tau_r$ & \multicolumn{2}{c}{0.5} & \multicolumn{2}{c}{0.6} & \multicolumn{2}{c}{0.7} \\
 & cost $\downarrow$ & reward $\uparrow$ & cost $\downarrow$ & reward $\uparrow$ & cost $\downarrow$ & reward $\uparrow$ \\
\midrule
HopperVelocityGymnasium & 0.83 & 0.79 & 0.80 & 0.66 & 0.85 & 0.60 \\
PointPush1Gymnasium & 0.40 & 0.17 & 0.46 & 0.14 & 0.34 & 0.17 \\
PointPush2Gymnasium & 0.66 & 0.12 & 0.71 & 0.14 & 0.68 & 0.12 \\
SwimmerVelocityGymnasium & 0.82 & 0.36 & 0.98 & 0.49 & 0.99 & 0.38 \\
AntRun & 0.95 & 0.68 & 1.00 & 0.69 & 0.98 & 0.68 \\
BallRun & 0.07 & 0.20 & 3.30 & 0.24 & 2.43 & 0.22 \\
CarRun & 0.98 & 0.98 & 0.30 & 0.98 & 0.26 & 0.98 \\
DroneRun & 0.54 & 0.58 & 2.75 & 0.66 & 0.66 & 0.60 \\
\bottomrule
\end{tabular}
}
    \end{minipage}
    \hfill  
    \begin{minipage}{0.48\textwidth}  
\resizebox{\textwidth}{!}{
\begin{tabular}{lrrrrrr}
\toprule
Cost Expectile $\tau_c$ & \multicolumn{2}{c}{0.2} & \multicolumn{2}{c}{0.3} & \multicolumn{2}{c}{0.4} \\
 & cost $\downarrow$ & reward $\uparrow$ & cost $\downarrow$ & reward $\uparrow$ & cost $\downarrow$ & reward $\uparrow$ \\
\midrule
HopperVelocityGymnasium & 1.19 & 0.39 & 0.80 & 0.66 & 0.21 & 0.43 \\
PointPush1Gymnasium & 0.72 & 0.21 & 0.46 & 0.14 & 0.25 & 0.11 \\
PointPush2Gymnasium & 1.07 & 0.15 & 0.71 & 0.14 & 0.58 & 0.12 \\
SwimmerVelocityGymnasium & 2.19 & 0.30 & 0.98 & 0.49 & 1.19 & 0.57 \\
AntRun & 0.95 & 0.68 & 0.47 & 0.62 & 0.33 & 0.62 \\
BallRun & 0.07 & 0.20 & 3.09 & 0.06 & 3.79 & 0.01 \\
CarRun & 0.98 & 0.98 & 0.58 & 0.98 & 0.21 & 0.90 \\
DroneRun & 0.54 & 0.58 & 0.11 & 0.16 & 0.00 & -0.01 \\
\bottomrule
\end{tabular}
}
    \end{minipage}
    \caption{Ablation on the Reward Expectile $\taur$ Cost Expectile $\tauc$}
    \label{tab:expectile}
\end{table*}


\begin{figure}[tbh]
    \centering	
    \caption{\textbf{Percentage of Feasible Action Samples:}
    }
    \includegraphics[width=\linewidth]{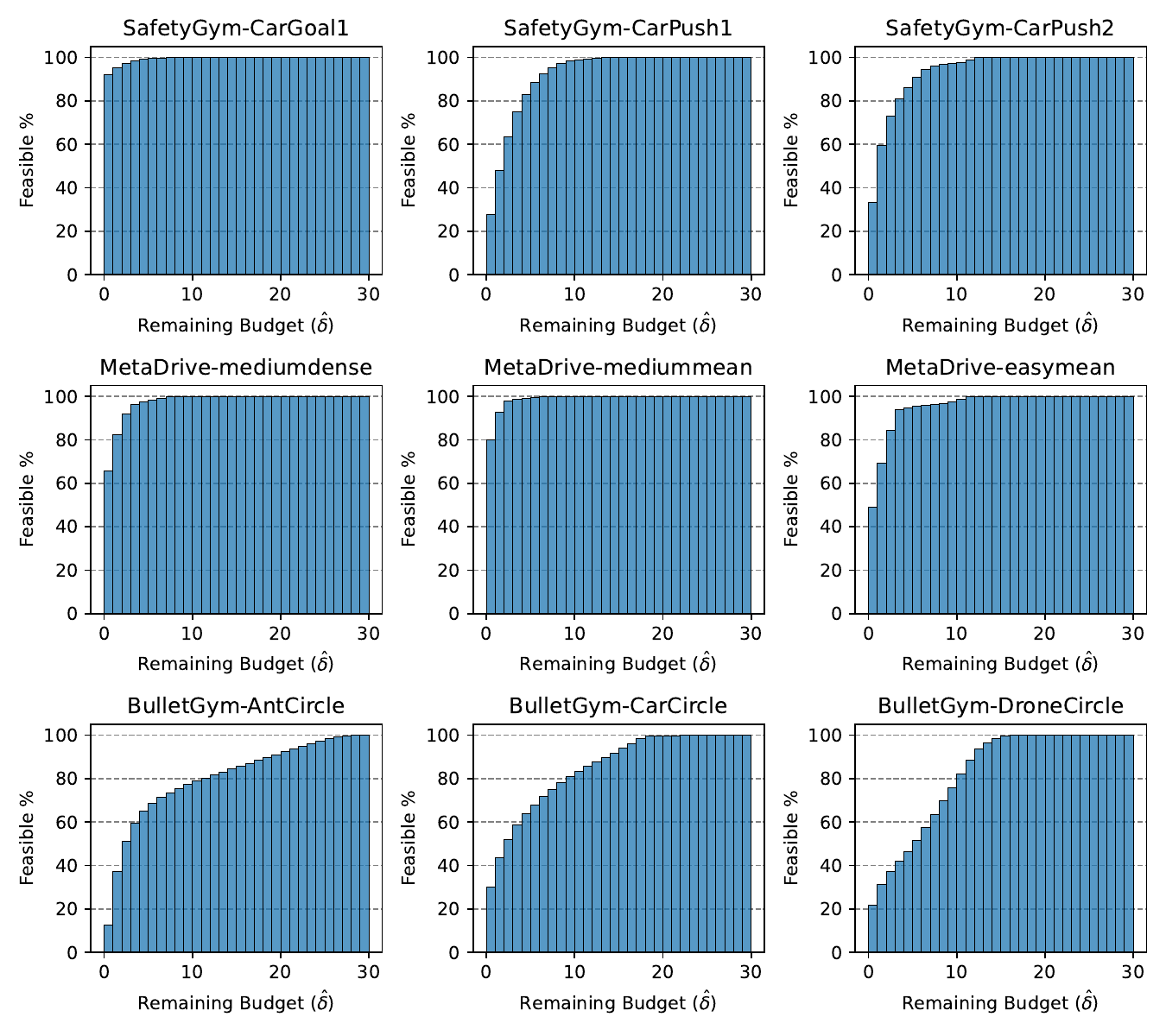}
        \vskip -5pt
    \label{fig:cost_value_dist}
\end{figure}

\subsection{Compare our results with the best policy obtained by LSPC}\label{app:lspc}
Table~\ref{tab:lspc-best-comparison} reports results using the best policy selected during training. In the original LSPC~\cite{koirala_latent_2025} work, policies were evaluated every 20k steps over a 1M-step training run, which allows selection of a high-performing policy but assumes access to a simulator—often unavailable in offline RL settings. To ensure a fair comparison, we reproduced this evaluation strategy for both our approach and LSPC, selecting and reporting the best-performing policy during training. As shown in the table, our method remains competitive under these conditions, with blue indicating the best results. All environments exhibit safe policies under this evaluation protocol.

\newcommand{\bluebold}[1]{\textbf{\textcolor{blue}{#1}}}
\begin{table}[tb]
    \centering
\begin{tabular}{lllll}
\hline
 & \multicolumn{2}{c}{LSPC-Best-Policy} & \multicolumn{2}{c}{BCAP-Best-Policy(ours)} \\
\textbf{Task} & reward $\uparrow$ & cost $\downarrow$ & reward $\uparrow$ & cost $\downarrow$ \\
\hline
PointButton1 & \textbf{0.10±0.19} & \textbf{0.72±1.17} & \bluebold{0.16±0.21} & \bluebold{0.81±1.07} \\
PointButton2 & \textbf{0.17±0.20} & \textbf{1.01±1.52} & \bluebold{0.23±0.26} & \bluebold{0.89±1.06} \\
PointCircle1 & \textbf{0.32±0.18} & \textbf{0.01±0.07} & \bluebold{0.47±0.19} & \bluebold{0.97±0.84} \\
PointCircle2 & \textbf{0.44±0.21} & \textbf{0.00±0.00} & \bluebold{0.54±0.19} & \bluebold{0.81±0.82} \\
PointGoal1 & \textbf{0.41±0.31} & \textbf{0.46±0.76} & \bluebold{0.68±0.17} & \bluebold{0.98±0.93} \\
PointGoal2 & \bluebold{0.21±0.23} & \bluebold{0.58±0.95} & \textbf{0.16±0.38} & \textbf{0.92±1.11} \\
PointPush1 & \textbf{0.14±0.17} & \textbf{0.42±0.79} & \bluebold{0.24±0.17} & \bluebold{0.45±0.59} \\
PointPush2 & \textbf{0.13±0.16} & \textbf{0.56±1.30} & \bluebold{0.20±0.17} & \bluebold{0.72±0.95} \\
CarButton1 & \textbf{-0.05±0.19} & \textbf{0.58±1.48} & \bluebold{0.07±0.15} & \bluebold{0.70±1.05} \\
CarButton2 & \bluebold{-0.13±0.24} & \bluebold{0.84±1.89} & \textbf{-0.07±0.25} & \textbf{0.72±1.23} \\
CarCircle1 & \textbf{0.34±0.14} & \textbf{0.22±0.52} & \bluebold{0.38±0.16} & \bluebold{0.53±0.55} \\
CarCircle2 & \textbf{0.36±0.21} & \textbf{0.59±1.49} & \bluebold{0.51±0.16} & \bluebold{0.98±1.02} \\
CarGoal1 & \textbf{0.29±0.27} & \textbf{0.36±0.64} & \bluebold{0.50±0.24} & \bluebold{0.73±0.97} \\
CarGoal2 & \textbf{0.15±0.21} & \textbf{0.53±1.20} & \bluebold{0.30±0.22} & \bluebold{0.89±1.00} \\
CarPush1 & \textbf{0.20±0.13} & \textbf{0.29±0.73} & \bluebold{0.26±0.14} & \bluebold{0.39±0.94} \\
CarPush2 & \textbf{0.09±0.16} & \textbf{0.66±1.09} & \bluebold{0.13±0.16} & \bluebold{0.73±1.00} \\
SwimmerVelocity & \bluebold{0.56±0.13} & \bluebold{0.26±0.63} & \textbf{0.31±0.25} & \textbf{0.86±1.88} \\
HopperVelocity & \textbf{0.24±0.02} & \textbf{0.34±0.39} & \bluebold{0.72±0.16} & \bluebold{0.61±0.45} \\
HalfCheetahVelocity & \bluebold{0.99±0.02} & \bluebold{0.59±0.73} & \textbf{0.96±0.02} & \textbf{0.87±0.85} \\
Walker2dVelocity & \textbf{0.80±0.05} & \textbf{0.29±0.45} & \bluebold{0.80±0.01} & \bluebold{0.06±0.16} \\
AntVelocity & \textbf{0.98±0.05} & \textbf{0.29±0.23} & \bluebold{0.99±0.01} & \bluebold{0.68±0.40} \\
\hline
BallRun & \bluebold{0.21±0.03} & \bluebold{0.00±0.00} & \textbf{0.19±0.23} & \textbf{0.32±0.67} \\
CarRun & \textbf{0.98±0.00} & \textbf{0.44±0.46} & \bluebold{0.99±0.00} & \bluebold{0.72±0.43} \\
DroneRun & \bluebold{0.59±0.03} & \bluebold{0.40±1.16} & \textbf{0.58±0.01} & \textbf{0.31±0.60} \\
AntRun & \textbf{0.43±0.13} & \textbf{0.48±0.49} & \bluebold{0.72±0.01} & \bluebold{0.59±0.42} \\
BallCircle & \textbf{0.49±0.16} & \textbf{0.04±0.09} & \bluebold{0.76±0.05} & \bluebold{0.93±0.43} \\
CarCircle & \bluebold{0.73±0.03} & \bluebold{0.13±0.32} & \textbf{0.72±0.05} & \textbf{0.68±0.62} \\
DroneCircle & \textbf{0.54±0.04} & \textbf{0.72±0.61} & \bluebold{0.54±0.04} & \bluebold{0.69±0.51} \\
AntCircle & \textbf{0.44±0.14} & \textbf{0.45±0.82} & \bluebold{0.60±0.12} & \bluebold{0.77±0.77} \\
\hline
easysparse & \bluebold{0.72±0.00} & \bluebold{0.54±0.29} & \textbf{0.70±0.19} & \textbf{0.94±0.17} \\
easymean & \textbf{0.72±0.01} & \textbf{0.53±0.29} & \bluebold{0.73±0.17} & \bluebold{0.94±0.13} \\
easydense & \bluebold{0.71±0.02} & \bluebold{0.53±0.29} & \textbf{0.70±0.17} & \textbf{0.90±0.16} \\
mediumsparse & \bluebold{0.95±0.07} & \bluebold{0.10±0.07} & \textbf{0.94±0.07} & \textbf{0.82±0.32} \\
mediummean & \textbf{0.91±0.11} & \textbf{0.44±0.29} & \bluebold{0.92±0.11} & \bluebold{0.77±0.16} \\
mediumdense & \bluebold{0.90±0.12} & \bluebold{0.37±0.35} & \textbf{0.78±0.28} & \textbf{0.68±0.31} \\
hardsparse & \bluebold{0.53±0.10} & \bluebold{0.46±0.29} & \textbf{0.49±0.13} & \textbf{0.76±0.44} \\
hardmean & \bluebold{0.52±0.09} & \bluebold{0.30±0.19} & \textbf{0.46±0.15} & \textbf{0.84±0.54} \\
harddense & \bluebold{0.52±0.10} & \bluebold{0.37±0.23} & \textbf{0.44±0.17} & \textbf{0.63±0.24} \\
\hline
\hline
\end{tabular}
    \caption{Performance of the best policy evaluated during training.}
    \label{tab:lspc-best-comparison}
\end{table}

\subsection{Grid World Environment}\label{app:grid-world}
Figure~\ref{fig:grid-world} shows the grid-world environment used for the results in Figure~\ref{fig:synthetic}. The action space consists of moving left, right, up, or down. The transition probability of the intended movement $p$ is the probability of moving in the chosen direction, with the remaining probability distributed among the other three actions. We vary $p$ and the budget to evaluate our method.

\begin{figure}[ht]
    \centering
    \includegraphics[width=0.3\linewidth]{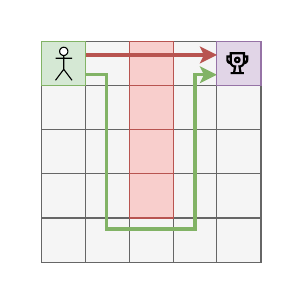}
    \caption{\textbf{Grid-World Environment:} The agent starts at the green cell and receives a reward of $-1$ per step. Reaching the target green cell terminates the episode. We vary the transition probability of the intended movement to make navigation more difficult and stochastic, resulting in a non-zero expected cost.}
    \label{fig:grid-world}
\end{figure}

\subsection{Approximated Persistent Safety}

In this section, we examine how sensitive our approach is to the quality of the learned persistent safety set. We approximate this set by learning the cost critic (cost-value function). In this experiment, we early-stop cost-critic training at 20\%, 50\%, 80\%, and 100\% of the training steps, then evaluate the results shown in Figure~\ref{tab:partial-cost-critic}. The results show that reducing the cost-critic quality leads to slightly higher constraint violations, but the effect is minor.

\begin{table}[tb]
\centering
\begin{tabular}{
>{\columncolor[HTML]{FFFFFF}}l 
>{\columncolor[HTML]{FFFFFF}}l 
>{\columncolor[HTML]{FFFFFF}}r 
>{\columncolor[HTML]{FFFFFF}}r 
>{\columncolor[HTML]{FFFFFF}}r 
>{\columncolor[HTML]{FFFFFF}}r }
\hline
 &  & \multicolumn{4}{l}{\cellcolor[HTML]{FFFFFF}\textit{\% of cost critic learning}} \\ \hline
\textit{task} & \textit{} & {\color[HTML]{000000} 20\%} & {\color[HTML]{000000} 50\%} & {\color[HTML]{000000} 80\%} & {\color[HTML]{000000} 100\%} \\
CarCircle1 & reward & 0.68 & 0.55 & 0.58 & 0.50 \\
 & cost & {\color[HTML]{C53929} 2.79} & {\color[HTML]{C53929} 1.22} & {\color[HTML]{C53929} 1.72} & 0.50 \\
CarCircle2 & reward & 0.59 & 0.60 & 0.51 & 0.47 \\
 & cost & {\color[HTML]{C53929} 2.49} & {\color[HTML]{C53929} 3.36} & {\color[HTML]{C53929} 1.03} & 0.53 \\
CarGoal1 & reward & 0.45 & 0.39 & 0.36 & 0.39 \\
 & cost & 0.60 & 0.49 & 0.45 & 0.39 \\
CarGoal2 & reward & 0.21 & 0.17 & 0.16 & 0.17 \\
 & cost & 0.74 & 0.61 & 0.55 & 0.49 \\
HalfCheetahVelocity & reward & 0.97 & 0.94 & 0.92 & 0.92 \\
 & cost & 0.70 & 0.64 & 0.77 & 0.63 \\
HopperVelocity & reward & 0.47 & 0.67 & 0.47 & 0.61 \\
 & cost & {\color[HTML]{C53929} 1.09} & 0.95 & 0.86 & 0.75 \\ \hline
\end{tabular}
\caption{Ablation by partially learning cost critic}\label{tab:partial-cost-critic}
\end{table}

\section{Extended Related Work}\label{app:related_work}
Constrained Markov decision processes (CMDPs) \cite{altman_constrained_2021} are among the most widely adopted frameworks for safe RL. Numerous methods have been proposed to address the CMDP setting in offline RL. A simple approach is behavior cloning using only safe trajectories~\cite{liu_datasets_2024}; CDT~\cite{liu_constrained_2023} extends the Decision Transformer~\cite{chen_decision_2021} to handle action constraints. The DSRL benchmark~\cite{liu_datasets_2024} provides standardized datasets, benchmarks, and implementations. BCQ-Lagrangian~\cite{liu_datasets_2024} employs a Lagrangian approach to solve constrained problems; however, this method suffers from instability due to the difficulty of tuning the Lagrange multiplier, as it jointly optimizes both reward and cost objectives.

The DICE (Distribution Correction Estimation) framework has gained significant traction in offline learning due to its ability to directly estimate stationary distribution corrections, avoiding the compounding errors in value-based bootstrapping. This approach was initially developed for off-policy evaluation and offline policy optimization, with notable algorithms such as DualDICE~\cite{nachum_dualdice_2019} and OptiDICE~\cite{lee_optidice_2021}. Beyond standard offline RL, matching state-transition occupancies has proven highly effective in offline imitation learning (IL), enabling robust policy learning from demonstrations without explicit reward signals. Methods like DemoDICE~\cite{kim_demodice_2022} and SafeDICE~\cite{jang2023safedice} leverage this framework for IL, and more recently, IOSTOM~\cite{pham2025iostom} introduced transition occupancy matching specifically for imitation learning from observations. Building upon these principles for the constrained RL setting, the COptiDICE algorithm~\cite{lee_coptidice_2022} tackles the offline CMDP problem via stationary distribution correction estimation. Meanwhile, Constrained Penalized Q-learning (CPQ)~\cite{xu_constraints_2022} formulates the problem as a min-max objective, which also leads to optimization instability. Despite their theoretical motivation, these methods have not demonstrated strong performance in prior safe offline RL benchmarks~\cite{liu_datasets_2024}. Recently proposed CAPS algorithm~\cite{chemingui_constraint-adaptive_2025} learns multiple policies and switches between them based on the remaining cost budget.

Recent work~\cite{koirala_latent_2025} learns a variational autoencoder (VAE)~\cite{kingma_introduction_2019} to approximate the most conservative policy, and then trains a separate policy in the VAE’s latent space to maximize the reward. \cite{guo_constraint-conditioned_2025} improves upon this by conditioning the VAE on the remaining budget and sampling from it during training. Both approaches utilize variational autoencoders to learn generative models as a pre-training step, which incurs significant computational overhead.

Hamilton-Jacobi (HJ) reachable sets are useful for safety verification and policy supervision~\cite{ganai_hamilton-jacobi_2024}. In RL, feasible sets depend on the policy and change during training, causing instability. RCRL~\cite{yu_reachability_2022} uses a Lagrangian approach, while FISOR~\cite{zheng_safe_2024} enforces hard constraints independently of the policy. Our method conditions feasibility on discounted future costs, enabling policy-independent estimation while handling cumulative soft constraints in CMDPs.
Alternatively, action-constrained RL enforces per-step feasibility using optimization solvers~\cite{lin_escaping_2021} or generative models~\cite{brahmanage_flowpg_2023,brahmanage_leveraging_2024}.

A related approach is Saute RL~\cite{sootla_saute_2022}, which enforces hard per-step constraints to satisfy the total cost limit. Instead of estimating future costs to select feasible actions, it labels timesteps by cumulative cost and requires an online RL setting to track the remaining budget during rollouts. In contrast, our approach prunes unsafe actions offline without requiring online samples.

\section{Implementation Details}
\subsection{Implementation Details CMDP}\label{app:cmdp-extension}
Given a CMDP, we first solve the MDP to minimize the budget and obtain solutions for \(\vcs\) and \(\qcs\). These solutions are then used to construct \ourmdp. To keep the state space finite and discrete, we discretize the budget and add it as an additional dimension to the state space. The transition function is updated based on the \ourmdp\ transition function \(\bT\). The budget update functions \(f,g\) are real-valued functions, which must be discretized to be compatible with the discrete budget, in that case we can round the real budget to the nearest budget-bin. States outside the persistent safety set \(\SbP\) are masked, and the action space is restricted to the support of \(\PiP\). The resulting extended MDP can then be solved using a standard LP solver.
In the policy extraction, we can only include the actions in the persistently safe actions $\Ap(s,\ct)$. Also, we will learn the policy for states outside the persistent safety set $s\notin \SbP$ to minimize the cost in case the agent leaves the persistent state set $\SbP$, mainly because of the approximation errors caused by discretization.

\begin{algorithm}
\caption{Construction and LP Solution of the Extended MDP}
\label{alg:extended_mdp}
\begin{algorithmic}[1]
\Require CMDP \( \mc{M} := \langle S, A, \T, r, c, \gamma, \mu_0, \cta \rangle \), budget update function $\ctupdate$
\State Compute cost value functions $\vcs$ and $\qcs$ by minimizing cost in $\mc{M}$ ({Standard LP formulation})
\State Construct discrete budget space $B$ via binning over $[0, \ctmax]$
\State Modified state space of BDMDP $\bS:= S\times B$ and persistent state space $\SbP:= \{ (s,\ct) \in \bS| \vcs(s) \leq \ct \}$
\State Define the persistent action space $\Ap(s,\ct):= \{ a \in A | \qcs(s,a) \leq \ct \}  \quad \forall (s,\ct)\in\bS$
\State Define the extended transition function:  \( \bT\big((s',\ct') \mid (s,\ct), a\big) := \T(s' \mid s,a)\cdot \1_{\{\ct' = \ctupdate(s,a,s',\ct)\}} \)
\State Solve the extended MDP via LP, only considering the persistent state set $\SbP$ and persistent action set $\Ap$ to obtain $Q_R^*$
\State Extract policy: $\pi(s,\ct) \leftarrow
\begin{cases}
    \displaystyle \arg\max_{a \in \Ap(s,\ct)} Q_R^*(s,a), & \text{if } s \in \SbP \\
    \displaystyle \arg\min_{a} \qcs(s,a), & s\notin \SbP
\end{cases}$
\State \Return $\pi$
\end{algorithmic}
\end{algorithm}

\subsection{Implementation Details Offlien RL}
In the main results, we include CDT, CAPS, LSPC, and CCAC as baselines. All baselines are evaluated using the same cost thresholds and trained with the same number of gradient steps (SafetyGym and BulletGym: 100k steps; MetaDrive: 200k steps) as proposed in the DSRL benchmark and the final policy is used for evaluation. Details are specified in Table~\ref{tab:hyperparameters}. For baselines where standard deviation is not reported in the published paper (CAPS), or they are not tested on some DSRL instances (CCAC), we run them again using their author provided code and hyperparams. The CDT results are taken from the published paper as it is tested on all domains with standard deviations.

For LSPC, we use the official source code and hyper-parameters from the original authors~\cite{koirala_latent_2025}. The original work trained the model for 1M steps, evaluated policy every 20k steps, and reported the \textit{best policy'}s results after 1M training steps. However, this training method is not standard, as during policy training, access to simulator is not assumed in standard DSRL baselines. To ensure a fair comparisons across all the methods, instead we train the LSPC model with the same number of gradient steps as the other baselines and report results from the final policy in the Main Paper Table~\ref{tab:main-table}.  For completeness sake, we also compare the best policies of both LSPC and our method over 1M training in Table~\ref{tab:lspc-best-comparison}.

\paragraph{Our Algorithm Details} Our algorithm was trained for the same number of gradient steps, using the exact hyperparameters listed in Table~\ref{tab:hyperparameters}. These hyperparameters were selected through manual experimentation, guided by Bayesian hyperparameter search, over value ranges explored in the ablation study presented in the experimental section.
The complete implementation of our algorithm is available in the Supplementary Material. Hyperparameters for different domains are presented in Table~\ref{tab:hyperparameters}, and the training algorithm is detailed in Algorithm~\ref{alg:siql}. Though our algorithm consists of three isolated loops (cost critic learning, reward critic learning, and policy extraction), since each loop depends only on its own previous steps, they can be implemented in a single loop with gradient updates. We follow this approach to keep things simple since it allow us to evaluate the policy while training.

\textbf{Metrics.} We use the standard metrics proposed in the DSRL benchmark~\cite{liu_datasets_2024}, namely \emph{normalized reward} and \emph{normalized cost}. These are computed using the following formulas:
\begin{align}
    R_{\text{normalized}} &= \frac{R_{\pi} - R_{\min}}{R_{\max} - R_{\min}}, \quad
    C_{\text{normalized}} = \frac{C_{\pi} - C_{\min}}{C_{\max} - C_{\min}}
\end{align}
Here, $R_{\min}$ and $R_{\max}$ denote the minimum and maximum empirical returns, respectively, while $C_{\min}$ and $C_{\max}$ denote the minimum and maximum empirical costs of the particular task.

\textbf{Evaluation:} The algorithm for evaluating a policy using the trained budget-conditioned policy is provided in Algorithm~\ref{alg:siql-eval}. 
Since the task is in-practice finite horizon, we adjust the cost budget based on the maximum horizon as in the Algorithm~\ref{alg:siql-eval}.
It calculates the budget required based on the remaining steps and remaining budget. This idea was used in previous works~\cite{chemingui_constraint-adaptive_2025}.

\textbf{Run Time.} We benchmarked the training and evaluation time of our algorithm on a server equipped with an NVIDIA GeForce RTX 3090 GPU. The results are reported in Table~\ref{tab:run_time}. Our method completes in just a few minutes, offering a noticeable speed-up compared to several baseline methods, which may take 2–3 hours to train under identical hardware and conditions. While we acknowledge that runtime comparisons are inherently approximate, the results suggest that our approach has the potential for substantial improvements in efficiency.

\begin{table*}[ht]
  \centering
  \begin{tabular}{cccc}
    \hline
     &   BulletGym   & SafetGym  &   MetaDrive     \\
    \hline
    Average Training Time   &   7.1 minutes    &   8.5 minutes   &   18.4 minutes    \\
    Evaluation Time    &   1.1 minute    &   4.1 minutes    &   10.8 minutes    \\
    Training Steps   &   100k    &   100k    &   200k    \\
    \hline
  \end{tabular}
  \caption{Average Training and evaluation time per instance (by evaluating on 3 distinct cost thresholds, 20 episodes)}
  \label{tab:run_time}
\end{table*}

\begin{table*}[htb]
  \centering
  \begin{tabular}{lccc}
    \hline
    \textbf{Hyperparameters} &   \textbf{BulletGym}   & \textbf{SafetGym}  &   \textbf{MetaDrive}     \\
    \textbf{Common}
    \\
    Training Steps   &   100k    &   100k    &   200k    \\
    Policy network architecture   &   [512, 512] MLP    &   [512, 512] MLP    &   [512, 512] MLP  \\
   Cost thresholds for main results &  (10, 20, 40)   &   (20, 40, 80)    &   (10, 20, 40)    \\
    Seeds for main results & (0, 10, 20) & (0, 10, 20) & (0, 10, 20)\\
    \hline
    \hline
    \textbf{BCR} \\
    Algorithm Version  &  Stochastic    &   Stochastic    &   Deterministic      \\
    Batch Size  &  512    &   512    &   512      \\
    Optimizer   &   Adam    &   Adam    &   Adam    \\
    Learning Rates $\lambda_Q^C, \lambda_V^C, \lambda_Q^R, \lambda_V^R, \lambda_{\pi_C}, \lambda_{\pi_R}$   &   0.0003    &   0.0003    &   0.0003    \\
    Target Update $\alpha$  &  0.005    &   0.005    &   0.005      \\
    Discount factors for cost and reward $\gamma_{C},\gamma$  &  0.99    &   0.99    &   0.99     \\
    Expectile for cost (IQL) $\tauc$  &  0.2    &   0.3    &   0.4     \\
    Expectile for reward (IQL) $\taur$  &  0.5    &   0.6    &   0.6     \\
    Budget sample count $n$  &  1    &   1    &   1     \\
    Policy dropout  &  0.1    &   0.1    &   0.1     \\
    AWR Temperature for both cost and reward (IQL) $\beta$ &  3    &   3    &   8     \\
    \hline
    \hline
  \end{tabular}
  \caption{Default Hyperparameters of \our\ for tasks in three environments.}
  \label{tab:hyperparameters}
\end{table*}

\begin{algorithm}[tb]
\small
   \caption{\small \our : Execution (for finite horizon)}
   \label{alg:siql-eval}
    \begin{algorithmic}
   \State Input: environment \emph{Env}, cost threshold $\cta$ and maximum episode length of the environment $H$
   
   \State $c \leftarrow 0$ \Comment{Initialize total cost to zero}
   \State $t \leftarrow 0$ \Comment{Initialize total timesteps to zero}
   
   \State $s \leftarrow$ \emph{Env}.reset() ;  \ 
    \emph{done} $\leftarrow$ False
   \Repeat
   \State $\ct \leftarrow \frac{\cta - c}{(1-\gamma)}\frac{1-\gamma^{H-t}}{H-t}$ \Comment{Calculate the dynamic budget constraint to enforce based on the remaining steps and remaining budget for finite horizon tasks}
   \State $a \sim \pi((s, \ct))$
   \State $s', r(s,a), c(s,a),\ $\emph{done} $\leftarrow$ \emph{Env}.step($a$)
   \State $s \leftarrow s'$
   \Until{\emph{done}}
\end{algorithmic}
\end{algorithm}


\end{document}